\def\eqref#1{equation~\ref{#1}}
\def\1{\bm{1}}
\def\va{{\bm{a}}}
\def\vb{{\bm{b}}}
\def\vc{{\bm{c}}}
\def\vw{{\bm{w}}}
\def\vx{{\bm{x}}}
\def\vy{{\bm{y}}}
\DeclareMathAlphabet{\mathsfit}{\encodingdefault}{\sfdefault}{m}{sl}
\SetMathAlphabet{\mathsfit}{bold}{\encodingdefault}{\sfdefault}{bx}{n}
\newcommand{\E}{\mathbb{E}}
\newcommand{\R}{\mathbb{R}}
\titlespacing\section{0pt}{0pt plus 0pt minus 0pt}{0pt plus 0pt minus 0pt}
\titlespacing\subsection{0pt}{0pt plus 0pt minus 0pt}{0pt plus 0pt minus 0pt}
\titlespacing\subsubsection{0pt}{0pt plus 0pt minus 0pt}{0pt plus 0pt minus 0pt}
\newtheorem{definition}{Definition}
\newtheorem{lemma}{Lemma}
\newtheorem{remark}{Remark}
\def\*#1{\mathbf{#1}}
\newcommand{\bs}[1]{\ensuremath{\boldsymbol{#1}}}
\newtheoremstyle{slanted}
  {0.9\topsep}
  {0.9\topsep}
  {\it}
  {}
  {\bfseries}
  {.}
  {0.5em}
  {}
\theoremstyle{slanted}
\newtheorem{theorem}{Theorem}
\newcommand\blfootnote[1]{%
  \begingroup
  \renewcommand\thefootnote{}\footnote{#1}%
  \addtocounter{footnote}{-1}%
  \endgroup
}
\title{Rethinking Attention with Performers}
\author{Krzysztof Choromanski$^{*1}$, Valerii Likhosherstov$^{*2}$, David Dohan$^{*1}$, Xingyou Song$^{*1}$ \\ 
  \textbf{Andreea Gane$^{*1}$,  Tamas Sarlos$^{*1}$,  Peter Hawkins$^{*1}$, Jared Davis$^{*3}$, Afroz Mohiuddin$^{1}$} \\
  \textbf{Lukasz Kaiser$^{1}$, David Belanger$^{1}$, Lucy Colwell$^{1,2}$, Adrian Weller$^{2,4}$} \\
  $^1$Google $^2$University of Cambridge $^3$DeepMind $^4$Alan Turing Institute \\
}
\begin{document}
\maketitle
\vspace{-0.8cm}
\begin{abstract}
\vspace{-0.3cm}
We introduce \textit{Performers}, Transformer architectures which can estimate regular (softmax) full-rank-attention Transformers with provable accuracy, but using only linear (as opposed to quadratic) space and time complexity, without relying on any priors such as sparsity or low-rankness. To approximate softmax attention-kernels, Performers use a novel \textit{Fast Attention Via positive Orthogonal Random features} approach (FAVOR+), which may be of independent interest for scalable kernel methods. FAVOR+ can also be used to efficiently model kernelizable attention mechanisms beyond softmax. This representational power is crucial to accurately compare softmax with other kernels for the first time on large-scale tasks, beyond the reach of regular Transformers, and investigate optimal attention-kernels. Performers are linear architectures fully compatible with regular Transformers and with strong theoretical guarantees: unbiased or nearly-unbiased estimation of the attention matrix, uniform convergence and low  estimation variance. We tested Performers on a rich set of tasks stretching from pixel-prediction through text models to protein sequence modeling. We demonstrate competitive results with other examined efficient sparse and dense attention methods, showcasing effectiveness of the novel attention-learning paradigm leveraged by Performers. 

\end{abstract}

\blfootnote{$^\ast$Equal contribution. Correspondence to \texttt{\{kchoro,lcolwell\}@google.com}.}

\blfootnote{Code for Transformer models on protein data can be found in \textcolor{blue}{\url{github.com/google-research/google-research/tree/master/protein_lm}} and Performer code can be found in \textcolor{blue}{\url{github.com/google-research/google-research/tree/master/performer}}. Google AI Blog: \textcolor{blue}{\url{https://ai.googleblog.com/2020/10/rethinking-attention-with-performers.html}}}
\vspace{-1.5cm}
\section{Introduction and related work}
\label{sec:intro_related_work}

Transformers \citep{transformer, universal_t} are powerful neural network architectures that have become SOTA in several areas of machine learning including natural language processing (NLP) (e.g. speech recognition \citep{luo}), neural machine translation (NMT) \citep{nmt}, document generation/summarization, time series prediction, generative modeling (e.g. image generation \citep{parmar}), music generation \citep{simon}, and bioinformatics \citep{rives, progen, ingraham2019generative, elnaggar2019end, du2020energy}. 

Transformers rely on a trainable \textit{attention} mechanism that identifies complex dependencies between the elements of each input sequence. Unfortunately, the regular Transformer scales quadratically with the number of tokens $L$ in the input sequence, which is prohibitively expensive for large $L$ and precludes its usage in settings with limited computational resources even for moderate values of $L$. 
Several solutions have been proposed to address this issue \citep{longformer, conformer, imputer, sparsetr, image_transformer}. Most approaches restrict the attention mechanism to attend to local neighborhoods \citep{parmar} or incorporate structural priors on attention such as sparsity \citep{sparsetr}, pooling-based compression \citep{compr} clustering/binning/convolution techniques (e.g. \citep{routing_t} which applies $k$-means clustering to learn dynamic sparse attention regions, or \citep{reformer}, where locality sensitive hashing is used to group together tokens of similar embeddings), sliding windows \citep{longformer}, or truncated targeting \citep{chelba}. 
There is also a long line of research on using dense attention matrices, but defined by low-rank kernels
substituting softmax \citep{trans-rnns, shen}. Those methods critically rely on kernels admitting explicit representations as dot-products of finite positive-feature vectors.

The approaches above do not aim to approximate regular attention, but rather propose simpler and more tractable attention mechanisms, often by incorporating additional constraints (e.g. identical query and key sets as in \citep{reformer}), or by trading regular with sparse attention using more layers \citep{sparsetr}. Unfortunately, there is a lack of rigorous guarantees for the representation power produced by such methods, and sometimes the validity of sparsity patterns can only be verified empirically through trial and error by constructing special GPU operations (e.g. either writing C++ CUDA kernels \citep{sparsetr} or using TVMs \citep{longformer}). Other techniques 
which aim to reduce 
Transformers' space complexity include reversible residual layers allowing one-time activation storage in training \citep{reformer} and shared attention weights \citep{shared_weights}. These constraints may impede application 
to long-sequence problems, where approximations of the attention mechanism are not sufficient. Approximations based on truncated back-propagation \citep{transformerxl} are also unable to capture long-distance correlations since the gradients are only propagated inside a localized window.
Other methods propose biased estimation of 
regular attention but only in the non-causal setting and with large mean squared error \citep{linformer}.


In response, we introduce the first Transformer architectures, \textit{Performers}, capable of \textbf{provably} accurate and practical estimation of regular (softmax) full-rank attention, but of only linear space and time complexity and \textbf{not relying on any priors} such as sparsity or low-rankness. Performers use the \textit{Fast Attention Via positive Orthogonal Random features} (FAVOR+) mechanism, leveraging new methods for approximating softmax and Gaussian kernels, which we propose. We believe these methods are of independent interest, contributing to the theory of scalable kernel methods. 
Consequently, Performers are the first linear architectures \textbf{fully compatible} (via small amounts of fine-tuning) with regular Transformers, providing strong theoretical guarantees: unbiased or nearly-unbiased estimation of the attention matrix, uniform convergence and lower variance of the approximation.

FAVOR+ can be also applied to efficiently model other kernelizable attention mechanisms beyond softmax. This representational power is crucial to accurately compare softmax with other kernels for the first time on large-scale tasks, that are beyond the reach of regular Transformers, and find for them optimal attention-kernels.
FAVOR+ can also be applied beyond the Transformer scope as a more scalable replacement for regular attention, which itself has a wide variety of uses in computer vision \citep{attention_cvpr}, reinforcement learning \citep{relational}, training with softmax cross entropy loss, and even combinatorial optimization \citep{pointer}.

We test Performers on a rich set of tasks ranging from pixel-prediction through text models to protein sequence modeling. We demonstrate competitive results with other examined efficient sparse and dense attention methods, showcasing the effectiveness of the novel attention-learning paradigm leveraged by Performers. We emphasize that in principle, FAVOR+ can also be combined with other techniques, such as reversible layers \citep{reformer} or cluster-based attention \citep{routing_t}.

\section{FAVOR+ Mechanism \& Positive Orthogonal Random Features}
\label{sec:algorithm}
Below we describe in detail the FAVOR+ mechanism - the backbone of the $\mathrm{Performer's}$ architecture. We introduce a new method for estimating softmax (and Gaussian) kernels with $\textbf{positive}$ orthogonal random features which FAVOR+ leverages for the robust and unbiased estimation of regular (softmax) attention and show how FAVOR+ can be applied for other attention-kernels.

\subsection{Preliminaries - regular attention mechanism}

Let $L$ be the size of an input sequence of tokens. Then regular dot-product attention \citep{transformer} is a mapping which accepts matrices $\mathbf{Q}, \mathbf{K}, \mathbf{V} \in \mathbb{R}^{L \times d}$ as input where $d$ is the hidden dimension (dimension of the latent representation). Matrices $\mathbf{Q}, \mathbf{K}, \mathbf{V}$ are intermediate representations of the input and their rows can be interpreted as \textit{queries}, \textit{keys} and \textit{values} of the continuous dictionary data structure respectively. \textit{Bidirectional (or non-directional \citep{bert}) dot-product attention} has the following form, where  $\mathbf{A} \in \mathbb{R}^{L \times L}$ is the so-called \textit{attention matrix}:
\begin{equation}
\label{eq:attnorm}
    \mathrm{Att}_\leftrightarrow (\mathbf{Q}, \mathbf{K}, \mathbf{V}) = \mathbf{D}^{-1} \mathbf{A} \mathbf{V}, \quad 
    \mathbf{A} = \exp ( \mathbf{Q} \mathbf{K}^\top / \sqrt{d}), \quad \mathbf{D} = \mathrm{diag} ( \mathbf{A} \mathbf{1}_L ). 
\end{equation}    
Here $\exp (\cdot)$ is applied elementwise, $\mathbf{1}_L$ is the all-ones vector of length $L$, and $\mathrm{diag} (\cdot)$ is a diagonal matrix with the input vector as the diagonal. Time and space complexity of computing (\ref{eq:attnorm}) are $O(L^2 d)$ and $O(L^{2}+Ld)$ respectively, because $\mathbf{A}$ has to be stored explicitly. Hence, in principle, dot-product attention of type (\ref{eq:attnorm}) is incompatible with end-to-end processing of long sequences. Bidirectional attention is applied in encoder self-attention and encoder-decoder attention in Seq2Seq architectures.

Another important type of attention is \textit{unidirectional dot-product attention} which has the form:
\begin{equation}
    \mathrm{Att}_\to (\mathbf{Q}, \mathbf{K}, \mathbf{V}) = \widetilde{\mathbf{D}}^{-1} \widetilde{\mathbf{A}} \mathbf{V}, \quad 
    \widetilde{\mathbf{A}} = \mathrm{tril} (\mathbf{A}), \quad \widetilde{\mathbf{D}} = \mathrm{diag} ( \widetilde{\mathbf{A}} \mathbf{1}_L ) , \label{eq:uattnorm}
\end{equation}
where $\mathrm{tril}(\cdot)$ returns the lower-triangular part of the argument matrix including the diagonal. As discussed in \citep{transformer}, unidirectional attention is used for autoregressive generative modelling, e.g. as self-attention in generative Transformers as well as the decoder part of Seq2Seq Transformers.

We will show that attention matrix $\mathbf{A}$ can be approximated up to any precision in time $O (L d^2\log(d))$. For comparison, popular methods leveraging sparsity via Locality-Sensitive Hashing (LSH) techniques \citep{reformer} have $O(L d^2 \log L)$ time complexity. 
In the main body of the paper we will describe FAVOR+ for bidirectional attention. Completely analogous results can be obtained for the unidirectional variant via the mechanism of \textit{prefix-sums} (all details in the Appendix \ref{subsec:unidirectional}).

\subsection{Generalized Kernelizable Attention}
\label{sec:gka}

FAVOR+ works for attention blocks using matrices $\mathbf{A} \in \mathbb{R}^{L \times L}$ of the form $\mathbf{A}(i,j) = \mathrm{K}(\mathbf{q}_{i}^{\top},\mathbf{k}_{j}^{\top})$, with $\mathbf{q}_{i}/\mathbf{k}_{j}$ standing for the $i^{th}/j^{th}$ query/key row-vector in $\mathbf{Q}/\mathbf{K}$ and kernel $\mathrm{K}:\mathbb{R}^{d} \times \mathbb{R}^{d} \rightarrow \mathbb{R}_{+}$ defined for the (usually randomized) mapping: $\phi: \mathbb{R}^{d} \rightarrow \mathbb{R}_{+}^{r}$ (for some $r >0$) as:
\begin{equation}
\label{kernel-def}
\mathrm{K}(\mathbf{x}, \mathbf{y}) = \mathbb{E}[\phi(\mathbf{x})^{\top}\phi(\mathbf{y})].
\end{equation}
We call $\phi(\mathbf{u})$ a \textit{random feature map} for $\mathbf{u} \in \mathbb{R}^{d}$. 
For $\mathbf{Q}^{\prime},\mathbf{K}^{\prime} \in \mathbb{R}^{L \times r}$ with rows given as $\phi(\mathbf{q}_{i}^{\top})^{\top}$ and $\phi(\mathbf{k}_{i}^{\top})^{\top}$ respectively,
Equation \ref{kernel-def} leads directly to the efficient attention mechanism of the form:
\begin{equation}
    \widehat{\mathrm{Att}_\leftrightarrow} (\mathbf{Q}, \mathbf{K}, \mathbf{V}) = \widehat{\mathbf{D}}^{-1} (\mathbf{Q}^{\prime}((\mathbf{K}^{\prime})^{\top} \mathbf{V})), \quad 
    \quad \widehat{\mathbf{D}} = \mathrm{diag} (\mathbf{Q}^{\prime}((\mathbf{K}^{\prime})^{\top} \mathbf{1}_L) ). 
\end{equation} 
Here $\widehat{\mathrm{Att}_\leftrightarrow}$ stands for the approximate attention and brackets indicate the order of computations. It is easy to see that such a mechanism is characterized by space complexity $O(Lr + Ld + rd)$ and time complexity $O(Lrd)$ as opposed to $O(L^{2}+Ld)$ and $O(L^{2}d)$ of the regular attention (see also Fig. \ref{fig:avqk}).
\vspace{-2mm}
\begin{figure}[h]
  \centering
  \includegraphics[width=0.99\textwidth]{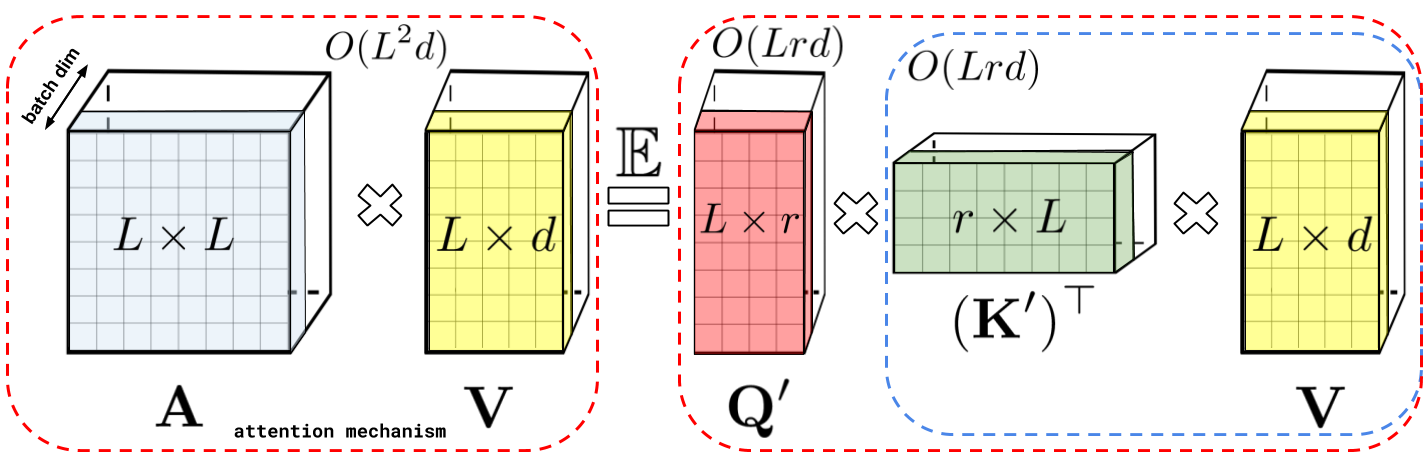}
  \caption{\small{Approximation of the regular attention mechanism $\mathbf{AV}$ (before $\mathbf{D}^{-1}$-renormalization) via (random) feature maps. Dashed-blocks indicate order of computation with corresponding time complexities attached.}}
  \label{fig:avqk}
\end{figure}

The above scheme constitutes the FA-part of the FAVOR+ mechanism. The remaining OR+ part answers the following questions: \textbf{(1)} How expressive is the attention model defined in Equation \ref{kernel-def}, and in particular, can we use it in principle to approximate regular softmax attention ? \textbf{(2)} How do we implement it robustly in practice, and in particular, can we choose $r \ll L$ for $L \gg d$ to obtain desired space and time complexity gains? We answer these questions in the next sections.

\subsection{How to and how not to approximate softmax-kernels for Attention}
\label{howto}

It turns out that by taking $\phi$ of the following form
for functions $f_{1},...,f_{l}:\mathbb{R} \rightarrow \mathbb{R}$,
function $g:\mathbb{R}^{d} \rightarrow \mathbb{R}$ and deterministic vectors $\omega_{i}$ or $\omega_{1},...,\omega_{m} \overset{\mathrm{iid}}{\sim} \mathcal{D}$ for some distribution $\mathcal{D} \in \mathcal{P}(\mathbb{R}^{d})$:
\begin{equation} \label{eq:feature}
\phi(\mathbf{x}) = \frac{h(\mathbf{x})}{\sqrt{m}}
(f_{1}(\omega_{1}^{\top}\mathbf{x}),...,f_{1}(\omega_{m}^{\top}\mathbf{x}),...,f_{l}(\omega_{1}^{\top}\mathbf{x}),...,f_{l}(\omega_{m}^{\top}\mathbf{x})),    
\end{equation}

we can model most kernels used in practice. Furthermore, in most cases $\mathcal{D}$ is isotropic (i.e. with pdf function constant on a sphere), usually Gaussian. For example, by taking $h(\mathbf{x})=1$, $l=1$ and $\mathcal{D}=\mathcal{N}(0,\mathbf{I}_{d})$ we obtain estimators of the so-called PNG-kernels \citep{unreas} (e.g. $f_{1} = \mathrm{sgn}$ corresponds to the angular kernel).
Configurations: $h(\mathbf{x})=1$, $l=2$, $f_{1}=\sin$, $f_{2}=\cos$ correspond to shift-invariant kernels, in particular $\mathcal{D}=\mathcal{N}(0, \mathbf{I}_{d})$ leads to the Gaussian kernel $\mathrm{K}_{\mathrm{gauss}}$ \citep{fourierapprox}. The \textit{softmax-kernel} which defines regular attention matrix $\mathbf{A}$ is given as:
\vspace{-0.75mm}
\begin{equation}
\mathrm{SM}(\mathbf{x}, \mathbf{y}) \overset{\mathrm{def}}{=} \exp(\mathbf{x}^{\top}\mathbf{y}).    
\end{equation}
In the above, without loss of generality, we omit $\sqrt{d}$-renormalization since we can equivalently renormalize input keys and queries. Since: $\mathrm{SM}(\mathbf{x}, \mathbf{y}) = \exp(\frac{\|\mathbf{x}\|^{2}}{2})\mathrm{K}_{\mathrm{gauss}}(\mathbf{x},\mathbf{y})\exp(\frac{\|\mathbf{y}\|^{2}}{2})$, based on what we have said, we obtain random feature map unbiased approximation of $\mathrm{SM}(\mathbf{x}, \mathbf{y})$ using trigonometric functions with: $h(\mathbf{x})=\exp(\frac{\|\mathbf{x}\|^{2}}{2})$, $l=2$, $f_{1}=\sin$, $f_{2}=\cos$. We call it $\widehat{\mathrm{SM}}_{m}^{\mathrm{trig}}(\mathbf{x}, \mathbf{y})$. 

There is however a caveat there. The attention module from (\ref{eq:attnorm}) constructs for each token, a convex combination of value-vectors with coefficients given as corresponding renormalized kernel scores. That is why kernels producing non-negative scores are used. Applying random feature maps with potentially negative dimension-values ($\sin/\cos$) leads to unstable behaviours, especially when kernel scores close to $0$ (which is the case for many 
entries of $\mathbf{A}$ corresponding to low relevance tokens) are approximated by estimators with large variance in such regions. This results in abnormal behaviours, e.g. negative-diagonal-values renormalizers $\mathbf{D}^{-1}$, and consequently either completely prevents training or leads to sub-optimal models. 
We demonstrate empirically that this is what happens for $\widehat{\mathrm{SM}}_{m}^{\mathrm{trig}}$ and provide detailed theoretical explanations showing that the variance of $\widehat{\mathrm{SM}}_{m}^{\mathrm{trig}}$ is large as approximated values tend to $0$ (see: Section \ref{sec:theory}). This is one of the main reasons why the robust random feature map mechanism for approximating regular softmax attention was never proposed.

We propose a robust mechanism in this paper. Furthermore, the variance of our new unbiased positive random feature map estimator tends to $0$ as approximated values tend to $0$ (see: Section \ref{sec:theory}).

\begin{lemma}[Positive Random Features (PRFs) for Softmax]
\label{pos_random_features_lemma}
For $\mathbf{x}, \mathbf{y} \in \mathbb{R}^{d}$, $\mathbf{z}=\mathbf{x}+\mathbf{y}$ we have:
\begin{equation}
\label{main_eq}
\mathrm{SM}(\mathbf{x},\mathbf{y}) = 
\mathbb{E}_{\omega \sim \mathcal{N}(0,\mathbf{I}_{d})}\!\Big[\mathrm{exp}\!\Big(\omega^{\top}\mathbf{x} - \frac{\|\mathbf{x}\|^{2}}{2}\Big)\mathrm{exp}\!\Big(\omega^{\top}\mathbf{y}-\frac{\|\mathbf{y}\|^{2}}{2}\Big)\Big]
=\Lambda \mathbb{E}_{\omega \sim \mathcal{N}(0,\mathbf{I}_{d})}\cosh(\omega^{\top}\mathbf{z}),
\end{equation}
where $\Lambda=\exp(-\frac{\|\mathbf{x}\|^{2}+\|\mathbf{y}\|^{2}}{2})$ and $\cosh$ is hyperbolic cosine. Consequently, softmax-kernel admits a positive random feature map unbiased approximation with  $h(\mathbf{x})=\exp(-\frac{\|\mathbf{x}\|^{2}}{2})$, $l=1$, $f_{1}=\exp$ and $\mathcal{D}=\mathcal{N}(0, \mathbf{I}_{d})$ or: $h(\mathbf{x})=\frac{1}{\sqrt{2}}\exp(-\frac{\|\mathbf{x}\|^{2}}{2})$, $l=2$, $f_{1}(u)=\exp(u)$, $f_{2}(u)=\mathrm{exp}(-u)$ and the same $\mathcal{D}$ (the latter for further variance reduction).
We call related estimators: $\widehat{\mathrm{SM}}^{+}_{m}$ and $\widehat{\mathrm{SM}}^{\mathrm{hyp+}}_{m}$.
\end{lemma}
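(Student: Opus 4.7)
The plan is to build the whole lemma on a single fact: the Gaussian moment generating function. For $\omega \sim \mathcal{N}(0,\mathbf{I}_d)$ and any $\mathbf{u}\in\mathbb{R}^d$, one has $\mathbb{E}[\exp(\omega^\top \mathbf{u})] = \exp(\|\mathbf{u}\|^2/2)$, which follows immediately by completing the square inside the Gaussian integral (or by recognizing it as the characteristic function evaluated at an imaginary argument). Everything else is algebra.

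First I would prove the middle identity in (\ref{main_eq}). The trick is that the two $\exp$ factors combine into a single exponential of a linear form in $\omega$ times a deterministic prefactor:
\begin{equation*}
\exp\!\Big(\omega^\top \mathbf{x}-\tfrac{\|\mathbf{x}\|^2}{2}\Big)\exp\!\Big(\omega^\top \mathbf{y}-\tfrac{\|\mathbf{y}\|^2}{2}\Big)
= \Lambda\,\exp(\omega^\top \mathbf{z}),
\end{equation*}
with $\mathbf{z}=\mathbf{x}+\mathbf{y}$ and $\Lambda=\exp(-(\|\mathbf{x}\|^2+\|\mathbf{y}\|^2)/2)$. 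Taking expectations and applying the MGF yields $\Lambda \exp(\|\mathbf{z}\|^2/2)$. Expanding $\|\mathbf{z}\|^2 = \|\mathbf{x}\|^2 + 2\mathbf{x}^\top\mathbf{y} + \|\mathbf{y}\|^2$ collapses the expression to $\exp(\mathbf{x}^\top\mathbf{y}) = \mathrm{SM}(\mathbf{x},\mathbf{y})$, giving both the definition of $\mathrm{SM}$ and the first equality.

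For the $\cosh$ form I would invoke the symmetry of the isotropic Gaussian: $\omega$ and $-\omega$ are identically distributed, so $\mathbb{E}[\exp(\omega^\top \mathbf{z})] = \mathbb{E}[\exp(-\omega^\top \mathbf{z})]$. Averaging these two equal quantities gives $\mathbb{E}[\exp(\omega^\top \mathbf{z})] = \mathbb{E}[\tfrac{1}{2}(\exp(\omega^\top \mathbf{z})+\exp(-\omega^\top \mathbf{z}))] = \mathbb{E}[\cosh(\omega^\top \mathbf{z})]$, which upon multiplication by $\Lambda$ establishes the second equality.

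Finally, to justify the two estimators I would substitute the claimed parameters into the generic feature-map template (\ref{eq:feature}) and check that $\mathbb{E}[\phi(\mathbf{x})^\top \phi(\mathbf{y})] = \mathrm{SM}(\mathbf{x},\mathbf{y})$. For $\widehat{\mathrm{SM}}^{+}_m$, with $h(\mathbf{x})=\exp(-\|\mathbf{x}\|^2/2)$, $l=1$, $f_1=\exp$, the inner product becomes $\frac{h(\mathbf{x})h(\mathbf{y})}{m}\sum_i \exp(\omega_i^\top(\mathbf{x}+\mathbf{y}))$, whose expectation matches the first line of (\ref{main_eq}) term-by-term. For $\widehat{\mathrm{SM}}^{\mathrm{hyp+}}_m$, the two component functions $f_1(u)=\exp(u)$ and $f_2(u)=\exp(-u)$ produce, after multiplying out $\phi(\mathbf{x})^\top\phi(\mathbf{y})$, exactly $\frac{2 h(\mathbf{x})h(\mathbf{y})}{m}\sum_i \cosh(\omega_i^\top \mathbf{z})$; the $\tfrac{1}{\sqrt{2}}$ normalization in $h$ is precisely what converts $2h(\mathbf{x})h(\mathbf{y})$ into $\Lambda$, and the $\cosh$ identity finishes the argument. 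There is no real obstacle here — the only subtlety is keeping track of normalization constants so that the $\tfrac{1}{\sqrt{2}}$ in the $l=2$ case correctly absorbs the factor of $2$ coming from $\exp + \exp(-\cdot) = 2\cosh$.
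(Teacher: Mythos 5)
Your proposal is correct and follows essentially the same route as the paper: both establish the first equality via the Gaussian moment generating function (derived by completing the square inside the Gaussian integral), and then pass to the $\cosh$ form using the symmetry of the isotropic distribution. The only cosmetic difference is in the second step, where you invoke $\omega \overset{d}{=} -\omega$ directly rather than arguing via Taylor expansion and odd-moment cancellation as the paper does --- these are the same underlying symmetry fact, with yours being the slightly more direct phrasing.
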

\begin{figure}[h]
  \centering
  \includegraphics[width=0.99\textwidth]{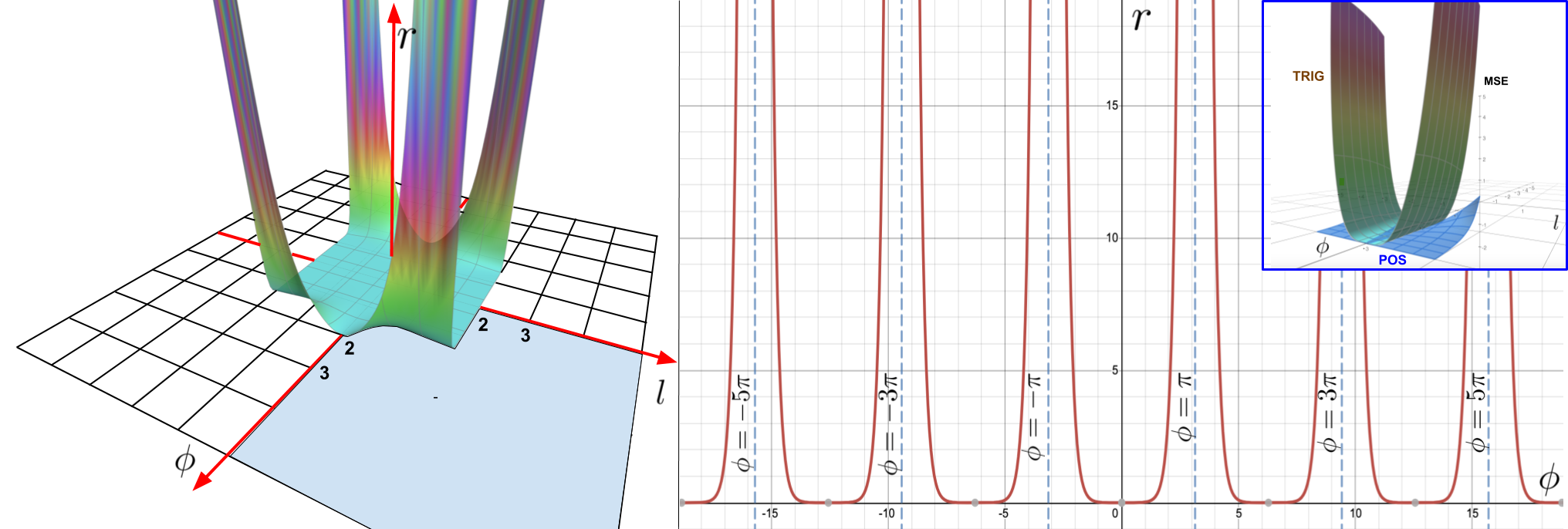}
  \caption{\small{\textbf{Left:} Symmetrized (around origin) utility function $r$ (defined as the ratio of the mean squared errors (MSEs) of estimators built on: trigonometric and positive random features) as a function of the angle $\phi$ (in radians) between input feature vectors and their lengths $l$. Larger values indicate regions of $(\phi,l)$-space with better performance of positive random features. We see that for critical regions with $\phi$ large enough (small enough softmax-kernel values) our method is arbitrarily more accurate than trigonometric random features. Plot presented for domain $[-\pi, \pi] \times [-2, 2]$.} \textbf{Right:} The slice of function $r$ for fixed $l=1$ and varying angle $\phi$. \textbf{Right Upper Corner:} Comparison of the MSEs of both the estimators in a low softmax-kernel value region.\\}
  \label{fig:tower-function}
\end{figure}
In Fig. \ref{fig:tower-function} we visualize the advantages of positive versus standard trigonometric random features. In critical regions, where kernel values are small and need careful approximation, our method outperforms its counterpart. In Section \ref{sec:experiments} we further confirm our method's advantages empirically, using positive features to efficiently train softmax-based linear Transformers.
If we replace in (\ref{main_eq}) $\omega$ with $\sqrt{d}\frac{\omega}{\|\omega\|}$, we obtain the so-called \textbf{regularized softmax-kernel} $\mathrm{SMREG}$ which we can approximate in a similar manner, simply changing $\mathcal{D}=\mathcal{N}(0,\mathbf{I}_{d})$ to $\mathcal{D} = \mathrm{Unif}(\sqrt{d}\mathcal{S}^{d-1})$, a distribution corresponding to Haar measure on the sphere of radius $\sqrt{d}$ in $\mathbb{R}^{d}$, obtaining estimator $\widehat{\mathrm{SMREG}}^{+}_{m}$. As we show in Section \ref{sec:theory}, such random features can also be used to accurately approximate regular softmax-kernel.

\subsection{Orthogonal Random Features (ORFs)}
The above constitutes the R+ part of the FAVOR+ method. It remains to explain the O-part. To further reduce the variance of the estimator (so that we can use an even smaller number of random features $r$), we entangle different random samples $\omega_{1},...,\omega_{m}$ to be \textbf{exactly} orthogonal. This can be done while maintaining unbiasedness whenever isotropic distributions $\mathcal{D}$ are used (i.e. in particular in all kernels we considered so far) by the standard Gram-Schmidt orthogonalization procedure (see \citep{unreas} for details). ORFs is a well-known method, yet it turns out that it works particularly well with our introduced PRFs for softmax. This leads to the \textbf{first theoretical results} showing that ORFs can be applied to reduce the variance of softmax/Gaussian kernel estimators \textbf{for any} dimensionality $d$ rather than just asymptotically for large enough $d$ (as is the case for previous methods, see: next section) and leads to the \textbf{first exponentially small bounds} on large deviations probabilities that are strictly smaller than for non-orthogonal methods. Positivity of random features plays a key role in these bounds. The ORF mechanism requires $m \leq d$, but this will be the case in all our experiments. The pseudocode of the entire FAVOR+ algorithm is given in Appendix \ref{appendix:main_algorithm}.

Our theoretical results are tightly aligned with experiments. We show in Section \ref{sec:experiments} that PRFs+ORFs drastically improve accuracy of the approximation of the attention matrix and enable us to reduce $r$ which results in an accurate as well as space and time efficient mechanism which we call FAVOR+.

\section{Theoretical results}
\label{sec:theory}

We present here the theory of positive orthogonal random features for softmax-kernel estimation. All these results can be applied also to the Gaussian kernel, since as explained in the previous section, one can be obtained from the other by renormalization (see: Section \ref{howto}).
All proofs and additional more general theoretical results with a discussion
are given in the Appendix.

\begin{lemma}[positive (hyperbolic) versus trigonometric random features]
\label{mse-lemma}
The following is true:
\begin{align}
\begin{split}
\mathrm{MSE}(\widehat{\mathrm{SM}}^{\mathrm{trig}}_{m}(\mathbf{x}, \mathbf{y})) =
\frac{1}{2m} \exp(\|\mathbf{x}+\mathbf{y}\|^{2})\mathrm{SM}^{-2}(\mathbf{x},\mathbf{y})
(1-\mathrm{exp}(-\|\mathbf{x}-\mathbf{y}\|^{2}))^{2}, \\ 
\mathrm{MSE}(\widehat{\mathrm{SM}}^{\mathrm{+}}_{m}(\mathbf{x}, \mathbf{y})) = \frac{1}{m}\exp(\|\mathbf{x}+\mathbf{y}\|^{2})\mathrm{SM}^{2}(\mathbf{x},\mathbf{y})(1-\exp(-\|\mathbf{x}+\mathbf{y}\|^{2})),\\
\mathrm{MSE}(\widehat{\mathrm{SM}}^{\mathrm{hyp}+}_{m}(\mathbf{x}, \mathbf{y})) =
\frac{1}{2}(1-\exp(-\|\mathbf{x}+\mathbf{y}\|^{2}))\mathrm{MSE}(\widehat{\mathrm{SM}}^{\mathrm{+}}_{m}(\mathbf{x}, \mathbf{y})),
\end{split}
\end{align}
for independent random samples $\omega_{i}$, and where $\mathrm{MSE}$ stands for the mean squared error.
\end{lemma}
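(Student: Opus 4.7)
The plan is to observe that each of the three estimators is the empirical average of $m$ i.i.d. single-sample estimators, so by unbiasedness (already established in Lemma~\ref{pos_random_features_lemma} for the positive variants, and from the Gaussian-kernel Rahimi--Recht identity for the trigonometric one), the MSE equals $\tfrac{1}{m}\Var(\phi(\mathbf{x})^{\top}\phi(\mathbf{y}))$ evaluated on a single random sample $\omega\sim\mathcal{N}(0,\mathbf{I}_{d})$. Thus the task reduces to computing three single-sample variances using the Gaussian moment identity $\mathbb{E}_{\omega}[\exp(a\,\omega^{\top}\mathbf{z})]=\exp(a^{2}\|\mathbf{z}\|^{2}/2)$ and some elementary trigonometric/hyperbolic identities, then re-expressing the result in the form stated in the lemma.

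For $\widehat{\mathrm{SM}}^{\mathrm{trig}}_{m}$, I would first use $\sin(\omega^{\top}\mathbf{x})\sin(\omega^{\top}\mathbf{y})+\cos(\omega^{\top}\mathbf{x})\cos(\omega^{\top}\mathbf{y})=\cos(\omega^{\top}(\mathbf{x}-\mathbf{y}))$ to reduce the single-sample estimator to $\exp(\tfrac{\|\mathbf{x}\|^{2}+\|\mathbf{y}\|^{2}}{2})\cos(\omega^{\top}(\mathbf{x}-\mathbf{y}))$. Next, apply $\cos^{2}u=\tfrac{1+\cos(2u)}{2}$ with the Gaussian identity to get $\mathbb{E}[\cos^{2}(\omega^{\top}\mathbf{z})]=\tfrac{1}{2}(1+\exp(-2\|\mathbf{z}\|^{2}))$ and $\mathbb{E}[\cos(\omega^{\top}\mathbf{z})]^{2}=\exp(-\|\mathbf{z}\|^{2})$ with $\mathbf{z}=\mathbf{x}-\mathbf{y}$, giving $\Var=\tfrac{1}{2}(1-\exp(-\|\mathbf{x}-\mathbf{y}\|^{2}))^{2}$. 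After multiplying by the $\exp(\|\mathbf{x}\|^{2}+\|\mathbf{y}\|^{2})$ prefactor, I would rewrite that prefactor using the elementary identity $\|\mathbf{x}\|^{2}+\|\mathbf{y}\|^{2}=\|\mathbf{x}+\mathbf{y}\|^{2}-2\mathbf{x}^{\top}\mathbf{y}$, which yields the claimed $\exp(\|\mathbf{x}+\mathbf{y}\|^{2})\mathrm{SM}^{-2}(\mathbf{x},\mathbf{y})$ factor.

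For $\widehat{\mathrm{SM}}^{+}_{m}$, I would directly evaluate $\mathbb{E}[(\phi(\mathbf{x})\phi(\mathbf{y}))^{2}]=\exp(-\|\mathbf{x}\|^{2}-\|\mathbf{y}\|^{2})\,\mathbb{E}[\exp(2\omega^{\top}(\mathbf{x}+\mathbf{y}))]=\exp(-\|\mathbf{x}\|^{2}-\|\mathbf{y}\|^{2}+2\|\mathbf{x}+\mathbf{y}\|^{2})$ and then rearrange the exponent as $\|\mathbf{x}+\mathbf{y}\|^{2}+2\mathbf{x}^{\top}\mathbf{y}$ to extract a clean $\exp(\|\mathbf{x}+\mathbf{y}\|^{2})\mathrm{SM}^{2}(\mathbf{x},\mathbf{y})$ factor. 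Subtracting $\mathrm{SM}^{2}(\mathbf{x},\mathbf{y})$ and factoring yields exactly $\exp(\|\mathbf{x}+\mathbf{y}\|^{2})\mathrm{SM}^{2}(\mathbf{x},\mathbf{y})(1-\exp(-\|\mathbf{x}+\mathbf{y}\|^{2}))$. For $\widehat{\mathrm{SM}}^{\mathrm{hyp+}}_{m}$, I would combine the two exponential features into $\cosh(\omega^{\top}(\mathbf{x}+\mathbf{y}))$ (the $1/\sqrt{2}$ in $h$ precisely absorbs the factor from the $\cosh$ definition), apply $\cosh^{2}u=\tfrac{1+\cosh(2u)}{2}$ and the Gaussian identity to obtain $\Var(\cosh(\omega^{\top}\mathbf{z}))=\tfrac{1}{2}(\exp(\|\mathbf{z}\|^{2})-1)^{2}$ with $\mathbf{z}=\mathbf{x}+\mathbf{y}$, and then factor one copy of $(\exp(\|\mathbf{x}+\mathbf{y}\|^{2})-1)=\exp(\|\mathbf{x}+\mathbf{y}\|^{2})(1-\exp(-\|\mathbf{x}+\mathbf{y}\|^{2}))$ out to match the $\widehat{\mathrm{SM}}^{+}_{m}$ expression times $\tfrac{1}{2}(1-\exp(-\|\mathbf{x}+\mathbf{y}\|^{2}))$.

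No step is genuinely hard; the main obstacle is purely bookkeeping---making sure the normalization constants $h(\mathbf{x})$, the factor $1/\sqrt{m}$ in definition \eqref{eq:feature}, and the dimension-$2m$ in the trigonometric/hyperbolic cases are tracked correctly, and that the final algebra collapses the natural form $\exp(-\|\mathbf{x}\|^{2}-\|\mathbf{y}\|^{2})\cdot(\text{Gaussian moment})$ into the target form involving $\exp(\|\mathbf{x}\pm\mathbf{y}\|^{2})$ and $\mathrm{SM}^{\pm 2}(\mathbf{x},\mathbf{y})$ via the single identity $\|\mathbf{x}\|^{2}+\|\mathbf{y}\|^{2}=\|\mathbf{x}+\mathbf{y}\|^{2}-2\mathbf{x}^{\top}\mathbf{y}=\|\mathbf{x}-\mathbf{y}\|^{2}+2\mathbf{x}^{\top}\mathbf{y}$. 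Once those rearrangements are made, the third identity in the lemma follows by inspection from the computed second and first expressions.
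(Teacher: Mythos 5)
Your proposal is correct and follows essentially the same route as the paper: express each estimator as an i.i.d. average so that $\mathrm{MSE}=\tfrac{1}{m}\mathrm{Var}(\text{single-sample estimator})$, reduce the trigonometric case to $\cos(\omega^{\top}(\mathbf{x}-\mathbf{y}))$ via the angle-subtraction identity, compute $\mathbb{E}[\exp(a\omega^{\top}\mathbf{z})]=\exp(a^{2}\|\mathbf{z}\|^{2}/2)$ for the positive cases, and rewrite $\exp(\|\mathbf{x}\|^{2}+\|\mathbf{y}\|^{2})$ in terms of $\exp(\|\mathbf{x}\pm\mathbf{y}\|^{2})$ and $\mathrm{SM}^{\pm 2}$. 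Two small presentational differences from the paper: the paper cites Lemma~1 of \citep{ort} for $\mathrm{Var}(\cos(\omega^{\top}\Delta))=\tfrac{1}{2}(1-\exp(-\|\Delta\|^{2}))^{2}$, whereas you rederive it from $\cos^{2}u=\tfrac{1+\cos 2u}{2}$; and for the hyperbolic estimator the paper expands $\mathrm{Var}(\exp(\omega^{\top}\mathbf{z}))+\mathrm{Var}(\exp(-\omega^{\top}\mathbf{z}))+2\,\mathrm{Cov}(\cdot,\cdot)$ and uses the symmetry $\omega\sim-\omega$, whereas you collapse the two features into $\cosh(\omega^{\top}\mathbf{z})$ up front and apply $\cosh^{2}u=\tfrac{1+\cosh 2u}{2}$, which is slightly more compact but gives the identical value $\tfrac{1}{2}(\exp(\|\mathbf{z}\|^{2})-1)^{2}$.
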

Thus, for $\mathrm{SM}(\mathbf{x}, \mathbf{y}) \rightarrow 0$ we have: $\mathrm{MSE}(\widehat{\mathrm{SM}}^{\mathrm{trig}}_{m}(\mathbf{x}, \mathbf{y})) \rightarrow \infty$ and $\mathrm{MSE}(\widehat{\mathrm{SM}}^{\mathrm{+}}_{m}(\mathbf{x}, \mathbf{y})) \rightarrow 0$. Furthermore, the hyperbolic estimator provides additional accuracy improvements that are strictly better than those from $\widehat{\mathrm{SM}}_{2m}^{\mathrm{+}}(\mathbf{x}, \mathbf{y})$ with twice as many random features. The next result shows that the regularized softmax-kernel is in practice an accurate proxy of the softmax-kernel in attention.

\begin{theorem}[regularized versus softmax-kernel]
\label{reg-theorem}
Assume that the $L_{\infty}$-norm of the attention matrix for the softmax-kernel satisfies: $\|\mathbf{A}\|_{\infty} \leq C$ for some constant $C \geq 1$. Denote by $\mathbf{A}^{\mathrm{reg}}$ the corresponding attention matrix for the regularized softmax-kernel. The following holds:
\begin{equation}
\inf_{i,j} \frac{\mathbf{A}^{\mathrm{reg}}(i,j)}{\mathbf{A}(i, j)} \geq 1 - \frac{2}{d^{\frac{1}{3}}} + o\left(\frac{1}{d^{\frac{1}{3}}}\right), \textrm{ and } \sup_{i,j} \frac{\mathbf{A}^{\mathrm{reg}}(i,j)}{\mathbf{A}(i, j)} \leq 1. 
\end{equation}
Furthermore, the latter holds for $d \geq 2$ even if the $L_{\infty}$-norm condition is not satisfied, i.e. the regularized softmax-kernel is a universal lower bound for the softmax-kernel. 
\end{theorem}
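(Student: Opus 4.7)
The plan is to reduce the ratio $\mathbf{A}^{\mathrm{reg}}(i,j)/\mathbf{A}(i,j)$ to a dimension-only quantity and then compare two Taylor series whose coefficients differ by the factor $\beta_k := \prod_{j=0}^{k-1}(1+2j/d)^{-1}$.

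First I invoke Lemma~\ref{pos_random_features_lemma}: $\mathrm{SM}(\mathbf{x},\mathbf{y}) = \Lambda\,\mathbb{E}_{\omega\sim\mathcal{N}(0,\mathbf{I}_d)}[\cosh(\omega^\top \mathbf{z})]$ with $\mathbf{z}=\mathbf{x}+\mathbf{y}$, while the analogous identity for $\mathrm{SMREG}$ uses $\omega$ uniform on $\sqrt{d}\,\mathcal{S}^{d-1}$. The prefactor $\Lambda$ cancels in the ratio, and the Gaussian expectation equals $\exp(\|\mathbf{z}\|^2/2)$, so
\begin{equation*}
\frac{\mathbf{A}^{\mathrm{reg}}(i,j)}{\mathbf{A}(i,j)} \;=\; \frac{\mathbb{E}_{\hat\omega\sim\mathrm{Unif}(\mathcal{S}^{d-1})}\bigl[\cosh\bigl(\sqrt{d}\,\hat\omega^\top \mathbf{z}\bigr)\bigr]}{\exp(\|\mathbf{z}\|^2/2)}.
\end{equation*}
Taylor-expanding $\cosh$ and using the classical moment identity $\mathbb{E}_{\hat\omega}[(\hat\omega^\top\mathbf{z})^{2k}] = \|\mathbf{z}\|^{2k}(2k-1)!!\big/\bigl(d(d+2)\cdots(d+2k-2)\bigr)$, the numerator rewrites as $\sum_{k\geq 0}\beta_k c_k$, while the denominator is $\sum_{k\geq 0} c_k$, where $c_k := (\|\mathbf{z}\|^2/2)^k/k!$ and $\beta_k\in(0,1]$ with $\beta_0=1$.

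The universal upper bound $\mathbf{A}^{\mathrm{reg}}(i,j)/\mathbf{A}(i,j)\leq 1$ is then immediate term-by-term from $\beta_k\leq 1$, needs no assumption on $\|\mathbf{A}\|_\infty$, and settles the final clause of the theorem (the restriction $d\geq 2$ only ensures that $\mathcal{S}^{d-1}$ is non-trivial).

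For the lower bound I pick the cutoff $k_0 = \lceil d^{1/3}\rceil$. The elementary inequality $(1+2j/d)\leq(1+2k_0/d)$ for $j<k_0$ together with $1+x\leq e^x$ yields $\beta_k \geq \exp(-2k_0^2/d) \geq 1 - 2/d^{1/3} + o(d^{-1/3})$ for every $k\leq k_0$; hence $\sum_{k\leq k_0}\beta_k c_k \geq (1-2d^{-1/3}+o(d^{-1/3}))\sum_{k\leq k_0}c_k$. The tail $\sum_{k>k_0}c_k$ is handled via the hypothesis $\|\mathbf{A}\|_\infty\leq C$: combined with $\mathrm{SM}=\Lambda\exp(\|\mathbf{z}\|^2/2)$ this yields a uniform upper bound on $\|\mathbf{z}\|^2/2$, so $\sum_{k}c_k=\exp(\|\mathbf{z}\|^2/2)$ is a Poisson-type MGF with bounded parameter whose tail beyond $k_0\to\infty$ decays faster than any polynomial in $d$, in particular $o(d^{-1/3})$. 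Combining head and tail delivers $1 - 2d^{-1/3} + o(d^{-1/3})$.

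The main obstacle will be this tail estimate: $\|\mathbf{A}\|_\infty\leq C$ directly bounds only $\mathbf{x}^\top\mathbf{y}$, so a genuinely uniform bound on $\|\mathbf{z}\|^2=\|\mathbf{x}\|^2+2\mathbf{x}^\top\mathbf{y}+\|\mathbf{y}\|^2$ must implicitly use boundedness of the query/key norms, a condition that is tacit once the standard $\sqrt{d}$-renormalization is absorbed into $\mathbf{x},\mathbf{y}$. The choice $k_0=d^{1/3}$ is the unique scaling that balances the head error $\asymp k_0^2/d$ against the super-polynomially small tail; any other scaling worsens the resulting rate, which is why the exponent $1/3$ appears in the statement.
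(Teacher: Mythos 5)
Your proof is correct and follows essentially the same route as the paper's: both rewrite the ratio $\mathrm{SMREG}/\mathrm{SM}$ as a Poisson-weighted average of the coefficients $\beta_k = f(k,d) = \prod_{j=0}^{k-1}(1+2j/d)^{-1}$, get the upper bound from $\beta_k\leq 1$, and obtain the lower bound by splitting the sum at $\Theta(d^{1/3})$, bounding the head by $\beta_{k_0}$ and the tail by a Poisson-concentration estimate using the hypothesis on $\|\mathbf{A}\|_\infty$. The only differences are cosmetic — you bound $\beta_{k_0}\geq e^{-2k_0^2/d}$ directly while the paper uses $(1-\tfrac{2l-2}{d})^l$ and a log expansion, and you invoke super-polynomial Poisson tail decay where the paper writes out the Chernoff bound explicitly — and your final remark correctly flags the same tacit use of bounded query/key norms that the paper's step $w\leq 2\ln C$ also relies on.
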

Consequently, positive random features for $\mathrm{SMREG}$ can be used to approximate the softmax-kernel.
Our next result shows that orthogonality provably reduces mean squared error of the estimation with positive random features \textbf{for any dimensionality $d>0$} and we explicitly provide the gap.
\begin{theorem}
\label{var-theorem}
If $\widehat{\mathrm{SM}}_{m}^{\mathrm{ort+}}(\mathbf{x}, \mathbf{y})$ stands for the modification of
$\widehat{\mathrm{SM}}_{m}^{+}(\mathbf{x}, \mathbf{y})$ with orthogonal random features (and thus for $m \leq d$), then the following holds for any $d >0$:
\begin{equation}
\mathrm{MSE}(\widehat{\mathrm{SM}}_{m}^{\mathrm{ort+}}(\mathbf{x}, \mathbf{y})) \leq 
\mathrm{MSE}(\widehat{\mathrm{SM}}_{m}^{+}(\mathbf{x}, \mathbf{y})) - \frac{2 (m - 1)}{m(d+2)}\left(\mathrm{SM}(\mathbf{x}, \mathbf{y}) - \exp\left(- \frac{\| \*x \|^2 + \| \*y \|^2}{2}\right) \right)^2.
\end{equation}
Furthermore, completely analogous result holds for the regularized softmax-kernel $\mathrm{SMREG}$.
\end{theorem}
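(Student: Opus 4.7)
The plan is to reduce the inequality to a covariance calculation between two orthogonal samples, then evaluate that covariance via a polar decomposition of each $\omega_i$. Writing
\[
\widehat{\mathrm{SM}}_m^{+}(\mathbf{x},\mathbf{y}) = \frac{1}{m}\sum_{i=1}^{m}X_i,\qquad X_i := \Lambda\,\exp(\omega_i^{\top}\mathbf{z}),
\]
with $\mathbf{z} := \mathbf{x}+\mathbf{y}$ and $\Lambda := \exp(-(\|\mathbf{x}\|^{2}+\|\mathbf{y}\|^{2})/2)$, each $X_i$ is unbiased for $\mathrm{SM}(\mathbf{x},\mathbf{y})$. Because each $\omega_i$ is marginally $\mathcal{N}(0,\mathbf{I}_d)$ in both the iid and the orthogonal constructions, $\mathrm{Var}(X_i)$ coincides in the two cases, and hence
\[
\mathrm{MSE}(\widehat{\mathrm{SM}}_m^{+})-\mathrm{MSE}(\widehat{\mathrm{SM}}_m^{\mathrm{ort+}}) \;=\; -\tfrac{m-1}{m}\,\mathrm{Cov}_{\mathrm{ort}}(X_1,X_2).
\]
So the theorem reduces to proving $-\mathrm{Cov}_{\mathrm{ort}}(X_1,X_2) \geq \tfrac{2}{d+2}\mathrm{SM}^{2}(\mathbf{x},\mathbf{y})$.

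Next, I would decompose $\omega_i = s_i\mathbf{q}_i$ with $s_i \overset{\mathrm{iid}}{\sim}\chi_d$ independent of the pair $(\mathbf{q}_1,\mathbf{q}_2)$, which is Haar-distributed on the Stiefel manifold of orthonormal 2-frames in $\mathbb{R}^d$; this preserves Gaussian marginals exactly while encoding $\mathbf{q}_1\perp\mathbf{q}_2$ geometrically. By rotational invariance of the joint law of $(\mathbf{q}_1,\mathbf{q}_2)$ take $\mathbf{z}=\|\mathbf{z}\|\mathbf{e}_1$, so that the cross-covariance depends only on the joint distribution of the first coordinates $(q_{11},q_{21})$. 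A classical conditioning step then yields $q_{11}$ distributed as the first coordinate of uniform on $S^{d-1}$, and $q_{21}\mid q_{11}\stackrel{d}{=}\sqrt{1-q_{11}^{2}}\,Y$ with $Y$ independent of $q_{11}$ and distributed as the first coordinate of uniform on $S^{d-2}$.

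Taylor-expanding $\exp((\omega_1+\omega_2)^{\top}\mathbf{z})$ and using sign-flip invariance of Haar to kill odd-power terms, the computation reduces to the mixed moments $\mathbb{E}[q_{11}^{2a}q_{21}^{2b}]$, which by Beta-function integration satisfy the clean ratio identity
\[
\frac{\mathbb{E}[q_{11}^{2a}q_{21}^{2b}]}{\mathbb{E}[q_{11}^{2a}]\,\mathbb{E}[q_{21}^{2b}]} \;=\; \prod_{j=0}^{b-1}\frac{d/2+j}{d/2+a+j} \;<\; 1 \quad (a,b\geq 1),
\]
a quantitative negative association of the coordinates induced by the orthogonality constraint. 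The leading $(a,b)=(1,1)$ mismatch is exactly $1-d/(d+2)=2/(d+2)$; after lifting this through the chi-moment $\alpha_2^{2}=d^{2}$ (with $\alpha_k := \mathbb{E}[s^k]$, $s\sim\chi_d$), the binomial weight from $((\omega_1+\omega_2)^{\top}\mathbf{z})^{4}$, the Taylor factor $1/4!$, and the $\Lambda^{2}$ prefactor, this contribution produces the stated $\tfrac{2}{d+2}\mathrm{SM}^{2}$ lower bound, while the product formula ensures that every higher-order $(a+b\geq 3)$ Taylor coefficient of $\mathbb{E}_{\mathrm{iid}}[\exp(V)]-\mathbb{E}_{\mathrm{ort}}[\exp(V)]$ is non-negative and therefore only strengthens the inequality.

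The principal obstacle is converting the term-by-term negative-association identity into a \emph{uniform} bound that carries the $\tfrac{2}{d+2}\mathrm{SM}^{2}$ constant cleanly across all Taylor orders and all $\|\mathbf{z}\|$. Although every mixed-moment comparison favors the orthogonal construction, controlling the precise accumulation of binomial and chi-moment coefficients is delicate and is most easily handled via a generating-function manipulation that repackages the orthogonality condition as a single integral identity, rather than by pointwise moment comparison. The analogous statement for the regularized kernel $\mathrm{SMREG}$ follows from the same pipeline after replacing the random lengths $s_i$ by the constant $\sqrt{d}$; this case is strictly easier because all randomness then resides in the angular Haar-distributed directions, eliminating the chi-moment bookkeeping.
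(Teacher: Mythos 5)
Your covariance reduction, the polar decomposition $\omega_i = s_i\mathbf{q}_i$ with the Stiefel manifold, and the Beta-ratio identity
\[
\frac{\mathbb{E}[q_{11}^{2a}q_{21}^{2b}]}{\mathbb{E}[q_{11}^{2a}]\,\mathbb{E}[q_{21}^{2b}]} = \prod_{j=0}^{b-1}\frac{d/2+j}{d/2+a+j}
\]
are all sound and, up to presentation, this is essentially the paper's strategy: the paper abstracts it as a ratio $\tau(t,u)$ of $\chi_d$-moments and proves a lemma that $\tau(t,u)\leq d/(d+2)$ for all even $t,u\geq 2$, which is exactly the same uniform bound your product identity gives (each factor is $\leq 1$ and at least one factor is $\leq (d/2)/(d/2+1)=d/(d+2)$).

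The gap is in how you use this identity. You assert that the $(a,b)=(1,1)$ Taylor coefficient alone ``produces the stated $\tfrac{2}{d+2}\mathrm{SM}^2$ lower bound'' and that higher orders merely strengthen the inequality. This is quantitatively false. Tracking the factors you list yourself, the $(a,b)=(1,1)$ contribution to $\mathrm{MSE}_{\mathrm{iid}}-\mathrm{MSE}_{\mathrm{ort}}$ is
\[
\left(1-\tfrac{1}{m}\right)\Lambda^2\cdot\frac{\binom{4}{2}}{4!}\|\mathbf{z}\|^4\,\alpha_2^2\left(\frac{1}{d^2}-\frac{1}{d(d+2)}\right)
= \left(1-\tfrac{1}{m}\right)\Lambda^2\,\frac{\|\mathbf{z}\|^4}{2(d+2)},
\]
whereas the target is $\left(1-\tfrac{1}{m}\right)\tfrac{2}{d+2}\Lambda^2 e^{\|\mathbf{z}\|^2}$. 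The ratio $\|\mathbf{z}\|^4 e^{-\|\mathbf{z}\|^2}/4 < 1$ for all $\|\mathbf{z}\|$, so the leading term never supplies the bound on its own; the higher-order terms are \emph{essential}, not just corrections. The paper's proof of the general result (Theorem~\ref{general-var-theorem}) does exactly what your third paragraph hints at but does not execute: it lower-bounds $1-\tau(t,u)\geq \tfrac{2}{d+2}$ \emph{uniformly} across all nonzero even $(t,u)$, then factors $\tfrac{2}{d+2}$ out of the \emph{entire} double power series so that the remaining sum reassembles the square of the kernel. Your product formula already contains the uniform bound you need — you just have to apply it to every coefficient simultaneously rather than to the leading one, and then recognize that the resulting series is the square of $\mathbb{E}[\exp(\omega^\top\mathbf{z})]$. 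The ``generating-function manipulation'' you gesture at in the last paragraph is precisely this resummation; without carrying it out, the argument does not close.

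One further caution, relevant to both your sketch and the paper's own argument: the boundary terms with $a=0$ or $b=0$ have $\tau=1$, so pulling $\tfrac{2}{d+2}$ out of the full series as if every coefficient obeyed the bound slightly overstates what is proved. If you work the resummation honestly you obtain a lower bound proportional to $(F(\mathbf{z})-a_0)^2$ rather than $F(\mathbf{z})^2$; this matters near $\mathbf{z}=0$ (where both estimators degenerate to the constant $\Lambda$ and the theorem's displayed inequality cannot literally hold). Be precise about which index set the uniform bound actually covers.
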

For the regularized softmax-kernel, orthogonal features provide additional concentration results - the first exponentially small bounds for probabilities of estimators' tails that are strictly better than for non-orthogonal variants for every $d>0$. Our next result enables us to explicitly estimate the gap.

\begin{theorem}
\label{ort-theorem}
Let $\mathbf{x}, \mathbf{y} \in \mathbb{R}^{d}$. The following holds for any $a > \mathrm{SMREG}(\mathbf{x},\mathbf{y})$, $\theta > 0$ and $m \leq d$:
\begin{gather*}
    \mathbb{P}[\widehat{\mathrm{SMREG}}^{+}_{m}(\mathbf{x}, \mathbf{y}) > a] \leq \exp(-\theta m a) M_Z (\theta)^m, \quad \mathbb{P}[\widehat{\mathrm{SMREG}}^{\mathrm{ort+}}_{m}(\mathbf{x}, \mathbf{y}) > a] \\
    \leq \exp(-\theta m a) \biggl( M_Z (\theta)^m - \exp\left(-\frac{m}{2} (\| \*x \|^2 + \| \*y \|^2)\right) \frac{\theta^4 m (m - 1)}{4 (d + 2)} \| \*x + \*y \|^4 \biggr)
\end{gather*}
where $\widehat{\mathrm{SMREG}}^{\mathrm{ort+}}_{m}(\mathbf{x}, \mathbf{y})$ stands for the modification of $\widehat{\mathrm{SMREG}}^{\mathrm{+}}_{m}(\mathbf{x}, \mathbf{y})$ with ORFs, $X = \Lambda \exp(\sqrt{d}\frac{\omega^{\top}}{\|\omega\|_{2}}(\mathbf{x}+\mathbf{y}))$, $\omega \sim \mathcal{N}(0,\mathbf{I}_{d})$, $\Lambda$ is as in Lemma \ref{pos_random_features_lemma} and
$M_{Z}$ is the moment generating function of $Z$.
\end{theorem}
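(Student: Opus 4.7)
The first inequality is a direct Cram\'{e}r--Chernoff bound on a sum of i.i.d.\ variables. Write $\widehat{\mathrm{SMREG}}^{+}_{m}(\mathbf{x},\mathbf{y}) = \frac{1}{m}\sum_{i=1}^{m} X_{i}$, where $X_{1},\ldots,X_{m}$ are i.i.d.\ copies of the random variable $X$ defined in the statement, each built from its own independent $\omega_{i}\sim\mathcal{N}(0,\mathbf{I}_{d})$. For any $\theta>0$, Markov's inequality on the nonnegative quantity $\exp(\theta\sum_{i}X_{i})$, combined with factorization by independence, yields $\mathbb{P}[\sum_{i}X_{i}>ma]\leq e^{-\theta m a}(M_{X}(\theta))^{m}$; taking the infimum over $\theta>0$ produces exactly $\exp(-m\mathcal{L}_{X}(a))$ by definition of $\mathcal{L}_{X}$.

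For the orthogonal bound I would apply the same Markov strategy, but now the variables $(X_{1},\ldots,X_{m})$ produced by Gram--Schmidt are dependent and the MGF no longer factorizes. Using rotational invariance of the Gaussian and of the orthogonalization step, reduce to the canonical form $X_{i} = \Lambda\exp(\sqrt{d}\,\|\mathbf{z}\|\,w_{i})$ with $\mathbf{z}=\mathbf{x}+\mathbf{y}$, where $(w_{1},\ldots,w_{m})$ is distributed as the first $m$ coordinates of a single Haar-uniform unit vector on $\mathcal{S}^{d-1}$ in the orthogonal case, versus the first coordinates of $m$ independent uniform unit vectors in the non-orthogonal case. The crux is then a joint-MGF comparison
\[
\mathbb{E}_{\mathrm{ort}}\!\Big[\exp\!\Big(\theta\sum_{i}X_{i}\Big)\Big] \;\leq\; \tfrac{d}{d+2}\,(M_{X}(\theta))^{m}\qquad(\theta>0),
\]
which together with Markov and an infimum over $\theta$ delivers $\tfrac{d}{d+2}\exp(-m\mathcal{L}_{X}(a))$.

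To obtain the joint-MGF bound I would Taylor-expand both sides and compare the joint moments of the $w_{i}$'s term by term. Using the explicit representation $w_{i}=g_{i}/\|\mathbf{g}\|$ with $\mathbf{g}\sim\mathcal{N}(0,\mathbf{I}_{d})$ and the Dirichlet-integral formula on the sphere, the orthogonal joint moments are computable in closed form; the gap $\tfrac{d}{d+2}$ already appears at the second cross moment, since $\mathbb{E}_{\mathrm{ort}}[w_{i}^{2}w_{j}^{2}] = \tfrac{1}{d(d+2)}$ while $\mathbb{E}_{\mathrm{iid}}[w_{i}^{2}w_{j}^{2}] = \tfrac{1}{d^{2}}$ for $i\neq j$. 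Positivity of the random features is essential: it guarantees that the Taylor expansion of $\exp(\theta\sum_{i}X_{i})$ is a sum of nonnegative-coefficient terms in the $w_{i}$'s, so that a moment-by-moment comparison lifts cleanly to an MGF-level comparison without sign cancellations. The main obstacle will be propagating the $\tfrac{d}{d+2}$ gap uniformly across all higher cross moments of the form $\mathbb{E}[\prod_{i}w_{i}^{k_{i}}]$, which requires careful manipulation of the Stiefel/Dirichlet-integral formulas; modulo this comparison lemma, the remainder of the argument (Markov plus optimization in $\theta$) mirrors the i.i.d.\ derivation.
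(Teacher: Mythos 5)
Your overall template -- Cram\'{e}r--Chernoff for the i.i.d.\ case, then Markov plus a joint-MGF comparison for the orthogonal case, driven by a term-by-term moment comparison that is sign-safe because positive features make every Taylor coefficient nonnegative -- is the same template the paper uses. The reduction to the first $m$ coordinates of a Haar-uniform sphere point, and the identification of the second cross moment $\mathbb{E}_{\mathrm{ort}}[w_i^2 w_j^2]=\tfrac{1}{d(d+2)}$ versus $\tfrac{1}{d^2}$ as the source of the $\tfrac{d}{d+2}$ gap, are both correct observations that also appear (in essence) in the paper's argument.

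However, your proposed intermediate lemma is false as stated. You want to prove
\[
\mathbb{E}_{\mathrm{ort}}\Bigl[\exp\Bigl(\theta\sum_i X_i\Bigr)\Bigr] \leq \tfrac{d}{d+2}\,(M_X(\theta))^m\qquad(\theta>0),
\]
but as $\theta\to 0^+$ the left-hand side tends to $1$ while the right-hand side tends to $\tfrac{d}{d+2}<1$, so this inequality cannot hold for all $\theta>0$. The paper does not make this claim. What the paper actually establishes is the weaker, but true, MGF dominance
\[
\mathbb{E}\bigl[e^{\theta(X_1^{\mathrm{ort}}+\cdots+X_m^{\mathrm{ort}})}\bigr] \leq \mathbb{E}\bigl[e^{\theta(X_1^{\mathrm{iid}}+\cdots+X_m^{\mathrm{iid}})}\bigr],
\]
with the gap between the two controlled term by term: it expands both MGFs in Taylor series of mixed moments $\mathbb{E}[(\omega_1^\top\mathbf{z})^{d_1}\cdots(\omega_m^\top\mathbf{z})^{d_m}]$, and -- here is the step you are missing -- proves a clean factorization identity (Lemma~\ref{useful-lemma}) of the form
\[
\mathbb{E}\!\left[\frac{g_1^{k_1}\cdots g_s^{k_s}}{\|\mathbf{g}\|_2^{k_1+\cdots+k_s}}\right] = \frac{\prod_{i}\mathbb{E}[g_i^{k_i}]}{\mathbb{E}\bigl[\|\mathbf{g}\|_2^{k_1+\cdots+k_s}\bigr]},
\]
obtained by writing $\mathbf{g}/\|\mathbf{g}\|_2\cdot\|\tilde{\mathbf{g}}\|_2$ for an independent copy $\tilde{\mathbf{g}}$. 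This identity collapses the ratio of orthogonal to i.i.d.\ cross moments to a single scalar $\tau(d_1,\ldots,d_m)$, a ratio of $\chi$-distribution moments, and a separate calculation (Lemma~\ref{tau-lemma}, via $\Gamma$-function formulas) then shows $\tau(d_1,\ldots,d_m)\leq\tfrac{d}{d+2}$ uniformly whenever at least two of the $d_i$ are positive. This is exactly the ``propagating the gap across all higher cross moments'' obstacle you flagged as the hard part; without the factorization lemma you have no mechanism to show the gap does not deteriorate for higher-order terms. You should replace the false MGF bound with the paper's MGF dominance statement and supply the factorization lemma, after which the moment-by-moment comparison you describe goes through.
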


We see that ORFs provide exponentially small and sharper bounds for critical regions where the softmax-kernel is small. 
Below we show that even for the $\mathrm{SM}^{\mathrm{trig}}$ mechanism with ORFs, it suffices to take $m=\Theta(d\log(d))$ random projections to accurately approximate the attention matrix (thus if not attention renormalization, PRFs would not be needed). In general, $m$ depends on the dimensionality $d$ of the embeddings, radius $R$ of the ball where all queries/keys live and precision parameter $\epsilon$ (see: Appendix~\ref{sec:ball} for additional discussion), but does not depend on input sequence length $L$.


\begin{theorem}[uniform convergence for attention approximation]
\label{thm:uniform}
Assume that $L_{2}$-norms of queries/keys are upper-bounded by $R>0$. Define $l=Rd^{-\frac{1}{4}}$ and take 
$h^{*} = \exp(\frac{l^{2}}{2})$.
Then for any $\epsilon>0$, $\delta=\frac{\epsilon}{(h^{*})^{2}}$
and the number of random projections $m=\Theta(\frac{d}{\delta^{2}}\log(\frac{4d^{\frac{3}{4}} R}{\delta}))$ the following holds for the attention approximation mechanism leveraging estimators $\widehat{\mathrm{SM}}^{\mathrm{trig}}$ with ORFs:
$\|\widehat{\mathbf{A}}-\mathbf{A}\|_{\infty} \leq \epsilon$ with any constant probability, where $\widehat{\mathbf{A}}$ approximates the attention matrix $\mathbf{A}$.
\end{theorem}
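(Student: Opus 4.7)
The plan is to combine a pointwise concentration inequality for $\widehat{\mathrm{SM}}^{\mathrm{trig}}_{m}$ with an epsilon-net argument on the Euclidean ball where all queries and keys live. Throughout I use the $\sqrt{d}$-rescaling to rewrite $\mathrm{SM}(\mathbf{x},\mathbf{y}) = \exp(\tilde{\mathbf{x}}^{\top}\tilde{\mathbf{y}})$ with $\tilde{\mathbf{x}} = \mathbf{x}/d^{1/4}$, so rescaled keys/queries sit in a ball of radius $l = R/d^{1/4}$ and the prefactor $h(\tilde{\mathbf{x}}) = \exp(\|\tilde{\mathbf{x}}\|^{2}/2)$ is uniformly bounded by $h^{*} = \exp(l^{2}/2)$.

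\textbf{Pointwise concentration.} For fixed $\mathbf{x},\mathbf{y}$, each single random-feature contribution $h(\tilde{\mathbf{x}})h(\tilde{\mathbf{y}})\cos(\omega_{i}^{\top}(\tilde{\mathbf{x}}-\tilde{\mathbf{y}}))$ lies deterministically in $[-(h^{*})^{2},(h^{*})^{2}]$, so in the i.i.d.\ case Hoeffding's inequality gives
$$\Pr\!\Big[\,\big|\widehat{\mathrm{SM}}^{\mathrm{trig}}_{m}(\mathbf{x},\mathbf{y}) - \mathrm{SM}(\mathbf{x},\mathbf{y})\big| > \tfrac{\epsilon}{2}\,\Big] \le 2\exp(-c\, m\delta^{2})$$
for a universal constant $c>0$, where $\delta = \epsilon/(h^{*})^{2}$. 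To transfer this to orthogonal samples, note that in the regime $m\le d$ all ORF vectors lie in a single orthogonalization block, so one can either run a martingale Azuma-style argument block-by-block, or deduce the ORF tail from the i.i.d.\ one by invoking the variance reduction of Theorem \ref{var-theorem} together with the MGF comparison underlying Theorem \ref{ort-theorem}, at the cost of at most a constant multiplicative factor.

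\textbf{Epsilon-net argument.} Next I would take an $\epsilon'$-net $\mathcal{N}\subset B(0,R)$ of cardinality $|\mathcal{N}|\le(3R/\epsilon')^{d}$ and union-bound the pointwise estimate over $\mathcal{N}\times\mathcal{N}$. The resolution $\epsilon'$ is fixed by Lipschitz control: on $B(0,R)^{2}$, $\mathrm{SM}$ has Lipschitz constant $\lesssim (R/\sqrt{d})(h^{*})^{2}$, while $\nabla\widehat{\mathrm{SM}}^{\mathrm{trig}}_{m}$ is dominated by $d^{-1/4}(h^{*})^{2}\max_{i}\|\omega_{i}\|_{2}$; on the high-probability event $\{\max_{i}\|\omega_{i}\|_{2}\lesssim\sqrt{d\log m}\}$ (standard Gaussian concentration), the estimator is Lipschitz with constant $O(d^{1/4}\sqrt{\log m}\,(h^{*})^{2})$. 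Choosing $\epsilon'$ so that Lipschitz-times-resolution is $O(\epsilon)$ forces $\epsilon' = \Theta(\delta/(d^{1/4}\sqrt{\log m}))$, after which careful bookkeeping of the polynomial-in-$d$ prefactor absorbs the slack into $\log|\mathcal{N}|^{2} \lesssim d\log(C d^{3/4} R/\delta)$.

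\textbf{Combining.} Demanding that the per-pair failure probability beat $|\mathcal{N}|^{2}$ yields $m\delta^{2}\gtrsim d\log(4d^{3/4}R/\delta)$, which rearranges to the claimed $m = \Theta((d/\delta^{2})\log(4d^{3/4}R/\delta))$; on the combined high-probability event, the uniform guarantee on $B(0,R)^{2}$ simultaneously controls $|\widehat{\mathbf{A}}(i,j)-\mathbf{A}(i,j)|\le\epsilon$ for every pair, hence $\|\widehat{\mathbf{A}}-\mathbf{A}\|_{\infty}\le\epsilon$ with constant probability. No union bound over the $L^{2}$ attention entries is needed, which is precisely why $m$ is $L$-independent. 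The main obstacle is the Lipschitz control of $\widehat{\mathrm{SM}}^{\mathrm{trig}}_{m}$: its gradient carries the random factors $\|\omega_{i}\|_{2}$, so a deterministic Lipschitz constant only exists on a high-probability event, and this is where the $d^{3/4}$ (rather than the naive $d^{1/4}$) inside the logarithm enters. A secondary subtlety is that ORFs are not independent, which precludes the direct use of Hoeffding and forces either a block/martingale argument or a transfer from the i.i.d.\ case via Theorems \ref{var-theorem} and \ref{ort-theorem}.
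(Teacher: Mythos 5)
Your route is structurally different from the paper's. The paper proves a more general version (Theorem~\ref{uniform_convergence}) and disposes of uniform convergence in a single stroke by invoking Theorem~3 of \citep{Lin2020DemystifyingOM}, which already packages the pointwise ORF concentration inequality and the $\epsilon$-net argument into a bound of the form $m = \Omega\big(\tfrac{d}{\delta^2}\log\tfrac{\sigma\,\mathrm{diam}(\mathcal{M})}{\delta}\big)$; plugging in $\sigma = d$ and $\mathrm{diam}(\mathcal{M}) = 4R/d^{1/4}$ immediately produces the stated $d^{3/4}$ inside the logarithm. You instead re-derive a Rahimi--Recht-style net argument from scratch (Hoeffding pointwise, Lipschitz control of $\widehat{\mathrm{SM}}^{\mathrm{trig}}_m$ on a high-probability event for $\max_i \|\omega_i\|_2$, union bound over a covering of $B(R)^2$). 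That bookkeeping is essentially sound, and your Lipschitz route even explains \emph{why} a $d^{3/4}$-type factor must appear in the logarithm.

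The genuine gap is the transfer from i.i.d.\ to orthogonal samples in the pointwise concentration step, and neither of your two proposed fixes works. First, $\sum_{i\le k}\cos(\omega_i^\top\Delta)$ is not a martingale under the ORF coupling: the conditional law of $\omega_k$ given $\omega_1,\dots,\omega_{k-1}$ is supported on the orthogonal complement of their span, so $\mathbb{E}[\cos(\omega_k^\top\Delta)\mid\mathcal{F}_{k-1}]$ depends on the conditioning; you would need a Doob martingale together with a bounded-differences argument, which you do not sketch. Second, Theorems~\ref{var-theorem} and~\ref{ort-theorem} cannot be invoked for $\widehat{\mathrm{SM}}^{\mathrm{trig}}$: they are proved through the ``beautiful function'' framework (Definition~\ref{beauty-definition}), which requires an entire $g$ with \emph{nonnegative} power-series coefficients (satisfied by $g=\exp$ for $\widehat{\mathrm{SM}}^{+}$ and $\widehat{\mathrm{SMREG}}^{+}$), whereas $\cos$ has alternating-sign coefficients, so the sign control on the terms $\widehat{\Delta}(d_1,\dots,d_m)$ in those proofs is simply unavailable. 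Indeed the paper stresses that \emph{positivity} of the features is exactly what makes the non-asymptotic ORF comparison go through. The correct repair --- and what the paper actually does --- is to appeal to the negative-dependence theory of \citep{Lin2020DemystifyingOM} for ORF concentration with bounded trigonometric features, rather than attempting a constant-factor perturbation of Hoeffding or a transfer through the positive-feature theorems.
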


\vspace{-2.5mm}
\section{Experiments}
\label{sec:experiments}
We implemented our setup on top of pre-existing Transformer training code in Jax \citep{jax} optimized with just-in-time (\texttt{jax.jit}) compilation, and complement our theory with empirical evidence to demonstrate the practicality of FAVOR+ in multiple settings. Unless explicitly stated, a Performer replaces only the attention component with our method, while all other components are exactly the same as for the regular Transformer. For shorthand notation, we denote unidirectional/causal modelling as \textbf{(U)} and bidirectional/masked language modelling as \textbf{(B)}. 

In terms of baselines, we use other Transformer models for comparison, although some of them are restricted to only one case - e.g. Reformer \citep{reformer} is only (U), and Linformer \citep{linformer} is only (B). Furthermore, we use PG-19 \citep{compr} as an alternative (B) pretraining benchmark, as it is made for long-length sequence training compared to the (now publicly unavailable) BookCorpus \citep{bookcorpus} + Wikipedia dataset used in BERT \citep{bert} and Linformer. All model and tokenization hyperparameters are shown in Appendix \ref{appendix:hyperparameters}.

\begin{figure}[h]
  \centering
  \includegraphics[width=1.0\textwidth]{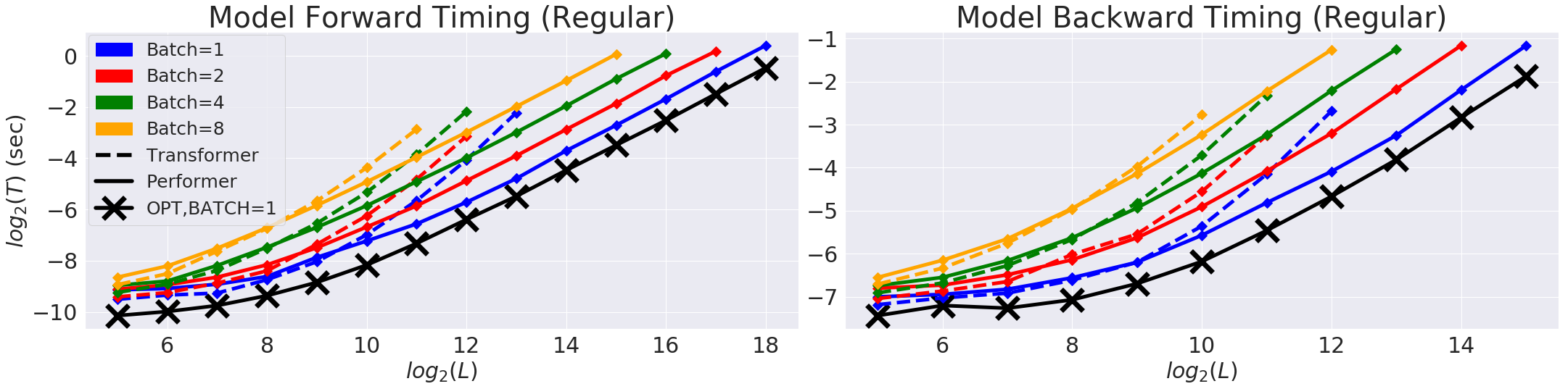}
  \caption{\small{Comparison of Transformer and Performer in terms of forward and backward pass speed and maximum $L$ allowed. "X" (OPT) denotes the maximum possible speedup achievable, when attention simply returns the $\mathbf{V}$-matrix. Plots shown up to when a model produces an out of memory error on a V100 GPU with 16GB. Vocabulary size used was 256. Best in color.}}
  \label{fig:backward_pass}
\end{figure}

\subsection{Computational costs}
We compared speed-wise the backward pass of the Transformer and the Performer in (B) setting, as it is one of the main computational bottlenecks during training, when using the regular default size $(n_{heads}, n_{layers}, d_{ff}, d) = (8,6,2048,512)$, where $d_{ff}$ denotes the width of the MLP layers. We observed (Fig. \ref{fig:backward_pass}) that in terms of $L$, the Performer reaches nearly linear time and sub-quadratic memory consumption (since the explicit $O(L^{2})$ attention matrix is not stored). In fact, the Performer achieves nearly optimal speedup and memory efficiency possible, depicted by the "X"-line when attention is replaced with the "identity function" simply returning the $\mathbf{V}$-matrix. The combination of both memory and backward pass efficiencies for large $L$ allows respectively, large batch training and lower wall clock time per gradient step. Extensive additional results are demonstrated in Appendix \ref{appendix:computation_costs_bidirectional} by varying layers, raw attention, and architecture sizes.

\subsection{Softmax attention approximation error}
\label{subsec:approx_error_compatibility}

We further examined the approximation error via FAVOR+ in Fig. \ref{fig:approx}.
We demonstrate that \textbf{1.} Orthogonal features produce lower error than unstructured (IID) features, \textbf{2.} Positive features produce lower error than trigonometric $\sin$/$\cos$ features. These two empirically validate the PORF mechanism.
\begin{figure}[H]
  \includegraphics[width=0.498\textwidth]{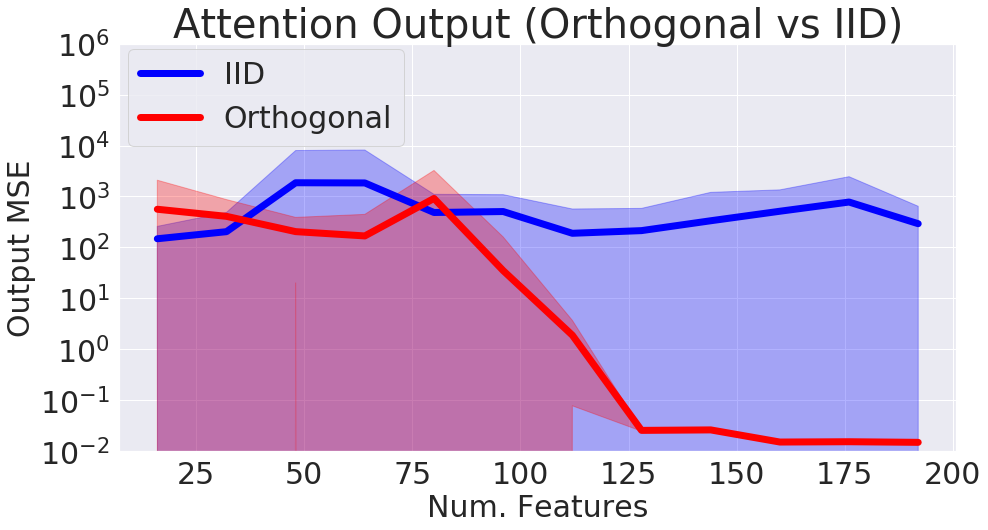}
  \includegraphics[width=0.498\textwidth]{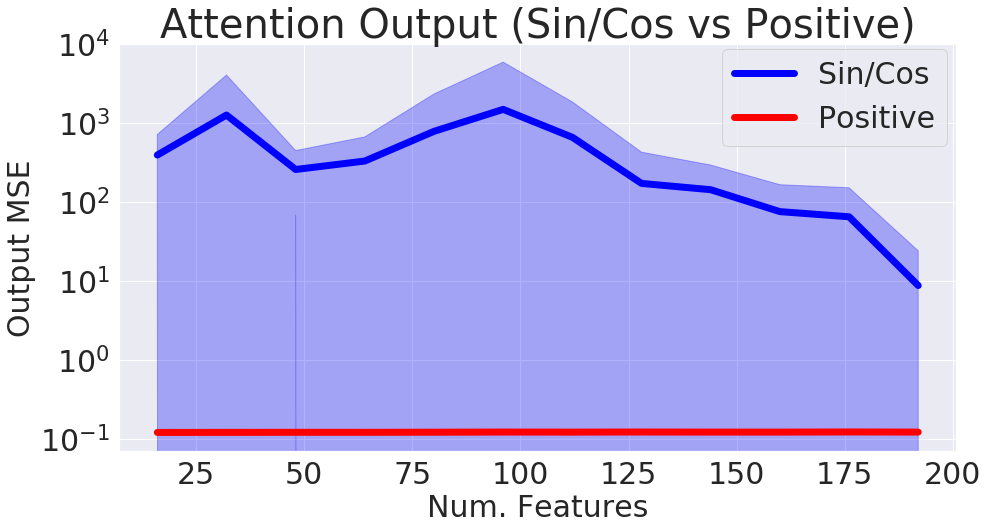}
  \caption{\small{MSE of the approximation output when comparing Orthogonal vs IID features and trigonometric $\sin$/$\cos$ vs positive features. We took $L = 4096, d = 16$, and varied the number of random samples $m$. Standard deviations shown across 15 samples of appropriately normalized random matrix input data.}}
  \label{fig:approx}
\end{figure}

To further improve overall approximation of attention blocks across multiple iterations which further improves training, random samples should be periodically redrawn (Fig. \ref{fig:backward_compatibility}, right). This is a cheap procedure, but can be further optimized (Appendix \ref{subsec:extensions}).

\subsection{Softmax approximation on Transformers}
\label{subsec:softmax_approx_transformer}

Even if the approximation of the attention mechanism is tight, small errors can easily propagate throughout multiple Transformer layers (e.g. MLPs, multiple heads), as we show in Fig. \ref{fig:appendix_approx} (Appendix). In other words, the model's \textit{Lipschitz constant} can easily scale up small attention approximation error, which means that very tight approximations may sometimes be needed. Thus, when applying FAVOR(+)'s softmax approximations on a Transformer model (i.e. "Performer-X-SOFTMAX"), we demonstrate that: 

\textbf{1.} Backwards compatibility with pretrained models is available as a benefit from softmax approximation, via small finetuning (required due to error propagation) even for trigonometric features (Fig. \ref{fig:backward_compatibility}, left) on the LM1B dataset \citep{lm1b}.
However, when on larger dataset PG-19, \textbf{2.} Positive (POS) softmax features (with redrawing) become crucial for achieving performance matching regular Transformers (Fig. \ref{fig:backward_compatibility}, right). 

\begin{figure}[H]
  \centering
  \includegraphics[width=0.99\textwidth]{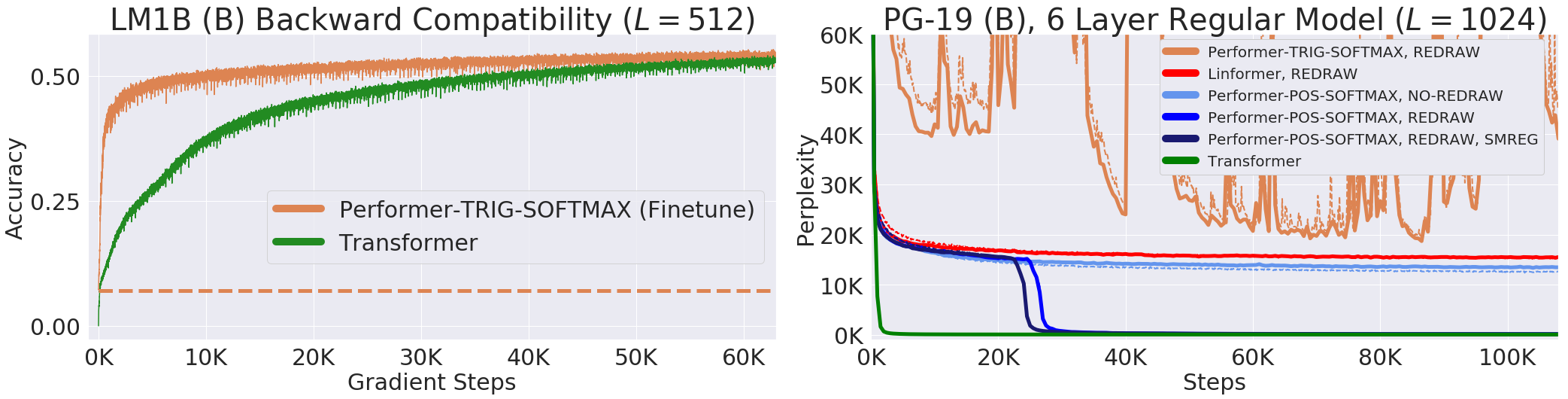}
  \caption{\small We transferred the original pretrained Transformer's weights into the Performer, which produces an initial non-zero 0.07 accuracy (dotted orange line), but quickly recovers accuracy in a small fraction of the original number of gradient steps. However on PG-19, Trigonometric (TRIG) softmax approximation becomes highly unstable (full curve in Appendix \ref{subsec:unstable_trig}), while positive features (POS) (without redrawing) and Linformer (which also approximates softmax) \textit{even with redrawn projections}, plateau at the same perplexity. Positive softmax with feature redrawing is necessary to match the Transformer, with SMREG (regularization from Sec. \ref{ort-theorem}) allowing faster convergence. Additional ablation studies over many attention kernels, showing also that trigonometric random features lead even to NaN values in training are given in Appendix \ref{subsec:appendix_generalized_attention}.}
  \label{fig:backward_compatibility}
\end{figure}

\subsection{Multiple layer training for proteins}
We further benchmark the Performer on both (U) and (B) cases by training a 36-layer model using protein sequences from the Jan. 2019 release of TrEMBL \citep{uniprot2019uniprot}, similar to \citep{progen}. In Fig. \ref{fig:big_benchmarking}, the Reformer and Linformer \textit{significantly drop in accuracy} on the protein dataset. Furthermore, the usefulness of generalized attention is evidenced by Performer-RELU (taking $f=\mathrm{ReLU}$ in Equation \ref{eq:feature}) achieving the highest accuracy in both (U) and (B) cases. Our proposed softmax approximation is also shown to be tight, achieving the same accuracy as the exact-softmax Transformer and confirming our theoretical claims from Section \ref{sec:theory}. 

\begin{figure}[H]
  \centering
  \includegraphics[width=1.0\textwidth]{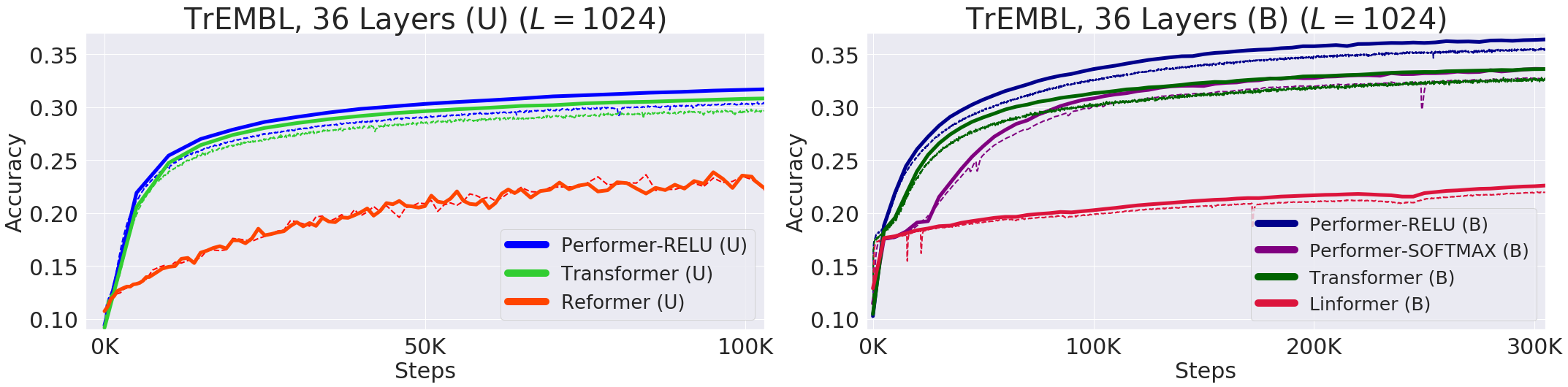}
  \caption{\small Train = Dashed, Validation = Solid. For TrEMBL, we used the exact same model parameters $(n_{heads}, n_{layers}, d_{ff}, d)  = (8, 36, 1024, 512)$ from \protect\citep{progen} for all runs. For fairness, all TrEMBL experiments used 16x16 TPU-v2's. Batch sizes were maximized for each separate run given the compute constraints. Hyperparameters can be found in Appendix \ref{appendix:hyperparameters}. Extended results including dataset statistics, out of distribution evaluations, and visualizations, can be found in Appendix \ref{appendix:protein_extended}.}
  \label{fig:big_benchmarking}
\end{figure}
\vspace{-2mm}
\subsection{Large length training - Common datasets}
On the standard (U) ImageNet64 benchmark from \citep{parmar} with $L = 12288$ which is unfeasible for regular Transformers, we set all models to use the same $(n_{heads}, d_{ff}, d)$ but varying $n_{layers}$. Performer/6-layers matches the Reformer/12-layers, while the Performer/12-layers matches the Reformer/24-layers (Fig. \ref{fig:im64_protein_8192}: left). Depending on hardware (TPU or GPU), we also found that the Performer can be 2x faster than the Reformer via Jax optimizations for the (U) setting. 

For a proof of principle study, we also create an initial protein benchmark for predicting interactions among groups of proteins by concatenating protein sequences to length $L = 8192$ from TrEMBL, long enough to model protein interaction networks without the large sequence alignments required by existing methods \citep{cong2019protein}. In this setting, a regular Transformer overloads memory even at a batch size of $1$ per chip, by a wide margin. Thus as a baseline, we were forced to use a significantly smaller variant, reducing to $(n_{heads}, n_{layers}, d_{ff}, d) = (8, \{1,2,3\}, 256, 256)$. Meanwhile, the Performer trains efficiently at a batch size of 8 per chip using the standard $(8, 6, 2048, 512)$ architecture. We see in Fig. \ref{fig:im64_protein_8192} (right subfigure) that the smaller Transformer ($n_{layer} = 3$) is quickly bounded at $\approx 19\%$, while the Performer is able to train continuously to $\approx 24\%$.

\begin{figure}[H]
  \centering
  \includegraphics[width=1.0\textwidth]{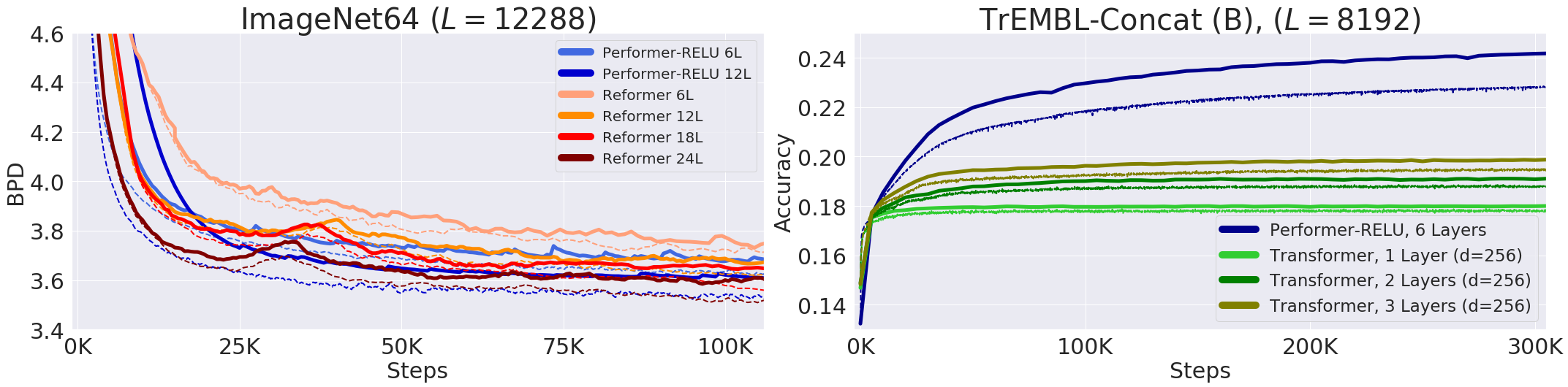}
  \caption{\small Train = Dashed, Validation = Solid. For ImageNet64, all models used the standard $(n_{heads}, d_{ff}, d) = (8, 2048, 512)$. We further show that our positive softmax approximation achieves the same performance as ReLU in Appendix \ref{subsec:softmax_unidirectional}. For concatenated TrEMBL, we varied $n_{layers} \in \{1,2,3\}$ for the smaller Transformer. Hyperparameters can be found in Appendix \ref{appendix:hyperparameters}.}
  \label{fig:im64_protein_8192}
\end{figure}

\vspace{-3mm}
\section{Conclusion}
\label{sec:conclusion}
We presented $\mathrm{Performer}$, a new type of Transformer, relying on our Fast Attention Via positive Orthogonal Random features (FAVOR+) mechanism to significantly improve space and time complexity of regular Transformers. Our mechanism provides to our knowledge the first effective unbiased estimation of the original softmax-based Transformer with linear space and time complexity and opens new avenues in the research on Transformers and the role of non-sparsifying attention mechanisms.



\newpage
\section{Broader impact}
\label{sec:broader_impact}
We believe that the presented algorithm can be impactful in various ways:

\textbf{Biology and Medicine:} Our method has the potential to directly impact research on biological sequence analysis by enabling the Transformer to be applied to much longer sequences without constraints on the structure of the attention matrix. The initial application that we consider is the prediction of interactions between proteins on the proteome scale. Recently published approaches require large evolutionary sequence alignments, a bottleneck for applications to mammalian genomes \citep{cong2019protein}. The potentially broad translational impact of applying these approaches to biological sequences was one of the main motivations of this work. We believe that modern bioinformatics can immensely benefit from new machine learning techniques with Transformers being among the most promising. Scaling up these methods to train faster more accurate language models opens the door to the ability to design sets of molecules with pre-specified interaction properties. These approaches could be used to augment existing physics-based design strategies that are of critical importance for example in the development of new nanoparticle vaccines \citep{marcandalli2019induction}.  

\textbf{Environment:} As we have shown, Performers with FAVOR+ are characterized by much lower compute costs and substantially lower space complexity which can be directly translated to $\mathrm{CO}_{2}$ emission reduction \citep{co2} and lower energy consumption \citep{energy}, as regular Transformers require very large computational resources.

\textbf{Research on Transformers:} We believe that our results can shape research on efficient Transformers architectures, guiding the field towards methods with strong mathematical foundations. Our research may also hopefully extend Transformers also beyond their standard scope (e.g. by considering the Generalized Attention mechanism and connections with kernels). Exploring scalable Transformer architectures that can handle $L$ of the order of magnitude few thousands and more, preserving accuracy of the baseline at the same time, is a gateway to new breakthroughs in bio-informatics, e.g. language modeling for proteins, as we explained in the paper. Our presented method can be potentially a first step.

\textbf{Backward Compatibility:} Our Performer can be used on the top of a regular pre-trained Transformer as opposed to other Transformer variants. Even if up-training is not required, FAVOR+ can still be used for fast inference with no loss of accuracy. We think about this backward compatibility as a very important additional feature of the presented techniques that might be particularly attractive for practitioners.

\textbf{Attention Beyond Transformers:} Finally, FAVOR+ can be applied to approximate exact attention also outside the scope of Transformers. This opens a large volume of new potential applications including: hierarchical attention networks (HANS) \citep{hans}, graph attention networks \citep{gran}, image processing \citep{attention_cvpr}, and reinforcement learning/robotics \citep{tang}.

\section{Acknowledgements}
We thank Nikita Kitaev and Wojciech Gajewski for multiple discussions on the Reformer, and also thank Aurko Roy and Ashish Vaswani for multiple discussions on the Routing Transformer. We further thank Joshua Meier, John Platt, and Tom Weingarten for many fruitful discussions on biological data and useful comments on this draft. We lastly thank Yi Tay and Mostafa Dehghani for discussions on comparing baselines.

Valerii Likhosherstov acknowledges support from the Cambridge Trust and DeepMind. Lucy Colwell acknowledges support from the Simons Foundation. Adrian Weller acknowledges support from 
a Turing AI Fellowship under grant EP/V025379/1, The Alan Turing Institute under EPSRC grant EP/N510129/1 and U/B/000074, and the Leverhulme Trust via CFI. 
\newpage
\bibliographystyle{iclr2021_conference}
\bibliography{performers}

\begin{thebibliography}{60}
\providecommand{\natexlab}[1]{#1}
\providecommand{\url}[1]{\texttt{#1}}
\expandafter\ifx\csname urlstyle\endcsname\relax
  \providecommand{\doi}[1]{doi: #1}\else
  \providecommand{\doi}{doi: \begingroup \urlstyle{rm}\Url}\fi

\bibitem[Bello et~al.(2019)Bello, Zoph, Vaswani, Shlens, and
  Le]{image_transformer}
Irwan Bello, Barret Zoph, Ashish Vaswani, Jonathon Shlens, and Quoc~V. Le.
\newblock Attention augmented convolutional networks.
\newblock \emph{CoRR}, abs/1904.09925, 2019.
\newblock URL \url{http://arxiv.org/abs/1904.09925}.

\bibitem[Beltagy et~al.(2020)Beltagy, Peters, and Cohan]{longformer}
Iz~Beltagy, Matthew~E. Peters, and Arman Cohan.
\newblock Longformer: The long-document transformer.
\newblock \emph{CoRR}, abs/2004.05150, 2020.
\newblock URL \url{https://arxiv.org/abs/2004.05150}.

\bibitem[Chan et~al.(2020)Chan, Saharia, Hinton, Norouzi, and Jaitly]{imputer}
William Chan, Chitwan Saharia, Geoffrey~E. Hinton, Mohammad Norouzi, and
  Navdeep Jaitly.
\newblock Imputer: Sequence modelling via imputation and dynamic programming.
\newblock \emph{CoRR}, abs/2002.08926, 2020.
\newblock URL \url{https://arxiv.org/abs/2002.08926}.

\bibitem[Chelba et~al.(2014)Chelba, Mikolov, Schuster, Ge, Brants, Koehn, and
  Robinson]{lm1b}
Ciprian Chelba, Tomas Mikolov, Mike Schuster, Qi~Ge, Thorsten Brants, Phillipp
  Koehn, and Tony Robinson.
\newblock One billion word benchmark for measuring progress in statistical
  language modeling.
\newblock In \emph{{INTERSPEECH} 2014, 15th Annual Conference of the
  International Speech Communication Association, Singapore, September 14-18,
  2014}, pp.\  2635--2639, 2014.

\bibitem[Chelba et~al.(2020)Chelba, Chen, Bapna, and Shazeer]{chelba}
Ciprian Chelba, Mia~Xu Chen, Ankur Bapna, and Noam Shazeer.
\newblock Faster transformer decoding: N-gram masked self-attention.
\newblock \emph{CoRR}, abs/2001.04589, 2020.
\newblock URL \url{https://arxiv.org/abs/2001.04589}.

\bibitem[Chen et~al.(2018)Chen, Firat, Bapna, Johnson, Macherey, Foster, Jones,
  Schuster, Shazeer, Parmar, Vaswani, Uszkoreit, Kaiser, Chen, Wu, and
  Hughes]{nmt}
Mia~Xu Chen, Orhan Firat, Ankur Bapna, Melvin Johnson, Wolfgang Macherey,
  George~F. Foster, Llion Jones, Mike Schuster, Noam Shazeer, Niki Parmar,
  Ashish Vaswani, Jakob Uszkoreit, Lukasz Kaiser, Zhifeng Chen, Yonghui Wu, and
  Macduff Hughes.
\newblock The best of both worlds: Combining recent advances in neural machine
  translation.
\newblock In \emph{Proceedings of the 56th Annual Meeting of the Association
  for Computational Linguistics, {ACL} 2018, Melbourne, Australia, July 15-20,
  2018, Volume 1: Long Papers}, pp.\  76--86. Association for Computational
  Linguistics, 2018.
\newblock \doi{10.18653/v1/P18-1008}.
\newblock URL \url{https://www.aclweb.org/anthology/P18-1008/}.

\bibitem[Child et~al.(2019)Child, Gray, Radford, and Sutskever]{sparsetr}
Rewon Child, Scott Gray, Alec Radford, and Ilya Sutskever.
\newblock Generating long sequences with sparse transformers.
\newblock \emph{CoRR}, abs/1904.10509, 2019.
\newblock URL \url{http://arxiv.org/abs/1904.10509}.

\bibitem[Choromanski et~al.(2018{\natexlab{a}})Choromanski, Downey, and
  Boots]{psrnn}
Krzysztof Choromanski, Carlton Downey, and Byron Boots.
\newblock Initialization matters: Orthogonal predictive state recurrent neural
  networks.
\newblock In \emph{6th International Conference on Learning Representations,
  {ICLR} 2018, Vancouver, BC, Canada, April 30 - May 3, 2018, Conference Track
  Proceedings}. OpenReview.net, 2018{\natexlab{a}}.
\newblock URL \url{https://openreview.net/forum?id=HJJ23bW0b}.

\bibitem[Choromanski et~al.(2018{\natexlab{b}})Choromanski, Rowland,
  Sarl{\'{o}}s, Sindhwani, Turner, and Weller]{geom}
Krzysztof Choromanski, Mark Rowland, Tam{\'{a}}s Sarl{\'{o}}s, Vikas Sindhwani,
  Richard~E. Turner, and Adrian Weller.
\newblock The geometry of random features.
\newblock In \emph{International Conference on Artificial Intelligence and
  Statistics, {AISTATS} 2018, 9-11 April 2018, Playa Blanca, Lanzarote, Canary
  Islands, Spain}, volume~84 of \emph{Proceedings of Machine Learning
  Research}, pp.\  1--9. {PMLR}, 2018{\natexlab{b}}.
\newblock URL \url{http://proceedings.mlr.press/v84/choromanski18a.html}.

\bibitem[Choromanski et~al.(2019{\natexlab{a}})Choromanski, Pacchiano,
  Pennington, and Tang]{kama}
Krzysztof Choromanski, Aldo Pacchiano, Jeffrey Pennington, and Yunhao Tang.
\newblock {KAMA-NN}s: Low-dimensional rotation based neural networks.
\newblock In \emph{The 22nd International Conference on Artificial Intelligence
  and Statistics, {AISTATS} 2019, 16-18 April 2019, Naha, Okinawa, Japan},
  volume~89 of \emph{Proceedings of Machine Learning Research}, pp.\  236--245.
  {PMLR}, 2019{\natexlab{a}}.
\newblock URL \url{http://proceedings.mlr.press/v89/choromanski19a.html}.

\bibitem[Choromanski et~al.(2019{\natexlab{b}})Choromanski, Rowland, Chen, and
  Weller]{uni}
Krzysztof Choromanski, Mark Rowland, Wenyu Chen, and Adrian Weller.
\newblock Unifying orthogonal {Monte Carlo} methods.
\newblock In \emph{Proceedings of the 36th International Conference on Machine
  Learning, {ICML} 2019, 9-15 June 2019, Long Beach, California, {USA}},
  volume~97 of \emph{Proceedings of Machine Learning Research}, pp.\
  1203--1212. {PMLR}, 2019{\natexlab{b}}.
\newblock URL \url{http://proceedings.mlr.press/v97/choromanski19a.html}.

\bibitem[Choromanski et~al.(2017)Choromanski, Rowland, and Weller]{unreas}
Krzysztof~Marcin Choromanski, Mark Rowland, and Adrian Weller.
\newblock The unreasonable effectiveness of structured random orthogonal
  embeddings.
\newblock In \emph{Advances in Neural Information Processing Systems 30: Annual
  Conference on Neural Information Processing Systems 2017, 4-9 December 2017,
  Long Beach, CA, {USA}}, pp.\  219--228, 2017.

\bibitem[Clevert et~al.(2016)Clevert, Unterthiner, and Hochreiter]{elu}
Djork{-}Arn{\'{e}} Clevert, Thomas Unterthiner, and Sepp Hochreiter.
\newblock Fast and accurate deep network learning by exponential linear units
  (elus).
\newblock In \emph{4th International Conference on Learning Representations,
  {ICLR} 2016, San Juan, Puerto Rico, May 2-4, 2016, Conference Track
  Proceedings}, 2016.
\newblock URL \url{http://arxiv.org/abs/1511.07289}.

\bibitem[Cong et~al.(2019)Cong, Anishchenko, Ovchinnikov, and
  Baker]{cong2019protein}
Qian Cong, Ivan Anishchenko, Sergey Ovchinnikov, and David Baker.
\newblock Protein interaction networks revealed by proteome coevolution.
\newblock \emph{Science}, 365\penalty0 (6449):\penalty0 185--189, 2019.

\bibitem[Consortium(2019)]{uniprot2019uniprot}
UniProt Consortium.
\newblock Uniprot: a worldwide hub of protein knowledge.
\newblock \emph{Nucleic acids research}, 47\penalty0 (D1):\penalty0 D506--D515,
  2019.

\bibitem[Cormen et~al.(2009)Cormen, Leiserson, Rivest, and Stein]{cormen}
Thomas~H. Cormen, Charles~E. Leiserson, Ronald~L. Rivest, and Clifford Stein.
\newblock \emph{Introduction to Algorithms, 3rd Edition}.
\newblock {MIT} Press, 2009.
\newblock ISBN 978-0-262-03384-8.
\newblock URL \url{http://mitpress.mit.edu/books/introduction-algorithms}.

\bibitem[Dai et~al.(2019)Dai, Yang, Yang, Cohen, Carbonell, Le, and
  Salakhutdinov]{transformerxl}
Zihang Dai, Zhilin Yang, Yiming Yang, William~W. Cohen, Jaime Carbonell,
  Quoc~V. Le, and Ruslan Salakhutdinov.
\newblock Transformer-{XL}: Language modeling with longer-term dependency,
  2019.
\newblock URL \url{https://openreview.net/forum?id=HJePno0cYm}.

\bibitem[Dehghani et~al.(2019)Dehghani, Gouws, Vinyals, Uszkoreit, and
  Kaiser]{universal_t}
Mostafa Dehghani, Stephan Gouws, Oriol Vinyals, Jakob Uszkoreit, and Lukasz
  Kaiser.
\newblock Universal transformers.
\newblock In \emph{7th International Conference on Learning Representations,
  {ICLR} 2019, New Orleans, LA, USA, May 6-9, 2019}. OpenReview.net, 2019.
\newblock URL \url{https://openreview.net/forum?id=HyzdRiR9Y7}.

\bibitem[Devlin et~al.(2018)Devlin, Chang, Lee, and Toutanova]{bert}
Jacob Devlin, Ming{-}Wei Chang, Kenton Lee, and Kristina Toutanova.
\newblock {BERT:} pre-training of deep bidirectional transformers for language
  understanding.
\newblock \emph{CoRR}, abs/1810.04805, 2018.
\newblock URL \url{http://arxiv.org/abs/1810.04805}.

\bibitem[Du et~al.(2020)Du, Meier, Ma, Fergus, and Rives]{du2020energy}
Yilun Du, Joshua Meier, Jerry Ma, Rob Fergus, and Alexander Rives.
\newblock Energy-based models for atomic-resolution protein conformations.
\newblock \emph{arXiv preprint arXiv:2004.13167}, 2020.

\bibitem[Elnaggar et~al.(2019)Elnaggar, Heinzinger, Dallago, and
  Rost]{elnaggar2019end}
Ahmed Elnaggar, Michael Heinzinger, Christian Dallago, and Burkhard Rost.
\newblock End-to-end multitask learning, from protein language to protein
  features without alignments.
\newblock \emph{bioRxiv}, pp.\  864405, 2019.

\bibitem[Frostig et~al.(2018)Frostig, Johnson, and Leary]{jax}
Roy Frostig, Matthew Johnson, and Chris Leary.
\newblock Compiling machine learning programs via high-level tracing.
\newblock In \emph{Conference on Machine Learning and Systems 2018}, 2018.
\newblock URL \url{http://www.sysml.cc/doc/2018/146.pdf}.

\bibitem[Fu et~al.(2019)Fu, Liu, Tian, Li, Bao, Fang, and Lu]{attention_cvpr}
Jun Fu, Jing Liu, Haijie Tian, Yong Li, Yongjun Bao, Zhiwei Fang, and Hanqing
  Lu.
\newblock Dual attention network for scene segmentation.
\newblock In \emph{{IEEE} Conference on Computer Vision and Pattern
  Recognition, {CVPR} 2019, Long Beach, CA, USA, June 16-20, 2019}, pp.\
  3146--3154, 2019.

\bibitem[Gulati et~al.(2020)Gulati, Qin, Chiu, Parmar, Zhang, Yu, Han, Wang,
  Zhang, Wu, and Pang]{conformer}
Anmol Gulati, James Qin, Chung-Cheng Chiu, Niki Parmar, Yu~Zhang, Jiahui Yu,
  Wei Han, Shibo Wang, Zhengdong Zhang, Yonghui Wu, and Ruoming Pang.
\newblock Conformer: Convolution-augmented transformer for speech recognition,
  2020.

\bibitem[Huang et~al.(2019)Huang, Vaswani, Uszkoreit, Simon, Hawthorne,
  Shazeer, Dai, Hoffman, Dinculescu, and Eck]{simon}
Cheng{-}Zhi~Anna Huang, Ashish Vaswani, Jakob Uszkoreit, Ian Simon, Curtis
  Hawthorne, Noam Shazeer, Andrew~M. Dai, Matthew~D. Hoffman, Monica
  Dinculescu, and Douglas Eck.
\newblock Music transformer: Generating music with long-term structure.
\newblock In \emph{7th International Conference on Learning Representations,
  {ICLR} 2019, New Orleans, LA, USA, May 6-9, 2019}. OpenReview.net, 2019.
\newblock URL \url{https://openreview.net/forum?id=rJe4ShAcF7}.

\bibitem[Ingraham et~al.(2019)Ingraham, Garg, Barzilay, and
  Jaakkola]{ingraham2019generative}
John Ingraham, Vikas Garg, Regina Barzilay, and Tommi Jaakkola.
\newblock Generative models for graph-based protein design.
\newblock In \emph{Advances in Neural Information Processing Systems}, pp.\
  15794--15805, 2019.

\bibitem[Katharopoulos et~al.(2020)Katharopoulos, Vyas, Pappas, and
  Fleuret]{trans-rnns}
Angelos Katharopoulos, Apoorv Vyas, Nikolaos Pappas, and Fran{\c{c}}ois
  Fleuret.
\newblock Transformers are rnns: Fast autoregressive transformers with linear
  attention.
\newblock \emph{CoRR}, abs/2006.16236, 2020.
\newblock URL \url{https://arxiv.org/abs/2006.16236}.

\bibitem[Kitaev et~al.(2020)Kitaev, Kaiser, and Levskaya]{reformer}
Nikita Kitaev, Lukasz Kaiser, and Anselm Levskaya.
\newblock Reformer: The efficient transformer.
\newblock In \emph{8th International Conference on Learning Representations,
  {ICLR} 2020, Addis Ababa, Ethiopia, April 26-30, 2020}. OpenReview.net, 2020.
\newblock URL \url{https://openreview.net/forum?id=rkgNKkHtvB}.

\bibitem[Kovaleva et~al.(2019)Kovaleva, Romanov, Rogers, and
  Rumshisky]{kovaleva2019revealing}
Olga Kovaleva, Alexey Romanov, Anna Rogers, and Anna Rumshisky.
\newblock Revealing the dark secrets of bert.
\newblock \emph{arXiv preprint arXiv:1908.08593}, 2019.

\bibitem[Kudo \& Richardson(2018)Kudo and Richardson]{sentencepiece}
Taku Kudo and John Richardson.
\newblock Sentencepiece: {A} simple and language independent subword tokenizer
  and detokenizer for neural text processing.
\newblock \emph{CoRR}, abs/1808.06226, 2018.
\newblock URL \url{http://arxiv.org/abs/1808.06226}.

\bibitem[Ladner \& Fischer(1980)Ladner and Fischer]{cumsum}
Richard~E. Ladner and Michael~J. Fischer.
\newblock Parallel prefix computation.
\newblock \emph{J. ACM}, 27\penalty0 (4):\penalty0 831–838, October 1980.
\newblock ISSN 0004-5411.
\newblock \doi{10.1145/322217.322232}.
\newblock URL \url{https://doi.org/10.1145/322217.322232}.

\bibitem[Lin et~al.(2020)Lin, Chen, Zhang, Laroche, and
  Choromanski]{Lin2020DemystifyingOM}
Han Lin, Haoxian Chen, Tianyi Zhang, Cl{\'e}ment Laroche, and Krzysztof
  Choromanski.
\newblock Demystifying orthogonal {Monte Carlo} and beyond.
\newblock \emph{CoRR}, abs/2005.13590, 2020.

\bibitem[Luo et~al.(2020)Luo, Zhang, Lei, and Xie]{luo}
Haoneng Luo, Shiliang Zhang, Ming Lei, and Lei Xie.
\newblock Simplified self-attention for transformer-based end-to-end speech
  recognition.
\newblock \emph{CoRR}, abs/2005.10463, 2020.
\newblock URL \url{https://arxiv.org/abs/2005.10463}.

\bibitem[Madani et~al.(2020)Madani, McCann, Naik, Keskar, Anand, Eguchi, Huang,
  and Socher]{progen}
Ali Madani, Bryan McCann, Nikhil Naik, Nitish~Shirish Keskar, Namrata Anand,
  Raphael~R. Eguchi, Po{-}Ssu Huang, and Richard Socher.
\newblock Progen: Language modeling for protein generation.
\newblock \emph{CoRR}, abs/2004.03497, 2020.
\newblock URL \url{https://arxiv.org/abs/2004.03497}.

\bibitem[Marcandalli et~al.(2019)Marcandalli, Fiala, Ols, Perotti, de~van~der
  Schueren, Snijder, Hodge, Benhaim, Ravichandran, Carter,
  et~al.]{marcandalli2019induction}
Jessica Marcandalli, Brooke Fiala, Sebastian Ols, Michela Perotti, Willem
  de~van~der Schueren, Joost Snijder, Edgar Hodge, Mark Benhaim, Rashmi
  Ravichandran, Lauren Carter, et~al.
\newblock Induction of potent neutralizing antibody responses by a designed
  protein nanoparticle vaccine for respiratory syncytial virus.
\newblock \emph{Cell}, 176\penalty0 (6):\penalty0 1420--1431, 2019.

\bibitem[Nangia \& Bowman(2018)Nangia and Bowman]{listops}
Nikita Nangia and Samuel~R. Bowman.
\newblock Listops: {A} diagnostic dataset for latent tree learning.
\newblock In \emph{Proceedings of the 2018 Conference of the North American
  Chapter of the Association for Computational Linguistics, {NAACL-HLT} 2018,
  New Orleans, Louisiana, USA, June 2-4, 2018, Student Research Workshop}, pp.\
   92--99, 2018.
\newblock \doi{10.18653/v1/n18-4013}.
\newblock URL \url{https://doi.org/10.18653/v1/n18-4013}.

\bibitem[Parmar et~al.(2018)Parmar, Vaswani, Uszkoreit, Kaiser, Shazeer, Ku,
  and Tran]{parmar}
Niki Parmar, Ashish Vaswani, Jakob Uszkoreit, Lukasz Kaiser, Noam Shazeer,
  Alexander Ku, and Dustin Tran.
\newblock Image transformer.
\newblock In \emph{Proceedings of the 35th International Conference on Machine
  Learning, {ICML} 2018, Stockholmsm{\"{a}}ssan, Stockholm, Sweden, July 10-15,
  2018}, volume~80 of \emph{Proceedings of Machine Learning Research}, pp.\
  4052--4061. {PMLR}, 2018.
\newblock URL \url{http://proceedings.mlr.press/v80/parmar18a.html}.

\bibitem[Rae et~al.(2020)Rae, Potapenko, Jayakumar, Hillier, and
  Lillicrap]{compr}
Jack~W. Rae, Anna Potapenko, Siddhant~M. Jayakumar, Chloe Hillier, and
  Timothy~P. Lillicrap.
\newblock Compressive transformers for long-range sequence modelling.
\newblock In \emph{International Conference on Learning Representations}, 2020.
\newblock URL \url{https://openreview.net/forum?id=SylKikSYDH}.

\bibitem[Rahimi \& Recht(2007)Rahimi and Recht]{fourierapprox}
Ali Rahimi and Benjamin Recht.
\newblock Random features for large-scale kernel machines.
\newblock In \emph{Advances in Neural Information Processing Systems 20,
  Proceedings of the Twenty-First Annual Conference on Neural Information
  Processing Systems, Vancouver, British Columbia, Canada, December 3-6, 2007},
  pp.\  1177--1184. Curran Associates, Inc., 2007.
\newblock URL
  \url{http://papers.nips.cc/paper/3182-random-features-for-large-scale-kernel-machines}.

\bibitem[Rives et~al.(2019)Rives, Goyal, Meier, Guo, Ott, Zitnick, Ma, and
  Fergus]{rives}
Alexander Rives, Siddharth Goyal, Joshua Meier, Demi Guo, Myle Ott, C.~Zitnick,
  Jerry Ma, and Rob Fergus.
\newblock Biological structure and function emerge from scaling unsupervised
  learning to 250 million protein sequences.
\newblock \emph{bioArxiv}, 04 2019.
\newblock \doi{10.1101/622803}.

\bibitem[Rowland et~al.(2019)Rowland, Hron, Tang, Choromanski, Sarl{\'{o}}s,
  and Weller]{hron}
Mark Rowland, Jiri Hron, Yunhao Tang, Krzysztof Choromanski, Tam{\'{a}}s
  Sarl{\'{o}}s, and Adrian Weller.
\newblock Orthogonal estimation of {W}asserstein distances.
\newblock In \emph{The 22nd International Conference on Artificial Intelligence
  and Statistics, {AISTATS} 2019, 16-18 April 2019, Naha, Okinawa, Japan},
  volume~89 of \emph{Proceedings of Machine Learning Research}, pp.\  186--195.
  {PMLR}, 2019.
\newblock URL \url{http://proceedings.mlr.press/v89/rowland19a.html}.

\bibitem[Roy et~al.(2020)Roy, Saffar, Vaswani, and Grangier]{routing_t}
Aurko Roy, Mohammad Saffar, Ashish Vaswani, and David Grangier.
\newblock Efficient content-based sparse attention with routing transformers.
\newblock \emph{CoRR}, abs/2003.05997, 2020.
\newblock URL \url{https://arxiv.org/abs/2003.05997}.

\bibitem[Shen et~al.(2018)Shen, Zhang, Yi, Yan, and Zhao]{shen}
Zhuoran Shen, Mingyuan Zhang, Shuai Yi, Junjie Yan, and Haiyu Zhao.
\newblock Factorized attention: Self-attention with linear complexities.
\newblock \emph{CoRR}, abs/1812.01243, 2018.
\newblock URL \url{http://arxiv.org/abs/1812.01243}.

\bibitem[Strubell et~al.(2019)Strubell, Ganesh, and McCallum]{co2}
Emma Strubell, Ananya Ganesh, and Andrew McCallum.
\newblock Energy and policy considerations for deep learning in {NLP}.
\newblock \emph{CoRR}, abs/1906.02243, 2019.
\newblock URL \url{http://arxiv.org/abs/1906.02243}.

\bibitem[Tang et~al.(2020)Tang, Nguyen, and Ha]{tang}
Yujin Tang, Duong Nguyen, and David Ha.
\newblock Neuroevolution of self-interpretable agents.
\newblock \emph{CoRR}, abs/2003.08165, 2020.
\newblock URL \url{https://arxiv.org/abs/2003.08165}.

\bibitem[Tay et~al.(2021)Tay, Dehghani, Abnar, Shen, Bahri, Pham, Rao, Yang,
  Ruder, and Metzler]{lra}
Yi~Tay, Mostafa Dehghani, Samira Abnar, Yikang Shen, Dara Bahri, Philip Pham,
  Jinfeng Rao, Liu Yang, Sebastian Ruder, and Donald Metzler.
\newblock Long range arena: {A} benchmark for efficient transformers.
\newblock 2021.

\bibitem[Tsai et~al.(2019)Tsai, Bai, Yamada, Morency, and
  Salakhutdinov]{tsai2019transformer}
Yao-Hung~Hubert Tsai, Shaojie Bai, Makoto Yamada, Louis-Philippe Morency, and
  Ruslan Salakhutdinov.
\newblock Transformer dissection: An unified understanding for transformer’s
  attention via the lens of kernel.
\newblock In \emph{Proceedings of the 2019 Conference on Empirical Methods in
  Natural Language Processing and the 9th International Joint Conference on
  Natural Language Processing (EMNLP-IJCNLP)}, pp.\  4335--4344, 2019.

\bibitem[Vaswani et~al.(2017)Vaswani, Shazeer, Parmar, Uszkoreit, Jones, Gomez,
  Kaiser, and Polosukhin]{transformer}
Ashish Vaswani, Noam Shazeer, Niki Parmar, Jakob Uszkoreit, Llion Jones,
  Aidan~N Gomez, \L~ukasz Kaiser, and Illia Polosukhin.
\newblock Attention is all you need.
\newblock In \emph{Advances in Neural Information Processing Systems 30}, pp.\
  5998--6008. Curran Associates, Inc., 2017.
\newblock URL
  \url{http://papers.nips.cc/paper/7181-attention-is-all-you-need.pdf}.

\bibitem[Velickovic et~al.(2018)Velickovic, Cucurull, Casanova, Romero,
  Li{\`{o}}, and Bengio]{gran}
Petar Velickovic, Guillem Cucurull, Arantxa Casanova, Adriana Romero, Pietro
  Li{\`{o}}, and Yoshua Bengio.
\newblock Graph attention networks.
\newblock In \emph{6th International Conference on Learning Representations,
  {ICLR} 2018, Vancouver, BC, Canada, April 30 - May 3, 2018, Conference Track
  Proceedings}. OpenReview.net, 2018.
\newblock URL \url{https://openreview.net/forum?id=rJXMpikCZ}.

\bibitem[Vig(2019)]{vig2019multiscale}
Jesse Vig.
\newblock A multiscale visualization of attention in the transformer model.
\newblock \emph{arXiv preprint arXiv:1906.05714}, 2019.

\bibitem[Vig \& Belinkov(2019)Vig and Belinkov]{analyzing_attention}
Jesse Vig and Yonatan Belinkov.
\newblock Analyzing the structure of attention in a transformer language model.
\newblock \emph{CoRR}, abs/1906.04284, 2019.
\newblock URL \url{http://arxiv.org/abs/1906.04284}.

\bibitem[Vig et~al.(2020)Vig, Madani, Varshney, Xiong, Socher, and
  Rajani]{bertology}
Jesse Vig, Ali Madani, Lav~R. Varshney, Caiming Xiong, Richard Socher, and
  Nazneen~Fatema Rajani.
\newblock Bertology meets biology: Interpreting attention in protein language
  models.
\newblock \emph{CoRR}, abs/2006.15222, 2020.
\newblock URL \url{https://arxiv.org/abs/2006.15222}.

\bibitem[Vinyals et~al.(2015)Vinyals, Fortunato, and Jaitly]{pointer}
Oriol Vinyals, Meire Fortunato, and Navdeep Jaitly.
\newblock Pointer networks.
\newblock In \emph{Advances in Neural Information Processing Systems 28: Annual
  Conference on Neural Information Processing Systems 2015, December 7-12,
  2015, Montreal, Quebec, Canada}, pp.\  2692--2700, 2015.

\bibitem[Wang et~al.(2020)Wang, Li, Khabsa, Fang, and Ma]{linformer}
Sinong Wang, Belinda~Z. Li, Madian Khabsa, Han Fang, and Hao Ma.
\newblock Linformer: Self-attention with linear complexity.
\newblock \emph{CoRR}, abs/2006.04768, 2020.
\newblock URL \url{https://arxiv.org/abs/2006.04768}.

\bibitem[Xiao et~al.(2019)Xiao, Li, Zhu, Yu, and Liu]{shared_weights}
Tong Xiao, Yinqiao Li, Jingbo Zhu, Zhengtao Yu, and Tongran Liu.
\newblock Sharing attention weights for fast transformer.
\newblock In \emph{Proceedings of the Twenty-Eighth International Joint
  Conference on Artificial Intelligence, {IJCAI} 2019, Macao, China, August
  10-16, 2019}, pp.\  5292--5298. ijcai.org, 2019.
\newblock \doi{10.24963/ijcai.2019/735}.
\newblock URL \url{https://doi.org/10.24963/ijcai.2019/735}.

\bibitem[Yang et~al.(2016)Yang, Yang, Dyer, He, Smola, and Hovy]{hans}
Zichao Yang, Diyi Yang, Chris Dyer, Xiaodong He, Alexander~J. Smola, and
  Eduard~H. Hovy.
\newblock Hierarchical attention networks for document classification.
\newblock In \emph{{NAACL} {HLT} 2016, The 2016 Conference of the North
  American Chapter of the Association for Computational Linguistics: Human
  Language Technologies, San Diego California, USA, June 12-17, 2016}, pp.\
  1480--1489. The Association for Computational Linguistics, 2016.
\newblock \doi{10.18653/v1/n16-1174}.
\newblock URL \url{https://doi.org/10.18653/v1/n16-1174}.

\bibitem[You et~al.(2020)You, Li, Xu, Fu, Wang, Chen, Baraniuk, Wang, and
  Lin]{energy}
Haoran You, Chaojian Li, Pengfei Xu, Yonggan Fu, Yue Wang, Xiaohan Chen,
  Richard~G. Baraniuk, Zhangyang Wang, and Yingyan Lin.
\newblock Drawing early-bird tickets: Toward more efficient training of deep
  networks.
\newblock In \emph{International Conference on Learning Representations}, 2020.
\newblock URL \url{https://openreview.net/forum?id=BJxsrgStvr}.

\bibitem[Yu et~al.(2016)Yu, Suresh, Choromanski, Holtmann{-}Rice, and
  Kumar]{ort}
Felix~X. Yu, Ananda~Theertha Suresh, Krzysztof~Marcin Choromanski, Daniel~N.
  Holtmann{-}Rice, and Sanjiv Kumar.
\newblock Orthogonal random features.
\newblock In \emph{Advances in Neural Information Processing Systems 29: Annual
  Conference on Neural Information Processing Systems 2016, December 5-10,
  2016, Barcelona, Spain}, pp.\  1975--1983, 2016.

\bibitem[Zambaldi et~al.(2019)Zambaldi, Raposo, Santoro, Bapst, Li, Babuschkin,
  Tuyls, Reichert, Lillicrap, Lockhart, Shanahan, Langston, Pascanu, Botvinick,
  Vinyals, and Battaglia]{relational}
Vin{\'{\i}}cius~Flores Zambaldi, David Raposo, Adam Santoro, Victor Bapst,
  Yujia Li, Igor Babuschkin, Karl Tuyls, David~P. Reichert, Timothy~P.
  Lillicrap, Edward Lockhart, Murray Shanahan, Victoria Langston, Razvan
  Pascanu, Matthew Botvinick, Oriol Vinyals, and Peter~W. Battaglia.
\newblock Deep reinforcement learning with relational inductive biases.
\newblock In \emph{7th International Conference on Learning Representations,
  {ICLR} 2019, New Orleans, LA, USA, May 6-9, 2019}, 2019.

\bibitem[Zhu et~al.(2015)Zhu, Kiros, Zemel, Salakhutdinov, Urtasun, Torralba,
  and Fidler]{bookcorpus}
Yukun Zhu, Ryan Kiros, Richard~S. Zemel, Ruslan Salakhutdinov, Raquel Urtasun,
  Antonio Torralba, and Sanja Fidler.
\newblock Aligning books and movies: Towards story-like visual explanations by
  watching movies and reading books.
\newblock In \emph{2015 {IEEE} International Conference on Computer Vision,
  {ICCV} 2015, Santiago, Chile, December 7-13, 2015}, pp.\  19--27, 2015.
\newblock \doi{10.1109/ICCV.2015.11}.
\newblock URL \url{https://doi.org/10.1109/ICCV.2015.11}.

\end{thebibliography}

\newpage

\section*{APPENDIX: Rethinking Attention with Performers}
\setcounter{section}{0}
\renewcommand{\thesection}{\Alph{section}}
\label{sec:appendix}

\section{Hyperparameters for experiments}
\label{appendix:hyperparameters}
This optimal setting (including comparisons to approximate softmax) we use for the Performer is specified in the Generalized Attention (Subsec. \ref{subsec:generalized_default}), and \textbf{unless specifically mentioned (e.g. using name "Performer-SOFTMAX"), "Performer" refers to using this generalized attention setting.}

\subsection{Metrics}
We report the following evaluation metrics:
\vspace{-3mm}
\begin{enumerate}[itemsep=0.2mm]
\item \textbf{Accuracy}: For unidirectional models, we measure the accuracy on next-token prediction, averaged across all sequence positions in the dataset. For bidirectional models, we mask each token with $15\%$ probability (same as \citep{bert}) and measure accuracy across the masked positions.
\item \textbf{Perplexity}: For unidirectional models, we measure perplexity across all sequence positions in the dataset. For bidirectional models, similar to the accuracy case, we measure perplexity across the masked positions.
\item \textbf{Bits Per Dimension/Character (BPD/BPC)}: This calculated by loss divided by $\ln(2)$.
\end{enumerate}
\vspace{-3mm}
We used the full evaluation dataset for TrEMBL in the plots in the main section, while for other datasets such as ImageNet64 and PG-19 which have very large evaluation dataset sizes, we used random batches (>2048 samples) for plotting curves. 

\subsubsection{PG-19 Preprocessing}

The PG-19 dataset \citep{compr} is presented as a challenging long range text modeling task. It consists of out-of-copyright Project Gutenberg books published before 1919. It does not have a fixed vocabulary size, instead opting for any tokenization which can model an arbitrary string of text. We use a unigram SentencePiece vocabulary \citep{sentencepiece} with 32768 tokens, which maintains whitespace and is completely invertible to the original book text. Perplexities are calculated as the average log-likelihood per token, multiplied by the ratio of the sentencepiece tokenization to number of tokens in the original dataset. The original dataset token count per split is: train=1973136207, validation=3007061, test=6966499. Our sentencepiece tokenization yields the following token counts per split: train=3084760726, valid=4656945, and test=10699704. This gives log likelihood multipliers of train=1.5634, valid=1.5487, test=1.5359 per split before computing perplexity, which is equal to $\exp(\text{log likelihood multiplier} * \text{loss})$.

Preprocessing for TrEMBL is extensively explained in Appendix \ref{appendix:protein_extended}.

\subsection{Training Hyperparameters} 
Unless specifically stated, all Performer + Transformer runs by default used $0.5$ grad clip, $0.1$ weight decay, $0.1$ dropout, $10^{-3}$ fixed learning rate with Adam hyperparameters $(\beta_{1} = 0.9, \beta_{2} = 0.98, \epsilon = 10^{-9})$, with batch size maximized (until TPU memory overload) for a specific model. 

All 36-layer protein experiments used the same amount of compute (i.e. 16x16 TPU-v2, 8GB per chip). For concatenated experiments, 16x16 TPU-v2's were also used for the Performer, while 8x8's were used for the 1-3 layer $(d = 256)$ Transformer models (using 16x16 did not make a difference in accuracy).

\textbf{Note that Performers are using the same training hyperparameters as Transformers, yet achieving competitive results} - this shows that FAVOR can act as a simple drop-in without needing much tuning.

\subsection{Approximate Softmax Attention Default Values} The optimal values, set to default parameters\footnote{\url{https://github.com/google-research/google-research/blob/master/performer/fast_attention}}
, are: renormalize\_attention = True, numerical stabilizer = $10^{-6}$, number of features = 256, ortho\_features = True, ortho\_scaling = 0.0.

\subsection{Generalized Attention Default Values} \label{subsec:generalized_default} The optimal values, set to default parameters\footnote{\url{https://github.com/google-research/google-research/blob/master/performer/fast_attention}}
, are: renormalize\_attention = True, numerical stabilizer = 0.0, number of features = 256, kernel = ReLU, kernel\_epsilon = $10^{-3}$. 

\subsection{Reformer Default Values} \label{subsec:reformer_default}
For the Reformer, we used the same hyperparameters as mentioned for protein experiments, without gradient clipping, while using the defaults\footnote{\url{https://github.com/google/trax/blob/master/trax/supervised/configs/reformer_imagenet64.gin}} (which instead use learning rate decay) for ImageNet-64. In both cases, the Reformer used the same default LSH attention parameters.

\subsection{Linformer Default Values} \label{subsec:linformer_default}
Using our standard pipeline as mentioned above, we replaced the attention function with the Linformer variant via Jax, with $\delta = 10^{-6}, k=600$ (same notation used in the paper \citep{linformer}), where $\delta$ is the exponent in a renormalization procedure using $e^{-\delta}$ as a multiplier in order to approximate softmax, while $k$ is the dimension of the projections of the $\mathbf{Q}$ and $\mathbf{K}$ matrices. As a sanity check, we found that our Linformer implementation in Jax correctly approximated exact softmax's output within $0.02$ error for all entries.

Note that for rigorous comparisons, our Linformer hyperparameters are even stronger than the defaults found in \citep{linformer}, as:

\begin{itemize}
\item We use $k=600$, which is more than twice than the default $k=256$ from the paper, and also twice than our default $m = 256$ number of features.
\item We also use redrawing, which avoids "unlucky" projections on $\mathbf{Q}$ and $\mathbf{K}$. 
\end{itemize}

\newpage

\section{Main Algorithm: FAVOR+}
\label{appendix:main_algorithm}
We outline the main algorithm for FAVOR+ formally: 

\begin{algorithm}[h]
\SetAlgoLined
\SetKwInOut{Input}{Input}
\Input{ $\mathbf{Q}, \mathbf{K}, \mathbf{V} \in \mathbb{R}^{L \times d}$, $\mathrm{isBidirectional}$ - binary flag.}
\KwResult{$\widehat{\mathrm{Att}}_\leftrightarrow (\mathbf{Q}, \mathbf{K}, \mathbf{V}) \in \mathbb{R}^{L \times L}$ if $\mathrm{isBidirectional}$, $\widehat{\mathrm{Att}}_\to (\mathbf{Q}, \mathbf{K}, \mathbf{V}) \in \mathbb{R}^{L \times L}$ otherwise.}
Compute $\mathbf{Q}^{\prime}$ and $\mathbf{K}^{\prime}$ as described in Section \ref{sec:gka} and Section \ref{howto} and take $\mathbf{C} := \begin{bmatrix} \mathbf{V} & \mathbf{1}_L \end{bmatrix}$\;
\eIf{$\mathrm{isBidirectional}$}{
    $\mathrm{Buf}_1 := (\mathbf{K}^{\prime})^\top \mathbf{C} \in \mathbb{R}^{M \times (d + 1)}, \quad \mathrm{Buf}_2 := \mathbf{Q}^{\prime} \mathrm{Buf}_1 \in \mathbb{R}^{L \times (d + 1)}$\;
}{
    Compute $\mathbf{G}$ and its prefix-sum tensor $\mathbf{G}^\mathrm{PS}$ according to (\ref{eq:cumsum})\;
    $\mathrm{Buf}_2 := \begin{bmatrix} \mathbf{G}^\mathrm{PS}_{1,:,:} \mathbf{Q}^{\prime}_1 & \dots & \mathbf{G}^\mathrm{PS}_{L,:,:} \mathbf{Q}^{\prime}_L \end{bmatrix}^\top \in \mathbb{R}^{L \times (d + 1)}$\;
}
$\begin{bmatrix} \mathrm{Buf}_3 & \mathrm{buf}_4 \end{bmatrix} := \mathrm{Buf}_2, \quad \mathrm{Buf}_3 \in \mathbb{R}^{L \times d}, \quad \mathrm{buf}_4 \in \mathbb{R}^L$\;
\Return $\mathrm{diag} (\mathrm{buf}_4)^{-1} \mathrm{Buf}_3$\;
\caption{FAVOR+ (bidirectional or unidirectional).}
\label{alg:1}
\end{algorithm}

\subsection{Unidirectional Case and Prefix Sums}
\label{subsec:unidirectional}
We explain how our analysis from Section \ref{sec:gka} can be extended to the unidirectional mechanism in this section.
Notice that this time attention matrix $\mathbf{A}$ is masked, i.e. all its entries not in the lower-triangular part (which contains the diagonal) are zeroed (see also Fig. \ref{fig:prefix_sum}).

\begin{figure}[h]
  \centering
  \includegraphics[width=0.99\textwidth]{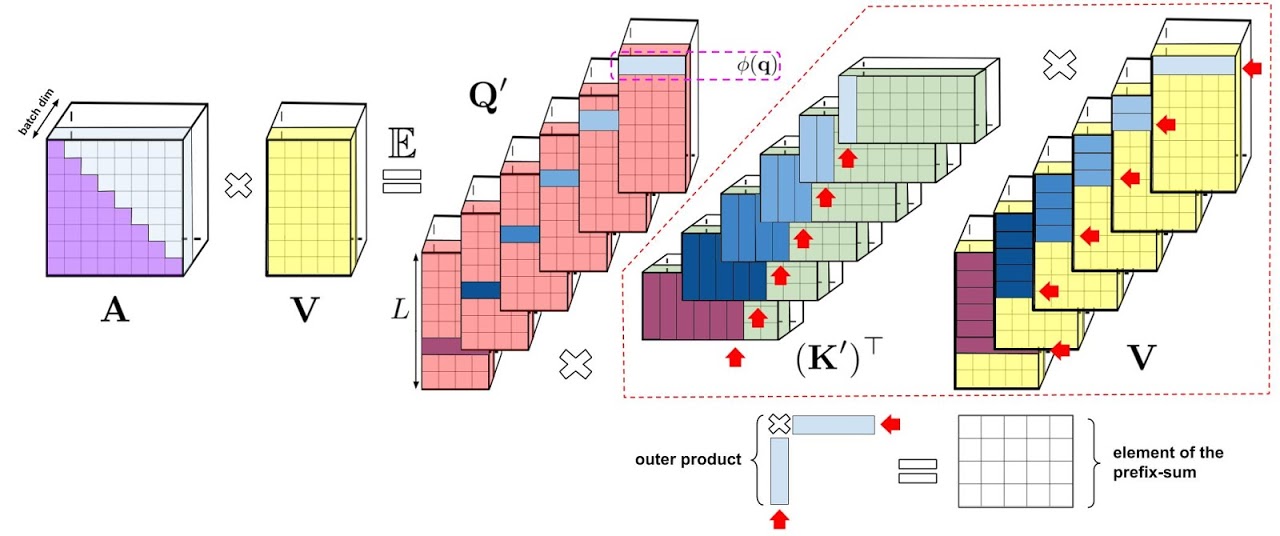}
  \caption{\small{Visual representation of the prefix-sum algorithm for unidirectional attention. For clarity, we omit attention normalization in this visualization. The algorithm keeps the prefix-sum which is a matrix obtained by summing the outer products of random features corresponding to keys with value-vectors. At each given iteration of the prefix-sum algorithm, a random feature vector corresponding to a query is multiplied by the most recent prefix-sum (obtained by summing all outer-products corresponding to preceding tokens) to obtain a new row of the matrix $\mathbf{AV}$ which is output by the attention mechanism.\\}}
  \label{fig:prefix_sum}
\end{figure}

For the unidirectional case, our analysis is similar as for the bidirectional case, but this time our goal is to compute $\mathrm{tril}(\mathbf{Q}^{\prime} (\mathbf{K}^{\prime})^\top) \mathbf{C}$ without constructing and storing the $L \times L$-sized matrix $\mathrm{tril}(\mathbf{Q}^{\prime} (\mathbf{K}^{\prime})^\top)$ explicitly, where
$\mathbf{C}~=~\begin{bmatrix} V & \mathbf{1}_L \end{bmatrix} \in \mathbb{R}^{L \times (d + 1)}$. In order to do so, observe that $\forall 1 \leq i \leq L$:
\begin{equation} \label{eq:cumsum}
    [\mathrm{tril}(\mathbf{Q}^{\prime} (\mathbf{K}^{\prime})^\top) \mathbf{C}]_i = \mathbf{G}^\mathrm{PS}_{i,:,:} \times \mathbf{Q}^{\prime}_i, \quad \mathbf{G}^\mathrm{PS}_{i,:,:} = \sum_{j = 1}^i \mathbf{G}_{j,:,:}, \quad \mathbf{G}_{j,:,:} = \mathbf{K}^{\prime}_j \mathbf{C}_j^\top \in \mathbb{R}^{M \times (d + 1)}
\end{equation}
where $\mathbf{G}, \mathbf{G}^\mathrm{PS} \in \mathbb{R}^{L \times M \times (d + 1)}$ are 3d-tensors. Each slice $\mathbf{G}^\mathrm{PS}_{:,l,p}$ is therefore a result of a prefix-sum (or cumulative-sum) operation applied to $\mathbf{G}_{:,l,p}$: $\mathbf{G}^\mathrm{PS}_{i,l,p} = \sum_{j = 1}^i \mathbf{G}_{i,l,p}$. An efficient algorithm to compute the prefix-sum of $L$ elements takes $O(L)$ total steps and $O(\log L)$ time when computed in parallel \citep{cumsum, cormen}. 
See Algorithm \ref{alg:1} for the whole approach.

\subsection{Orthogonal Random Features - Extensions}
\label{subsec:extensions}

As mentioned in the main text, for isotropic $\Omega$ (true for most practical applications, including regular attention), instead of sampling $\omega_{i}$ independently, we can use \emph{orthogonal random features} (ORF) \citep{ort, unreas, geom}: these maintain the marginal distributions of samples $\omega_{i}$ while enforcing that different samples are orthogonal. If we need $m>d$, ORFs still can be used locally within each $d \times d$ block of $\mathbf{W}$ \citep{ort}.

ORFs were introduced to reduce the variance of Monte Carlo estimators \citep{ort, unreas, geom, kama, hron,psrnn, uni} and we showed in the theoretical and experimental sections from the main body that they do indeed lead to more accurate approximations and substantially better downstream results. There exist several variants of the ORF-mechanism and in the main body we discussed only the base one (that we refer to here as \textit{regular}). Below we briefly review the most efficient ORF mechanisms (based on their strengths and costs) to present the most complete picture.

\textbf{(1) Regular ORFs [R-ORFs]:} Applies Gaussian orthogonal matrices \citep{ort}. Encodes matrix $\mathbf{W}$ of $\omega$-samples (with different rows corresponding to different samples) in $O(md)$ space. Provides algorithm for computing $\mathbf{Wx}$ in $O(md)$ time for any $\mathbf{x} \in \mathbb{R}^{d}$. Gives unbiased estimation. Requires one-time $O(md^{2})$ preprocessing (Gram-Schmidt orthogonalization).

\textbf{(2) Hadamard/Givens ORFs [H/G-ORFs]:} Applies random Hadamard \citep{unreas} or Givens matrices \citep{uni}. Encodes matrix $\mathbf{W}$ in $O(m)$ or $O(m\log(d))$ space. Provides algorithm for computing $\mathbf{Wx}$ in $O(m\log(d))$ time for any $\mathbf{x} \in \mathbb{R}^{d}$. Gives small bias (tending to $0$ with $d \rightarrow \infty$).

\subsection{Time and Space Complexity - Detailed Analysis}

We see that a variant of bidirectional FAVOR+ using iid samples or R-ORFs has $O(md+Ld+mL)$ space complexity as opposed to $\Theta(L^{2} + Ld)$ space complexity of the baseline. Unidirectional FAVOR+ using fast prefix-sum pre-computation in parallel \citep{cumsum, cormen} has $O(m L d)$ space complexity to store $\mathbf{G}^\textrm{PS}$ which can be reduced to $O(md+Ld+mL)$ by running a simple (though non-parallel in $L$) aggregation of $\mathbf{G}^\textrm{PS}_{i,:,:}$ without storing the whole tensor $\mathbf{G}^{\textrm{PS}}$ in memory. From Subsec. \ref{subsec:extensions}, we know that if instead we use G-ORFs, then space complexity is reduced to $O(m\log(d) + Ld + mL)$ and if the H-ORFs mechanism is used, then space is further reduced to $O(m + Ld + mL)=O(Ld+mL)$. Thus for $m,d \ll L$ all our variants provide substantial space complexity improvements since they do not need to store the attention matrix explicitly.

The time complexity of Algorithm \ref{alg:1} is $O(L m d)$ (note that constructing $\mathbf{Q}^{\prime}$ and $\mathbf{K}^{\prime}$ can be done in time $O(Lmd)$). Note that the time complexity of our method is much lower than $O(L^2 d)$ of the baseline for $L \gg m$.

As explained in Subsec. \ref{subsec:extensions}, the R-ORF mechanism incurs an extra one-time $O(md^{2})$ cost (negligible compared to the $O(Lmd)$ term for $L \gg d$). H-ORFs or G-ORFs do not have this cost, and when FAVOR+ uses them, computing $\mathbf{Q}^{\prime}$ and $\mathbf{K}^{\prime}$ can be conducted in time $O(L\log(m)d)$ as opposed to $O(Lmd)$ (see: Subsec. \ref{subsec:extensions}). Thus even though H/G-ORFs do not change the asymptotic time complexity, they improve the constant factor from the leading term. This might play an important role in training very large models. 

The number of random features $m$ allows a trade-off between computational complexity and the level of approximation: bigger $m$ results in higher computation costs, but also in a lower variance of the estimate of $\mathbf{A}$. In the theoretical section from the main body we showed that in practice we can take $M=\Theta(d\log(d))$.

Observe that the FAVOR+ algorithm is highly-parallelizable, and benefits from fast matrix multiplication and broadcasted operations on GPUs or TPUs.

\newpage

\section{Experimental Details for Protein Modeling Tasks}
\label{appendix:protein_extended}
\subsection{TrEMBL Dataset}

\begin{table}[h]
\centering
\renewcommand{\arraystretch}{1.4}
\resizebox{0.9\textwidth}{!}{%
\begin{tabular}{ |c|c|c|c|c|c|c|c|c| } 
\hline
\multirow{2}{*}{\bf Dataset} & \multirow{2}{*}{\bf Set Name} & \multirow{2}{*}{\bf Count} & \multicolumn{5}{|c|}{\bf Length Statistics} \\
\cline{4-8}
 & &  & \textbf{Min} & \textbf{Max}& \textbf{Mean}& \textbf{STD} & \textbf{Median} \\
\hline
\multirow{4}{*}{TrEMBL}
& Train & 104,863,744 & 2 & 74,488 & 353.09 & 311.16 & 289.00 \\ 
& Valid & 102,400 & 7 & 11,274 & 353.62 & 307.42 & 289.00 \\ 
\cline{2-8}
& Test & 1,033,216 & 8 & 32,278 & 353.96 & 312.23 & 289.00 \\ 
& OOD & 29,696 & 24 & 4,208 & 330.96 & 269.86 & 200.00 \\ 
\hline
& & & & & & & \vspace{-4.5mm}\\
\hline
\multirow{2}{*}{\begin{tabular}{c}TrEMBL\\ (concat)\end{tabular}} & Train & 4,532,224 & \multirow{2}{*}{8,192} & \multirow{2}{*}{8,192} & \multirow{2}{*}{8,192} & \multirow{2}{*}{0} & \multirow{2}{*}{8,192}\\
& Valid & 4,096 & & & & & \\ 
\hline
\end{tabular}
}
\vspace{2mm}
\caption{Statistics for the TrEMBL single sequence and the long sequence task.}
\label{table-trembl-statistics}
\end{table}

We used the TrEMBL dataset\footnote{\url{https://www.uniprot.org/statistics/TrEMBL}}, which contains 139,394,261 sequences of which 106,030,080 are unique. While the training dataset appears smaller than the one used in Madani et al. \citep{progen}, we argue that it includes most of the relevant sequences. Specifically, the TrEMBL dataset consists of the subset of UniProtKB sequences that have been computationally analyzed but not manually curated, and accounts for $\approx99.5\%$ of the total number of sequences in the UniProtKB dataset\footnote{\url{https://www.uniprot.org/uniprot/}}.

Following the methodology described in Madani et al. \citep{progen}, we used both an OOD-Test set, where a selected subset of Pfam families are held-out for valuation, and an IID split, where the remaining protein sequences are split randomly into train, valid, and test tests. We held-out the following protein families (PF18369, PF04680, PF17988, PF12325, PF03272, PF03938, PF17724, PF10696, PF11968, PF04153, PF06173, PF12378, PF04420, PF10841, PF06917, PF03492, PF06905, PF15340, PF17055, PF05318), which resulted in 29,696 OOD sequences. We note that, due to deduplication and potential TrEMBL version mismatch, our OOD-Test set does not match exactly the one in Madani et al. \citep{progen}. We also note that this OOD-Test selection methodology does not guarantee that the evaluation sequences are within a minimum distance from the sequences used during training. In future work, we will include rigorous distance based splits.

The statistics for the resulting dataset splits are reported in Table \ref{table-trembl-statistics}. In the standard sequence modeling task, given the length statistics that are reported in the table, we clip single sequences to maximum length $L=1024$, which results in few sequences being truncated significantly.

In the long sequence task, the training and validation sets are obtained by concatenating the sequences, separated by an end-of-sequence token, and grouping the resulting chain into non-overlapping sequences of length $L=8192$.


\subsection{Empirical Baseline}

\begin{figure}[h]
  \centering
  \includegraphics[width=0.99\linewidth]{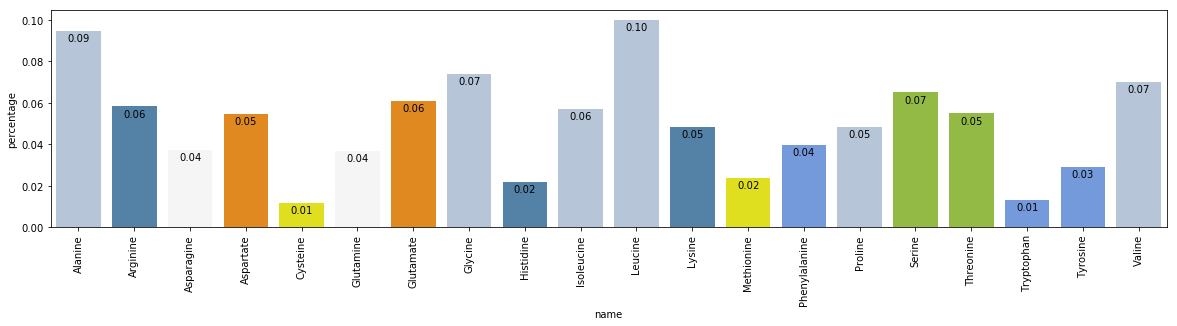}
  \caption{\small{Visualization of the estimated empirical distribution for the 20 standard amino acids, colored by their class. Note the consistency with the statistics on the TrEMBL web page.}}
  \vspace{2mm}
  \label{figure-empirical-baseline}
\end{figure}

A random baseline, with uniform probability across all the vocabulary tokens at every position, has accuracy $5\%$ (when including only the 20 standard amino acids) and $4\%$ (when also including the 5 anomalous amino acids \citep{uniprot2019uniprot}). However, the empirical frequencies of the various amino acids in our dataset may be far from uniform, so we also consider an \textit{empirical baseline} where the amino acid probabilities are proportional to their empirical frequencies in the training set.

Figure \ref{figure-empirical-baseline} shows the estimated empirical distribution. We use both the standard and anomalous amino acids, and we crop sequences to length 1024 to match the data processing performed for the Transformer models. The figure shows only the 20 standard amino acids, colored by their class, for comparison with the visualization on the TrEMBL web page\footnote{\url{https://www.uniprot.org/statistics/TrEMBL}}.

\subsection{Tabular Results}

Table \ref{table-trembl} contains the results on the single protein sequence modeling task ($L=1024$). We report accuracy and perplexity as defined in Appendix \ref{appendix:hyperparameters}:

\begin{table}[h]
\centering
\renewcommand{\arraystretch}{1.4}
\resizebox{0.9\textwidth}{!}{%
\begin{tabular}{ |c|c|c|c|c| } 
\hline
\textbf{Model Type} & \textbf{Set Name} & \textbf{Model} & \textbf{Accuracy} & \textbf{Perplexity} \\
\hline
\multirow{6}{*}{UNI} & \multirow{3}{*}{Test} & Empirical Baseline & 9.92 & 17.80 \\ 
& & Transformer & 30.80 & 9.37\\ 
& & Performer (generalized) & 31.58 & 9.17 \\ 
\cline{2-5}
& \multirow{3}{*}{OOD} & Empirical Baseline & ~9.07 & ~17.93 \\ 
& & Transformer & 19.70 & 13.20 \\ 
& & Performer (generalized) & 18.44 & 13.63 \\
\hline
& & & &\vspace{-4.5mm}\\
\hline
\multirow{6}{*}{BID} & \multirow{3}{*}{Test} & Transformer & 33.32 & 9.22 \\ 
& & Performer (generalized) & 36.09 &  8.36 \\ 
& & Performer (softmax) & 33.00 &  9.24 \\ 
\cline{2-5}
& \multirow{3}{*}{OOD} & Transformer & 25.07 & 12.09 \\ 
& & Performer (generalized) & 24.10 & 12.26 \\ 
& & Performer (softmax) & 23.48 & 12.41 \\ 
\hline
\end{tabular}
}
\vspace{2mm}
\caption{Results on single protein sequence modeling ($L=1024$). We note that the empirical baseline results are applicable to both the unidirectional (UNI) and bidirectional (BID) models.}\label{table-trembl}
\end{table}

\subsection{Attention Matrix Illustration}

In this section we illustrate the attention matrices produced by a Performer model. We focus on the bidirectional case and choose one Performer model trained on the standard single-sequence TrEMBL task for over 500K steps. The same analysis can be applied to unidirectional Performers as well.

We note that while the Transformer model instantiates the attention matrix in order to compute the attention output that incorporates the (queries $Q$, keys $K$, values $V$) triplet (see Eq.~\ref{eq:attnorm} in the main paper), the FAVOR mechanism returns the attention output directly (see Algorithm~\ref{alg:1}). To account for this discrepancy, we extract the attention matrices by applying each attention mechanism twice: once on each original $(Q, K, V)$ triple to obtain the attention output, and once on a modified $(Q, K, V^\circ)$ triple, where $V^\circ$ contains one-hot indicators for each position index, to obtain the attention matrix. The choice of $V^\circ$ ensures that the dimension of the attention output is equal to the sequence length, and that a non-zero output on a dimension $i$ can only arise from a non-zero attention weight to the $i^{th}$ sequence position. Indeed, in the Transformer case, when comparing the output of this procedure with the instantiated attention matrix, the outputs match.

\textbf{Attention matrix example.} We start by visualizing the attention matrix for an individual protein sequence. We use the BPT1\_BOVIN protein sequence\footnote{\url{https://www.uniprot.org/uniprot/P00974}}, one of the most extensively studied globular proteins, which contains 100 amino acids. In Figure \ref{fig:attention_matrices}, we show the attention matrices for the first 4 layers. Note that many heads show a \textit{diagonal} pattern, where each node attends to its neighbors, and some heads show a \textit{vertical} pattern, where each head attends to the same fixed positions. These patterns are consistent with the patterns found in Transformer models trained on natural language \citep{kovaleva2019revealing}. In Figure \ref{fig:model_view} we highlight these attention patterns by focusing on the first 25 tokens, and in Figure \ref{fig:attention_heads}, we illustrate in more detail two attention heads.

\textbf{Amino acid similarity.} Furthermore, we analyze the amino-acid similarity matrix estimated from the attention matrices produced by the Performer model, as described in Vig et al. \citep{bertology}. We aggregate the attention matrix across 800 sequences. The resulting similarity matrix is illustrated in Figure \ref{figure:amino-acid-similarity}. Note that the Performer recognizes highly similar amino acid pairs such as (D, E) and (F, Y).

\begin{figure}[h]
  \centering
  \includegraphics[width=0.99\textwidth]{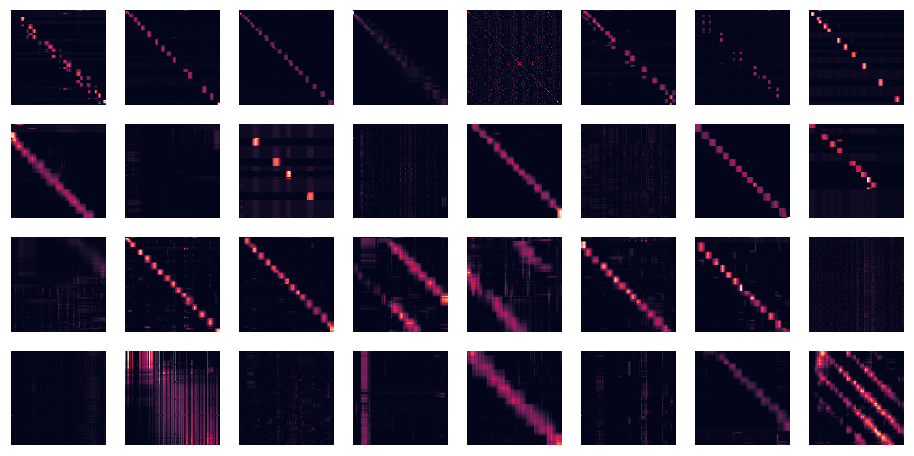}
  \vspace{2mm}
  \caption{\small{We show the attention matrices for the first 4 layers and all 8 heads (each row is a layer, each column is head index, each cell contains the attention matrix across the entire BPT1\_BOVIN protein sequence). Note that many heads show a \textit{diagonal} pattern, where each node attends to its neighbors, and some heads show a \textit{vertical} pattern, where each head attends to the same fixed positions.}}
  \label{fig:attention_matrices}
\end{figure}

\begin{figure}[h]
  \centering
  \includegraphics[width=0.49\textwidth]{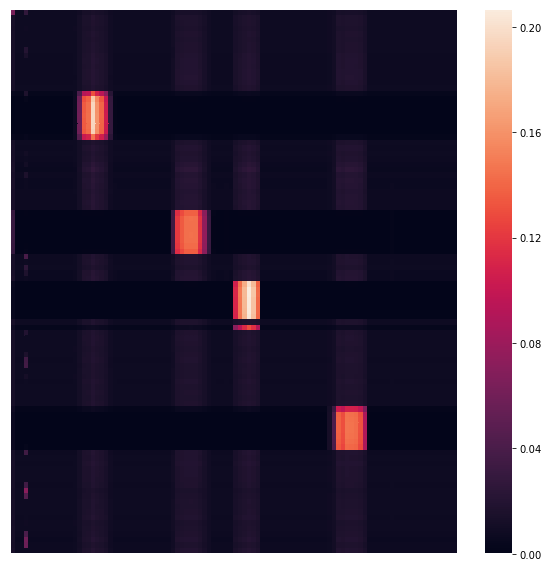}
  \includegraphics[width=0.49\textwidth]{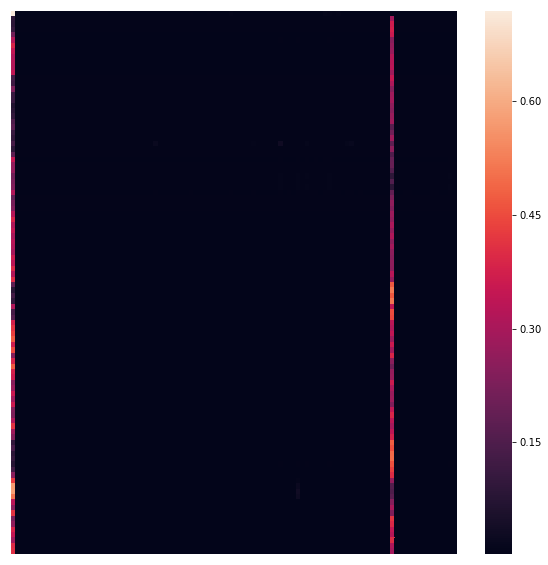}
  \vspace{2mm}
  \caption{\small{We illustrate in more detail two attention heads. The sub-figures correspond respectively to: \textbf{(1)} Head 1-2 (second layer, third head), \textbf{(2)} Head 4-1 (fifth layer, second head). Note the block attention in Head 1-2 and the vertical attention (to the start token (`M') and the 85th token (`C')) in Head 4-1.}}
  \label{fig:attention_heads}
\end{figure}

\newpage

\begin{figure}[ht]
    \centering
  \includegraphics[height = 0.6\linewidth, width=0.9\linewidth]{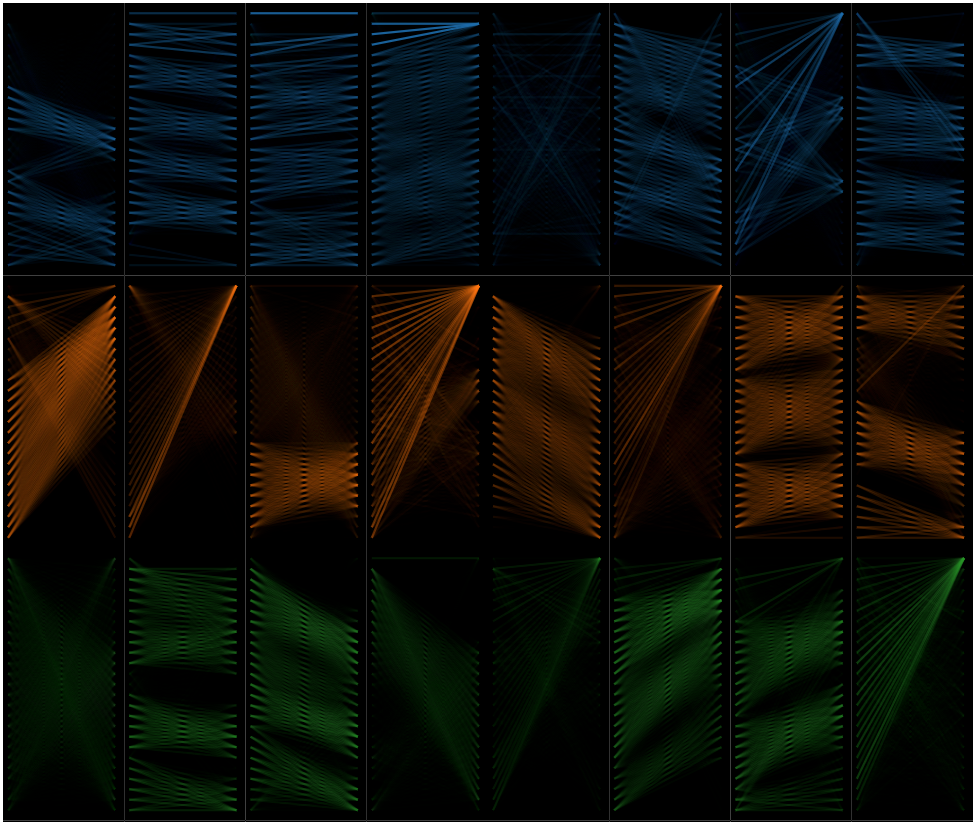}
  \vspace{2mm}
  \caption{\small{We highlight the attention patterns by restricting our attention to the first 25 tokens (note that we do not renormalize the attention to these tokens). The illustration is based on Vig et al. \protect\citep{vig2019multiscale, analyzing_attention}. Note that, similar to prior work on protein Transformers \protect\citep{progen}, the attention matrices include both local and global patterns.}}
  \label{fig:model_view}
\end{figure}

\begin{figure}[ht]
  \centering
  \includegraphics[width=0.49\textwidth]{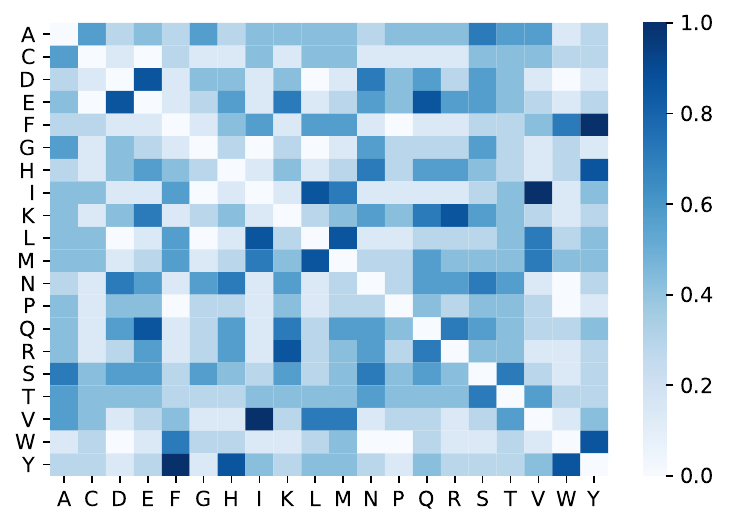}
  \includegraphics[width=0.49\textwidth]{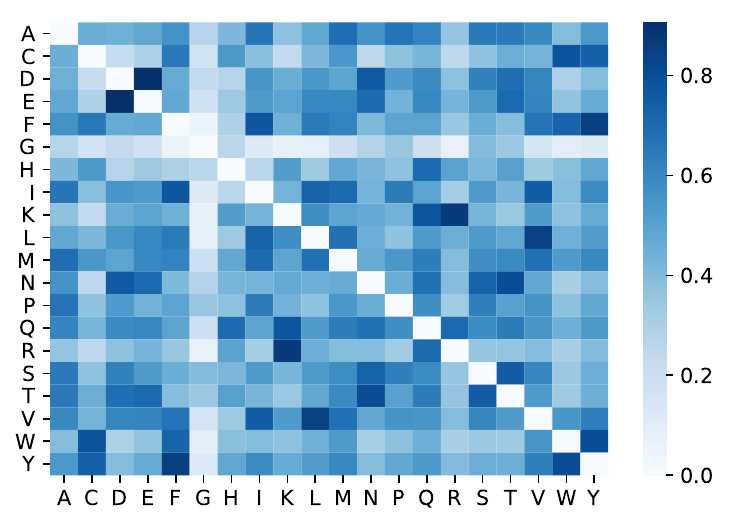}
  \vspace{1mm}
  \caption{\small{Amino acid similarity matrix estimated from attention matrices aggregated across a small subset of sequences, as described in Vig et al. \protect\citep{bertology}. The sub-figures correspond respectively to: \textbf{(1)} the normalized BLOSUM matrix, \textbf{(2)} the amino acid similarity estimated via a trained Performer model. Note that the Performer recognizes highly similar amino acid pairs such as (D, E) and (F, Y).}}
  \label{figure:amino-acid-similarity}
\end{figure}

\clearpage

\section{Extended approximation and comparison results}
\label{appendix:extended_approx}

\subsection{Backwards Compatibility - Error Propagation}
Although mentioned previously (Sec. \ref{subsec:approx_error_compatibility}) that the Performer with additional finetuning is backwards compatible with the Transformer, we demonstrate below in Fig. \ref{fig:appendix_approx} that error propagation due to non-attention components of the Transformer is one of the primary reasons that pretrained Transformer weights cannot be immediately used for inference on the corresponding Performer.

\begin{figure}[h]
  \centering
  \includegraphics[width=0.75\textwidth]{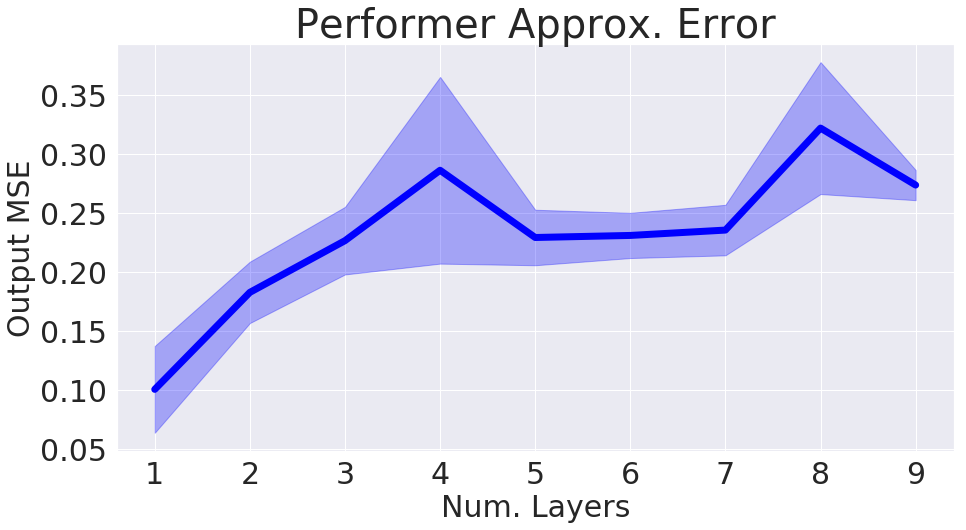}
  \caption{Output approximation errors between a vanilla Transformer and a Performer (with orthogonal features) for varying numbers of layers.}
  \label{fig:appendix_approx}
\end{figure}

\subsection{Approximate Softmax - Extended Properties}
\label{subsec:redraw}
\label{subsec:softmax_unidirectional}
\label{subsec:unstable_trig}
We show the following properties of our softmax approximation, in Fig. \ref{fig:appendix_redraw}:

\textbf{Redrawing:} While the benefits of redrawing features was shown in Subsec. \ref{subsec:softmax_approx_transformer} of the main body of the paper, we also demonstrate its benefits when there are multiple layers with large scale (16x16 TPU-v2) training.

\textbf{Unidirectional:} While we have shown on TrEMBL that Performer with generalized ReLU attention outperforms softmax, we also show that approximate softmax attention can still be a solid choice, for example on ImageNet64 (U). After 100K steps of training, the Performer-ReLU, Performer-Softmax, and Performer-Softmax (SMREG) variants achieve respectively, 3.67, 3.69, 3.67 BPD.

\textbf{Instability of Trigonometric Features:} We see the full view of the unstable training curve when using Trigonometric softmax.

\vspace{2mm}

\begin{figure}[h]
  \centering
  \includegraphics[width=0.99\textwidth]{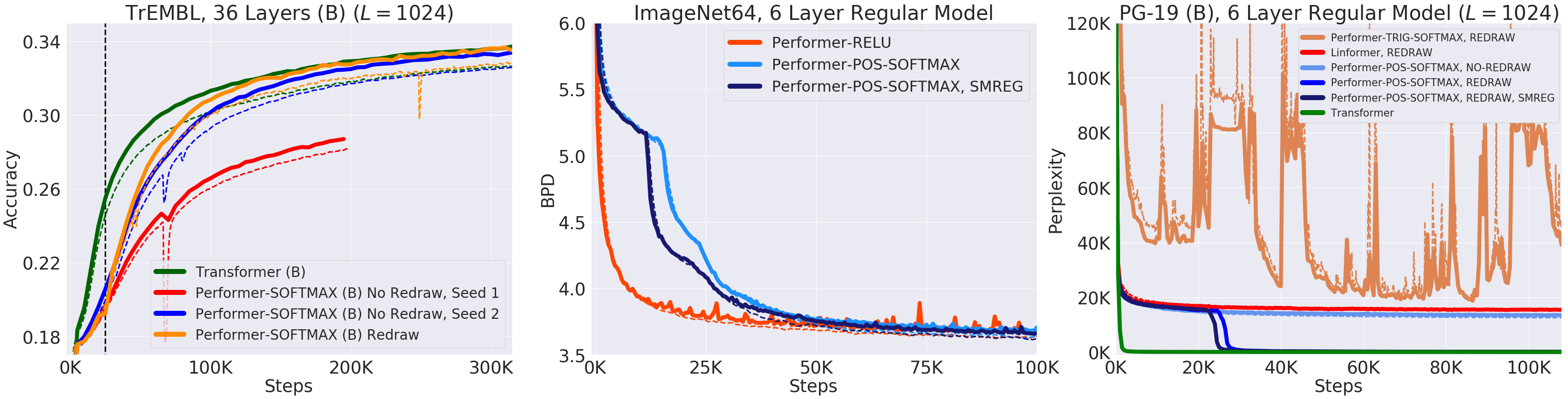}
  \caption{Best viewed zoomed in. \textbf{Left:} The importance of redrawing features. If redrawing is not used, an "unlucky" set of random features may cause training degradation, shown by the early-stopped curve with Seed 1, while a `lucky' set of random features may cause no issue, shown by the curve with Seed 2. Redrawing allows the training to correct itself, as seen at the black vertical line. \textbf{Middle:} Using the same 8x8 TPU-v2 compute and same 6-layer standard model, approximate softmax with positive features achieves the same result as generalized ReLU attention. \textbf{Right:} Zoomed out view of right subfigure of Fig. \ref{fig:backward_compatibility}, showing that Trigonometric softmax causes very unstable training behaviors.
  }
  \label{fig:appendix_redraw}
  \label{fig:im64_relu_softmax}
  \label{fig:pg19_full_softmax}
\end{figure}


\subsection{Generalized Attention}
\label{subsec:appendix_generalized_attention}
We investigated Generalized Attention mechanisms (mentioned in Sec. \ref{sec:gka}) on TrEMBL when $L=512$ for various kernel functions. This is similar to \citep{tsai2019transformer} which also experiments with various attention kernels for natural language. Using hyperparameter sweeps across multiple variables in FAVOR, we compared several kernels and also renormalization on/off (Fig. \ref{fig:attention_comparisons_2x2} and Fig. \ref{fig:attention_comparisons_4x4}), where $\mathrm{Renormalize}$ corresponds to applying $\mathbf{D}^{-1}$ operator in attention, as for the standard mechanism, though we noticed that disabling it does not necessarily hurt accuracy) to produce the best training configuration for the Performer. We note that the effective batch size slightly affects the rankings (as shown by the difference between 2x2 and 4x4 TPU runs) - we by default use the generalized ReLU kernel with other default hyperparameters shown in Appendix \ref{appendix:hyperparameters}, as we observed that they are empirically optimal for large batch size runs (i.e. 8x8 or 16x16 TPU's).

\begin{figure}[h]
  \centering
  \includegraphics[width=0.99\textwidth]{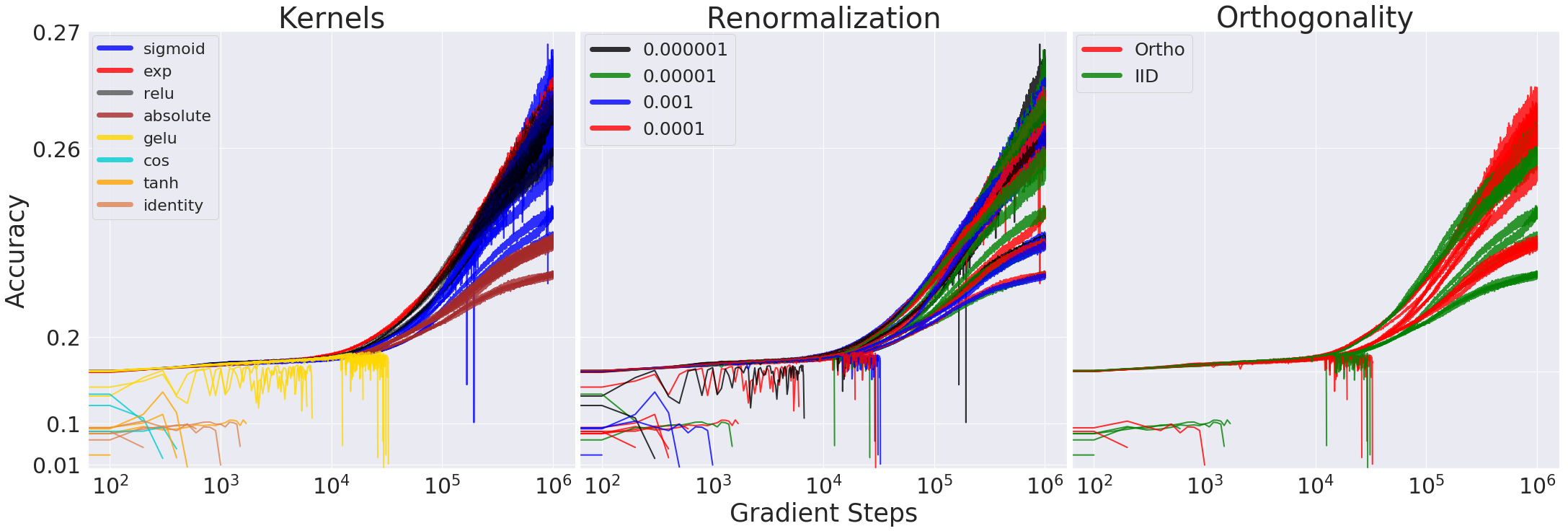}
  \caption{To emphasize the highest accuracy runs but also show the NaN issues with certain kernels which caused runs to stop early, we set both x and y axes to be log-scale. We tested kernels defined by different functions $f$ (see: Sec. \ref{sec:gka}): sigmoid, exponential, ReLU, absolute, gelu, cosine (original softmax approximation), tanh, and identity. All training runs were performed on 2x2 TPU-v2's, 128 batch size per device.}
  \label{fig:attention_comparisons_2x2}
\end{figure}

\begin{figure}[h]
  \centering
  \includegraphics[width=0.99\textwidth]{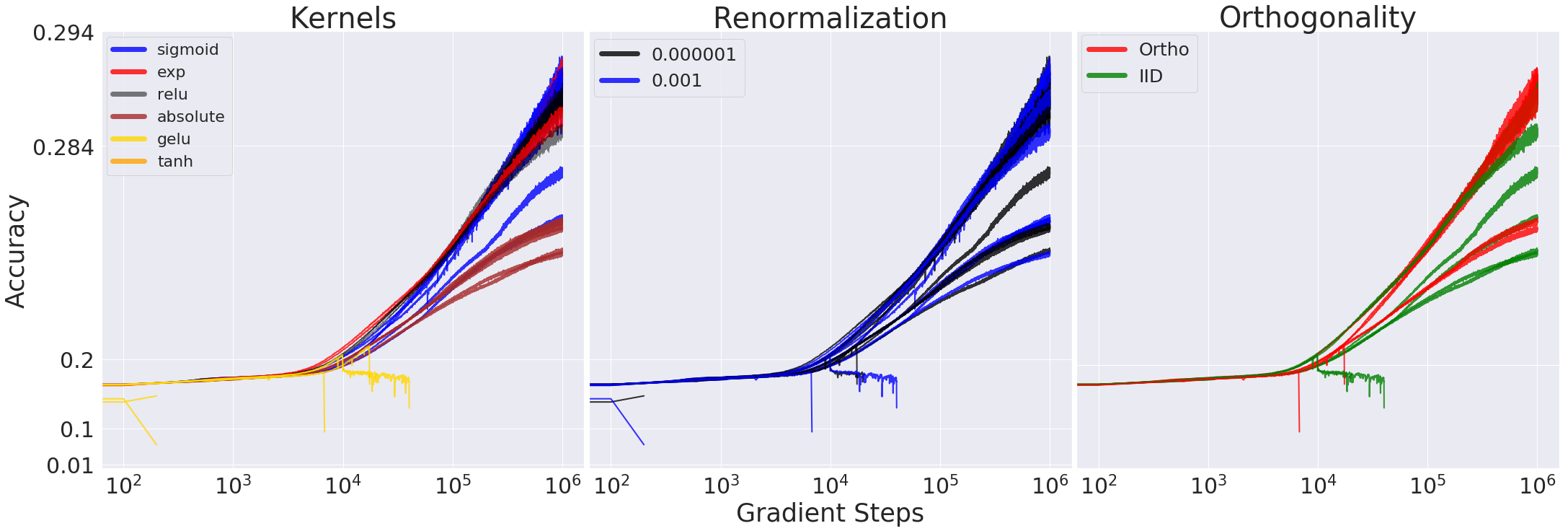}
  \caption{We also performed a similar setup as Fig. \ref{fig:attention_comparisons_2x2} for 4x4 TPU-v2's.}
  \label{fig:attention_comparisons_4x4}
\end{figure}

\subsection{Comparison with Linear Transformer}
\label{appendix:linear_transformer}

We use the attention implementation of the Linear Transformer from \citep{trans-rnns}, which mainly involves setting our feature map $\phi(x) = \text{elu}(x) + 1$, where $\text{elu}(x)$ is the shifted-eLU function from \citep{elu}. 

\begin{figure}[h]
  \centering
  \includegraphics[width=0.99\textwidth]{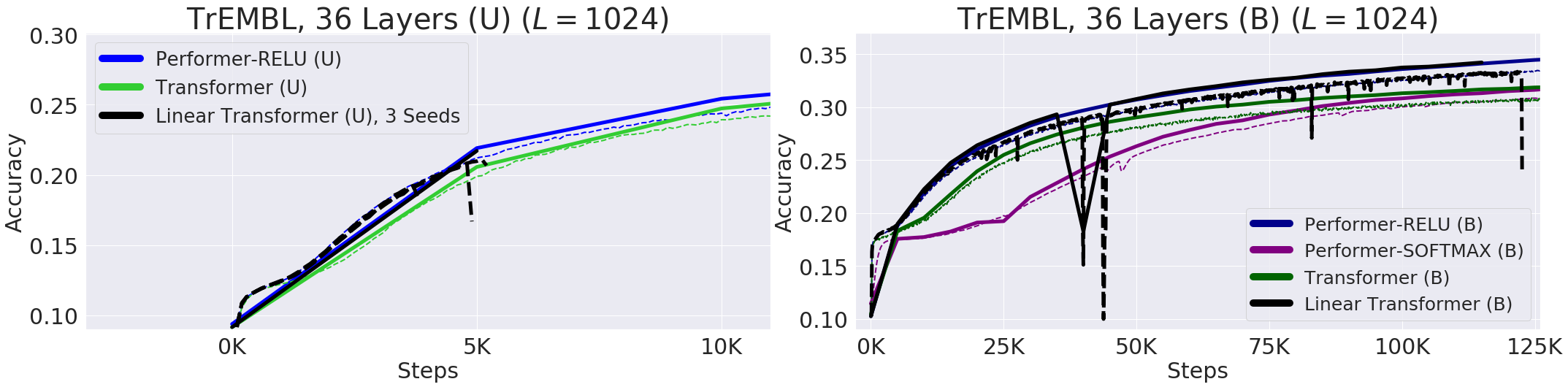}
  \caption{\textbf{Left:} In the unidirectional 36-ProGen setting, we ran 3 seeds of the Linear Transformer, and found that all 3 seeds produced exploding gradients very early on, stopping the training run. \textbf{Right:} The Linear Transformer in the bidirectional setting also produced an exploding gradient in the middle of training, near 125K steps. Exploding gradients can be evidenced by the sharp drop in train accuracy right before a NaN error.}
  \label{fig:linear_transformer}
\end{figure}

For the sake of fairness and to prevent confounding results, while \citep{trans-rnns} also uses the GeLU nonlinearity for the MLPs in the Linear Transformer, we instead use the original ReLU nonlinearity. We also used the exact same training hyperparameters as Performer-ReLU on our exact ProGen setting from Fig. \ref{fig:big_benchmarking}. Ultimately, we empirically found that the Linear Transformer possessed numerical instability during training via unstable training curves, \textbf{ultimately stopping training by producing exploding gradients (NaNs)} (Fig. \ref{fig:linear_transformer}).

\subsection{Long Range Arena}
Performers are compared against many additional (scalable and not scalable) methods not included in our paper: \textit{Local Attention}, \textit{Sparse Attention}, \textit{Longformer}, \textit{Sinkhorn Transformer}, \textit{Synthesizer}, \textit{Big Bird} and the aforementioned \textit{Linear Transformer} on challenging long range context tasks in the Long Range Arena \citep{lra}, with Fig. \ref{fig:lra_figure} displaying the original paper's results. Performers obtain the largest LRA (Long Range Arena) score among all tested \textbf{scalable} Transformers methods (which we define by having speed of > 100 examples/sec). 

Tasks used for comparison include: \textbf{(1)} a longer variation of the standard ListOps task proposed in \citep{listops}, \textbf{(2)} byte-level text classification using real-world data, \textbf{(3)} byte-level document retrieval, \textbf{(4)} image classification on sequences of pixels, and \textbf{(5)} Pathfinder task (long-range spatial dependency problem). In the Long Range Arena paper, the authors found that all models do not learn anything on Path-X task (denoted by FAIL), contrary to the Pathfinder task, which shows that increasing the sequence length can cause seriously difficulties for model training.
\vspace{0.2cm}
\begin{figure}[h]
  \centering
  \includegraphics[width=0.99\textwidth]{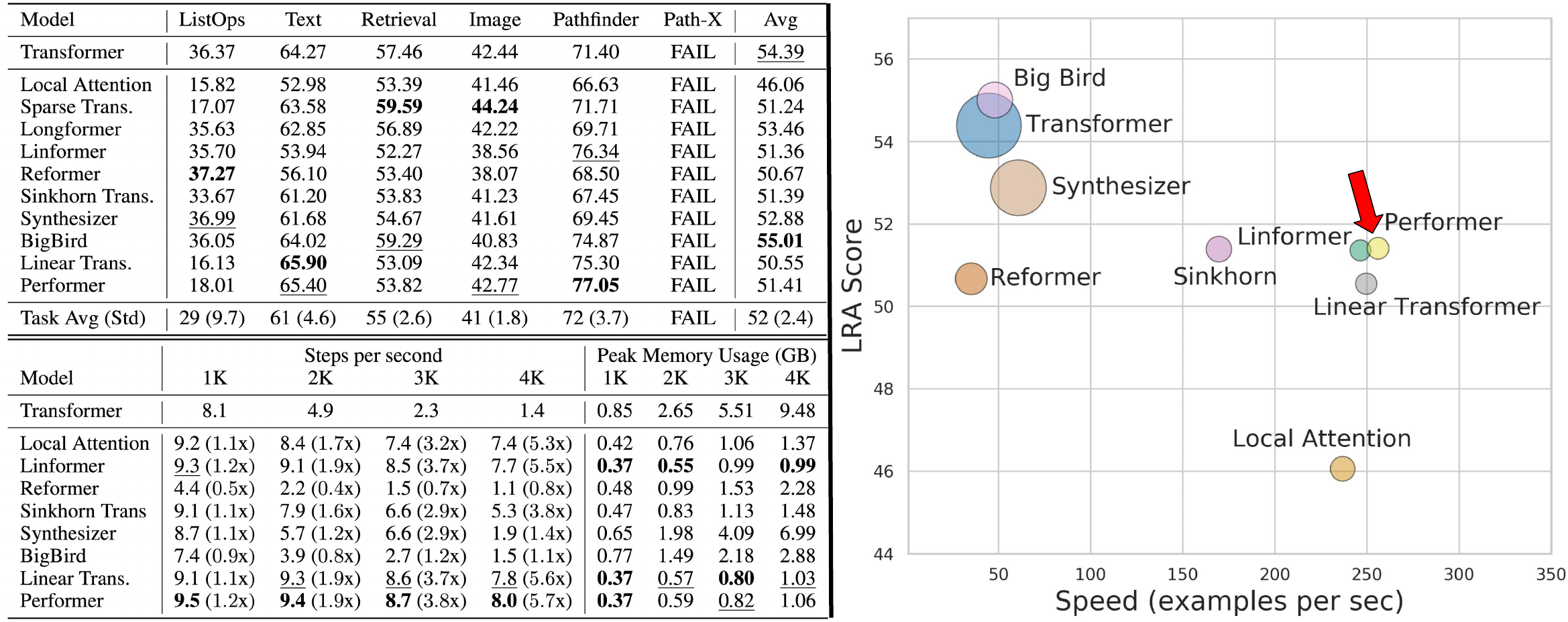}
  \vspace{0.15cm}
  \caption{\textbf{Upper Table:} Results on Long-Range Arena benchmark. Best model is in boldface and second best is underlined. \textbf{Lower Table:} Benchmark results of all X-former models with a consistent batch size of 32 across all models. The authors report relative speed increase/decrease in comparison with the vanilla Transformer in brackets besides the steps per second. Memory usage refers to per device memory usage across each TPU device. Benchmarks are run on 4x4 TPU-v3 chips. \textbf{Right Fig:} Performance (y-axis), speed (x-axis), and memory footprint (size of the circles) of different models.} 
  \label{fig:lra_figure}
\end{figure}

\newpage 

\section{Computation costs - Extended results}

In this subsection, we empirically measure computational costs in terms wall clock time on forward and backward passes for three scenarios in Fig. \ref{fig:appendix_runtime}:
\begin{enumerate}
\item Performer, with varying number of layers. We show that our method can scale up to (but not necessarily limited to) even 20 layers.
\item Attention time complexities when comparing standard attention (from Transformer) and FAVOR (from Performer). Note that the maximum memory size here is not reflective of the maximum memory size in an actual model (shown below), as this benchmark requires computing explicit tensors (causing memory increases) in Jax, while a model does not.
\item Time complexities when comparing the Transformer and Performer models. "X" (OPT) denotes the maximum possible speedup achievable, when attention simply returns the $\mathbf{V}$-vector, showing that the Performer is nearly optimal. We see that the maximum possible power of 2 length allowed on a V100 GPU (16GB) is $2^{15} = 32768$ using regular dimensions.
\end{enumerate}

Since some of the computational bottleneck in the Transformer may originate from the extra feed-forward layers \citep{reformer}, we also benchmark the ``Small" version, i.e. $(n_{heads}, n_{layers}, d_{ff}, d) = (1,6,64,64)$ as well, when the attention component is the dominant source of computation and memory. We remind the reader that the ``Regular" version consists of  $(n_{heads}, n_{layers}, d_{ff}, d) = (8,6,2048,512)$. 

\vspace{0.1cm}
\label{appendix:computation_costs_bidirectional}

\begin{figure}[h]
  \includegraphics[width=1.0\textwidth]{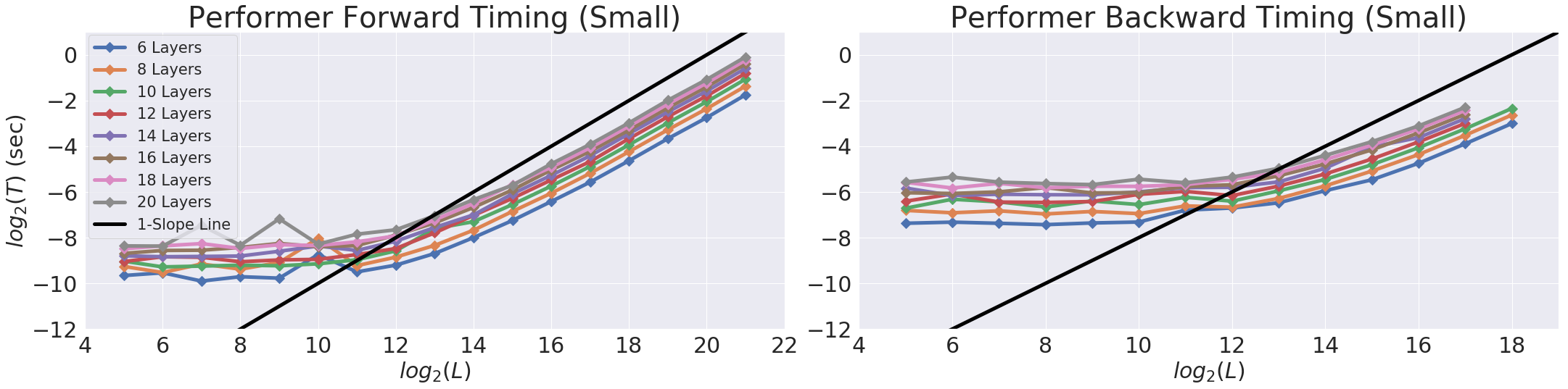}
  \includegraphics[width=1.0\textwidth]{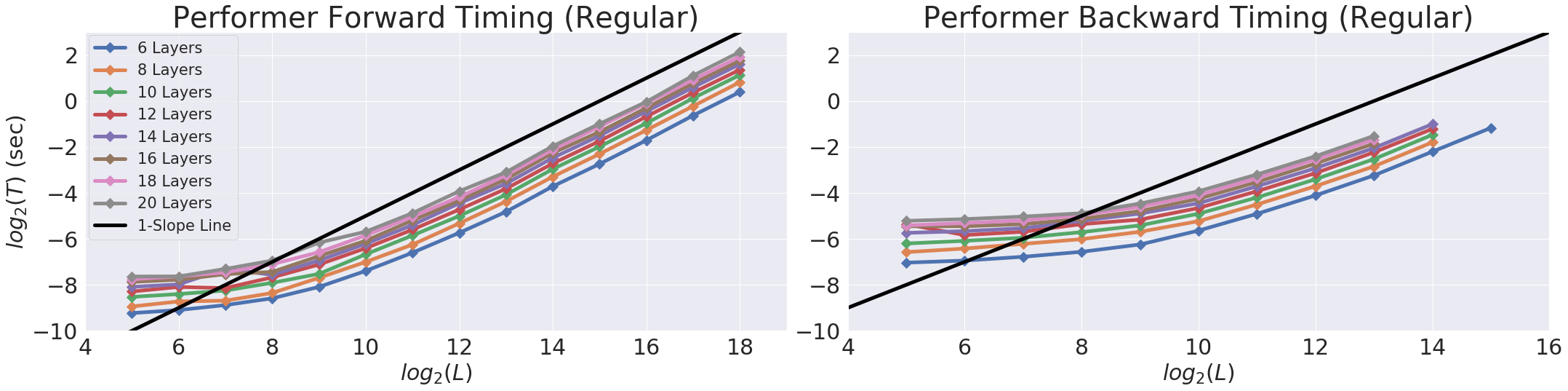}
  \includegraphics[width=1.0\textwidth]{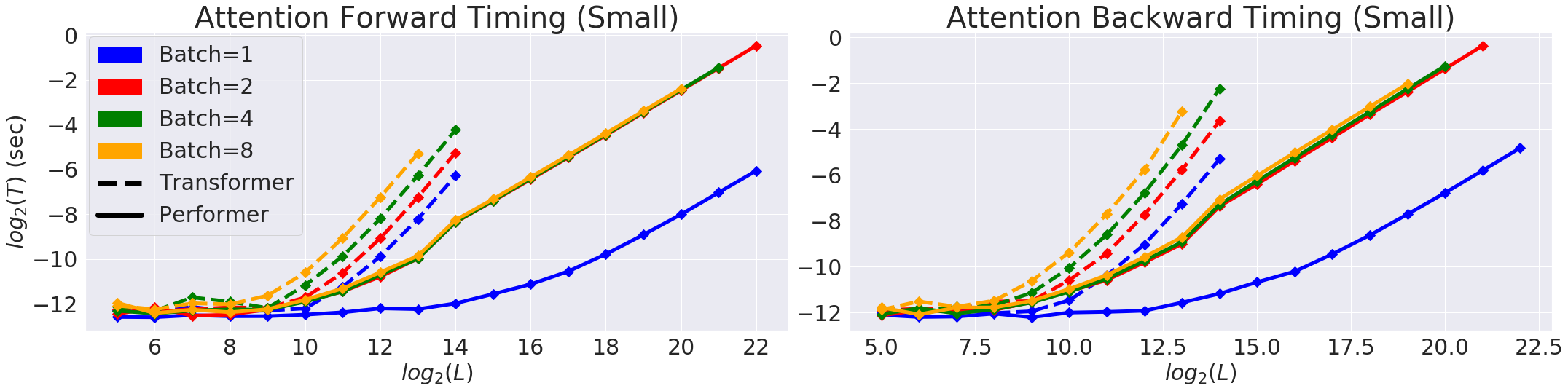}
  \includegraphics[width=1.0\textwidth]{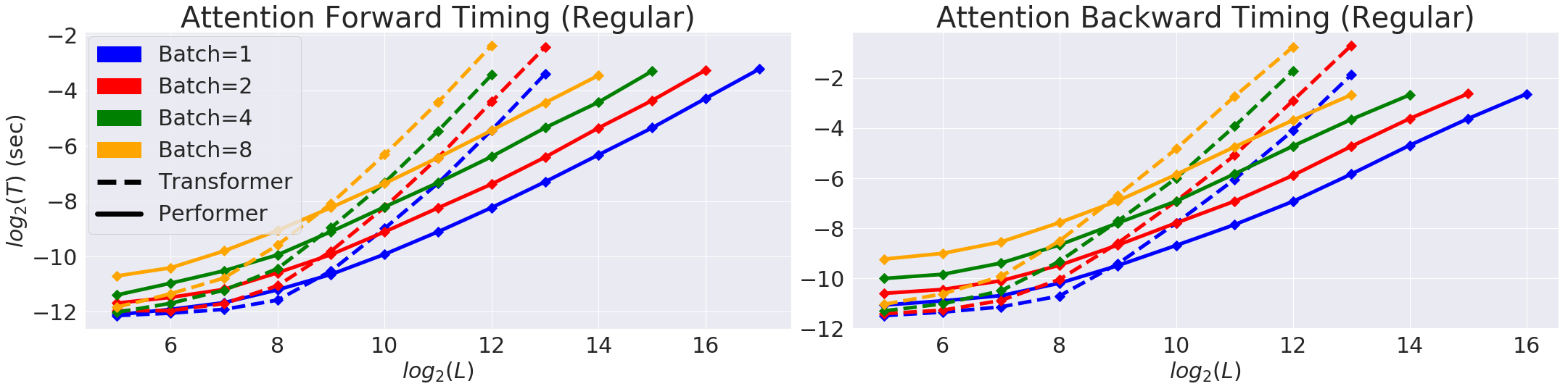}

  \caption{Captions (1) and (2) for each 2x2 subfigure mentioned above.}
  \label{fig:appendix_runtime}
\end{figure}

\clearpage
\begin{figure}[ht]
  \includegraphics[width=1.0\textwidth]{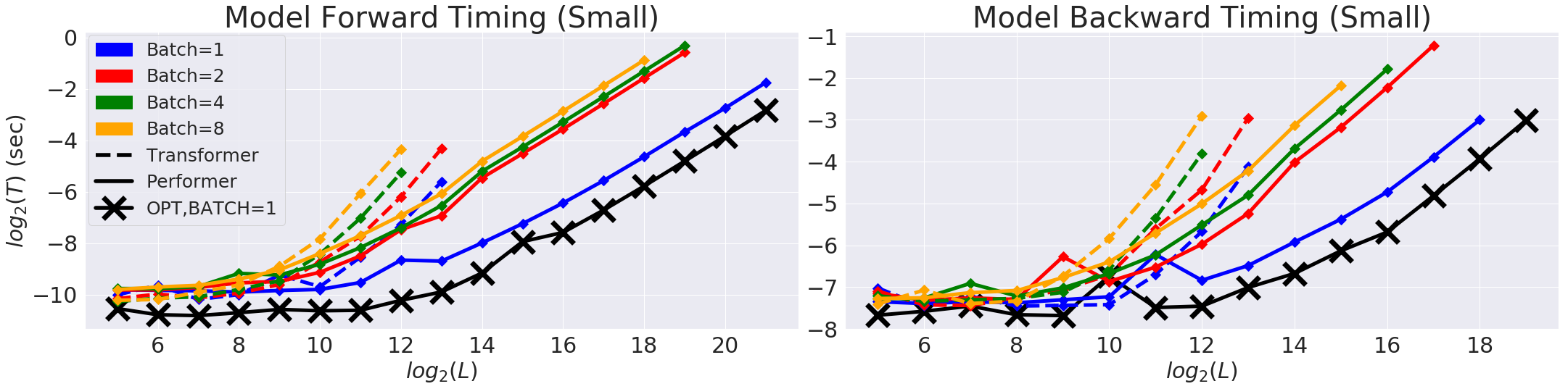}
  \includegraphics[width=1.0\textwidth]{img/model_regular_opt.png}

  \caption{Caption (3) for this 2x2 subfigure mentioned above.}
  \label{fig:appendix_runtime_2}
\end{figure}


\clearpage

\section{Theoretical results}
\label{appendix:theoretical_results}
We provide here the proofs of all theoretical results presented in the paper.

\subsection{Proof of Lemma \ref{pos_random_features_lemma}}

\begin{proof}
We first deduce that for any $\va, \vb \in \R^d$
\begin{equation*}
\mathrm{SM}(\mathbf{x},\mathbf{y})=
    \exp (\vx^\top \vy) = \exp (- \| \vx \|^2 / 2) \cdot \exp (\| \vx + \vy \|^2 / 2) \cdot \exp (- \| \vy \|^2 / 2) .
\end{equation*}
Next, let $\vw \in \R^d$. We use the fact that
\begin{equation*}
    (2 \pi)^{-d/2} \int \exp (- \| \vw - \vc \|_2^2 / 2) d \vw = 1
\end{equation*}
for any $\vc \in \mathbb{R}^d$ and derive:
\begin{align*}
    \exp (\| \vx &+ \vy \|^2 / 2) = (2 \pi)^{- d / 2} \exp (\| \vx + \vy \|^2 / 2) \int  \exp (- \| \vw - (\vx + \vy) \|^2 / 2) d \vw \\
    &= (2 \pi)^{- d / 2} \int  \exp (- \| \vw \|^2 / 2 + \vw^\top (\vx + \vy) - \| \vx + \vy \|^2 / 2 + \| \vx + \vy \|^2 / 2) d \vw \\
    &= (2 \pi)^{- d / 2} \int  \exp (- \| \vw \|^2 / 2 + \vw^\top (\vx + \vy) ) d \vw \\
    &= (2 \pi)^{- d / 2} \int  \exp (- \| \vw \|^2 / 2 ) \cdot \exp (\vw^\top \vx) \cdot \exp (\vw^\top \vy) d \vw \\
    &= \E_{\omega \sim \mathcal{N} (\mathbf{0}_d, \mathbf{I}_{d})} [\exp (\omega^\top \vx) \cdot \exp (\omega^\top \vy)] .
\end{align*}
That completes the proof of the first part of the lemma. An identity involving hyperbolic cosine function is implied by the fact that for every $\mathbf{u} \in \mathbb{R}^{d}$ and $\omega \sim \mathcal{N}(0, \mathbf{I}_{d})$ the following is true:
\begin{equation}
\mathbb{E}[\exp(\omega^{\top}\mathbf{u})] = \sum_{i=0}^{\infty} \frac{\mathbb{E}[(\omega^{\top}\mathbf{u})^{2i}]}{(2i)!} = 
\frac{1}{2} \sum_{i=0}^{\infty}
\frac{\mathbb{E}[(\omega^{\top}\mathbf{u})^{2i}] + \mathbb{E}[(-\omega^{\top}\mathbf{u})^{2i}]}{(2i)!}.
\end{equation}
The cancellation of the odd moments $\mathbb{E}[(\omega^{\top}\mathbf{u})^{2i+1}]$ follows directly from the fact that $\omega$ is taken from the isotropic distribution (i.e. distribution with pdf function constant on each sphere).
That completes the proof.
\end{proof}

\subsection{Proof of Lemma \ref{mse-lemma}}
\begin{proof}
Denote: $\mathbf{z}=\mathbf{x}+\mathbf{y}$ and $\Delta=\mathbf{x}-\mathbf{y}$.
Note that by using standard trigonometric identities (and the fact that the variance of the sum of independent random variables is the sum of variances of those random variables), we can get the following for $\omega \sim \mathcal{N}(0, \mathbf{I}_{d})$:
\begin{equation}
\mathrm{MSE}(\widehat{\mathrm{SM}}^{\mathrm{trig}}_{m}(\mathbf{x}, \mathbf{y})) = \frac{1}{m}\exp(\|\mathbf{x}\|^{2}+\|\mathbf{y}\|^{2}) \mathrm{Var}(\cos(\omega^{\top}\Delta)).     
\end{equation}
Using the fact that (see: Lemma 1 in \citep{ort}; note that in that lemma they use notation: $z$ for what we denote as: $\|\Delta\|$):
\begin{equation}
\mathrm{Var}(\cos(\omega^{\top}\Delta))=\frac{1}{2}(1-\exp(-\|\Delta\|^{2}))^{2},    
\end{equation}
we obtain:
\begin{align}
\begin{split}
\mathrm{MSE}(\widehat{\mathrm{SM}}^{\mathrm{trig}}_{m}(\mathbf{x}, \mathbf{y})) = \frac{1}{2m}\exp(\|\mathbf{x}\|^{2}+\|\mathbf{y}\|^{2})(1-\exp(-\|\Delta\|^{2}))^{2} = \\
\frac{1}{2m} \exp(\|\mathbf{z}\|^{2})\mathrm{SM}^{-2}(\mathbf{x},\mathbf{y})(1-\exp(-\|\Delta\|^{2}))^{2},
\end{split}
\end{align}
which completes the first part of the proof.
To obtain the formula for: $\mathrm{MSE}(\widehat{\mathrm{SM}}^{\mathrm{+}}_{m}(\mathbf{x}, \mathbf{y}))$
notice first that:
\begin{equation}
\label{neat-fact}
\mathbb{E}_{\omega \sim \mathcal{N}(0, \mathbf{I}_{d})}[\exp(\omega^{\top}\mathbf{z})] = \exp(\frac{\|\mathbf{z}\|^{2}}{2}).    
\end{equation}
The above immediately follows from the fact that positive random feature maps provide unbiased estimation of the softmax-kernel, thus the following is true:
\begin{equation}
\mathrm{SM}(\mathbf{x}, \mathbf{y}) = \exp(-\frac{\|\mathbf{x}\|^{2}+\|\mathbf{y}\|^{2}}{2})\mathbb{E}_{\omega \sim \mathcal{N}(0, \mathbf{I}_{d})}[\exp(\omega^{\top}\mathbf{z})].    
\end{equation}

Therefore we obtain:
\begin{align}
\begin{split}
\mathrm{MSE}(\widehat{\mathrm{SM}}^{\mathrm{+}}_{m}(\mathbf{x}, \mathbf{y})) = \frac{1}{m} \exp(-(\|\mathbf{x}\|^{2} + \|\mathbf{y}\|^{2}))\mathrm{Var}(\exp(\omega^{\top}\mathbf{z})) = \\
\frac{1}{m} \exp(-(\|\mathbf{x}\|^{2} + \|\mathbf{y}\|^{2}))
\left(\mathbb{E}[\exp(2\omega^{\top}\mathbf{z})] - (\mathbb{E}[\exp(\omega^{\top}\mathbf{z})])^{2}\right) = \\ 
\frac{1}{m} \exp(-(\|\mathbf{x}\|^{2} + \|\mathbf{y}\|^{2}))
(\exp(2\|\mathbf{z}\|^{2}) - \exp(\|\mathbf{z}\|^{2})),
\end{split}  
\end{align}
where the last equality follows from Equation \ref{neat-fact}.
Therefore we have:
\begin{align}
\begin{split}
\mathrm{MSE}(\widehat{\mathrm{SM}}^{\mathrm{+}}_{m}(\mathbf{x}, \mathbf{y})) = \frac{1}{m} \exp(-(\|\mathbf{x}\|^{2} + \|\mathbf{y}\|^{2})) \exp(\|\mathbf{z}\|^{2})(\exp(\|\mathbf{z}\|^{2}) - 1) = \\  
\frac{1}{m}\exp(\|\mathbf{z}\|^{2})\mathrm{SM}^{2}(\mathbf{x}, \mathbf{y})(1-\exp(-\|\mathbf{z}\|^{2})).
\end{split}
\end{align}

Finally, 
\begin{align}
\begin{split}
\mathrm{MSE}(\widehat{\mathrm{SM}}_{m}^{\mathrm{hyp+}}(\mathbf{x}, \mathbf{y})) = \frac{1}{4m}\exp(-\frac{\|\mathbf{x}\|^{2}+\|\mathbf{y}\|^{2}}{2})^{2}(\mathrm{Var}(\exp(\omega^{\top}\mathbf{z}))
+\mathrm{Var}(\exp(-\omega^{\top}\mathbf{z})) + \\
2\mathrm{Cov}(\exp(\omega^{\top}\mathbf{z})), \exp(-\omega^{\top}\mathbf{z})))) = 
\frac{1}{4m}\exp(-\frac{\|\mathbf{x}\|^{2}+\|\mathbf{y}\|^{2}}{2})^{2}(2\mathrm{Var}(\exp(\omega^{\top}\mathbf{z}))+\\2\mathrm{Cov}(\exp(\omega^{\top}\mathbf{z})), \exp(-\omega^{\top}\mathbf{z}))))) = 
\frac{1}{2m}\exp(-(\|\mathbf{x}\|^{2}+\|\mathbf{y}\|^{2}))\\
(\mathrm{Var}(\exp(\omega^{\top}\mathbf{z}))+ 1 - (\mathbb{E}[\exp(\omega^{\top}\mathbf{z})])^{2}) = 
\frac{1}{2m}\exp(-(\|\mathbf{x}\|^{2}+\|\mathbf{y}\|^{2}))\\
(\exp(2\|\mathbf{z}\|^{2})-\exp(\|\mathbf{z}\|^{2})+1-\exp(\|\mathbf{z}\|^{2})) = 
\frac{1}{2m}\exp(-(\|\mathbf{x}\|^{2}+\|\mathbf{y}\|^{2}))(\exp(\|\mathbf{z}\|^{2})-1)^{2} \\
=\frac{1}{2}(1-\exp(-\|\mathbf{z}\|^{2}))\mathrm{MSE}(\widehat{\mathrm{SM}}^{\mathrm{+}}_{m}(\mathbf{x}, \mathbf{y})).
\end{split}
\end{align}
In the chain of equalities above we used the fact that random variables $\mathrm{exp}(\omega^{\top}\mathbf{z})$ and 
$\mathrm{exp}(-\omega^{\top}\mathbf{z})$ have the same distribution. This is true since $\omega$ and $-\omega$ have the same distribution ($\omega$ is Gaussian).
That completes the proof.
\end{proof}

\subsection{Proof of Theorem \ref{reg-theorem}}
\begin{proof}
Let $\mathbf{x}, \mathbf{y} \in \mathbb{R}^{d}$ be respectively a query/key.
Note that from the definition of $\mathrm{SMREG}(\mathbf{x}, \mathbf{y})$ we have
for $\mathbf{z} = \mathbf{x}+\mathbf{y}$:
\begin{equation}
\mathrm{SMREG}(\mathbf{x}, \mathbf{y}) = 
\exp(-\frac{\|\mathbf{x}\|^{2}+\|\mathbf{y}\|^{2}}{2})
\sum_{k=0}^{\infty}\frac{1}{(2k)!}\|\mathbf{z}\|^{2k}d^{k}\mathbb{E}_{\omega \sim \mathcal{N}(0, \mathbf{I}_{d})}[(\frac{\omega}{\|\omega\|_{2}}\mathbf{e}_{1})^{2k}],
\end{equation}
where $\mathbf{e}_{1} \overset{\mathrm{def}}{=} (1,0,...,0)^{\top} \in \mathbb{R}^{d}$. To obtain the above we used the fact that $\mathcal{N}(0, \mathbf{I}_{d})$ is isotropic (that in particular implies zeroing of the even terms in the Taylor expansion).

Let us denote: $A(k, d) \overset{\mathrm{def}}{=} \mathbb{E}_{\omega \sim \mathcal{N}(0, \mathbf{I}_{d})}[(\frac{\omega}{\|\omega\|_{2}}\mathbf{e}_{1})^{2k}]$. It turns out that:
\begin{equation}
A(2k, d) = \frac{(2k-1)!!}{(d+2k-2)(d+2k-4) \cdot ... \cdot d}.    
\end{equation}
The proof of that fact can be found in the supplement of \citep{geom}, yet we provide it below for completeness and the convenience of the Reader:
\begin{lemma}
Expression $A(2k, d)$ satisfies the following for $k \in \mathbb{N}$ :
\begin{equation}
A(2k, d) = \frac{(2k-1)!!}{(d+2k-2)(d+2k-4) \cdot ... \cdot d}.     
\end{equation}
\end{lemma}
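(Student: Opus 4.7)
The plan is to use the polar decomposition of a standard Gaussian vector. Writing $\omega = R \cdot U$ with $R = \|\omega\|_{2}$ and $U = \omega/\|\omega\|_{2}$, the rotational invariance of the multivariate Gaussian density $(2\pi)^{-d/2}\exp(-\|\omega\|^{2}/2)$ implies that $U$ is uniformly distributed on the unit sphere $\mathcal{S}^{d-1}$ and, crucially, that $R$ and $U$ are independent. This decomposition is precisely what makes the quantity $A(k,d)$ tractable, because it turns an integral over $\mathbb{R}^d$ against a Gaussian into a product of two one-dimensional moment computations.

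First I would factor the Gaussian moment through this decomposition. Since $\omega_{1} = R \, U_{1}$, independence gives
\[
\mathbb{E}[\omega_{1}^{2k}] \;=\; \mathbb{E}[R^{2k}] \cdot \mathbb{E}[U_{1}^{2k}] \;=\; \mathbb{E}[R^{2k}] \cdot A(k, d),
\]
so $A(k,d)$ can be read off by dividing the standard Gaussian moment by the $\chi^{2}_{d}$ moment.

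Next I would invoke two classical identities. First, $\mathbb{E}[\omega_{1}^{2k}] = (2k-1)!!$, the $(2k)$-th moment of a univariate standard normal, which follows from integration by parts on the Gaussian density or directly from the moment-generating function. Second, $\mathbb{E}[R^{2k}] = d(d+2)(d+4)\cdots(d+2k-2)$, the $k$-th moment of $\chi^{2}_{d} = R^{2}$; this is immediate from $\mathbb{E}[(\chi^{2}_{d})^{k}] = 2^{k}\Gamma(d/2+k)/\Gamma(d/2)$ together with the recurrence $\Gamma(z+1) = z\Gamma(z)$. Plugging both into the factorization and solving for $A(k,d)$ yields the stated formula. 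A quick sanity check confirms the small cases: for $k=1$ this gives $1/d$ by symmetry of $U$ across coordinates, and for $k=2$ it gives $3/(d(d+2))$.

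An alternative route is a direct induction on $k$: a single integration by parts in the radial direction on the sphere, or equivalently a Beta-moment recurrence using the fact that $U_{1}^{2} \sim \mathrm{Beta}(1/2, (d-1)/2)$, gives the recursion $A(k, d) = \frac{2k-1}{d+2k-2}\, A(k-1, d)$, and the base case $A(0, d) = 1$ unrolls to the claimed product. The only substantive point in either approach is the independence of $R$ and $U$ for a standard Gaussian, which is a textbook consequence of rotational invariance (the joint density factors after changing to spherical coordinates). Everything else is bookkeeping with the Gamma function, so no step should pose a serious obstacle.
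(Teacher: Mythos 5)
Your proof is correct, and it takes a genuinely different route from the paper's. The paper proves the identity by parametrizing the sphere by the angle $\theta$ to $\mathbf{e}_1$, writing the angular density as proportional to $\sin^{d-2}(\theta)$, and then grinding through two integration-by-parts (Wallis-type) recurrences for $\int_0^\pi \cos^k\theta\,\sin^d\theta\,d\theta$ and $\int_0^\pi \sin^d\theta\,d\theta$ before combining the pieces. You instead exploit the polar factorization $\omega = R\,U$ with $R=\|\omega\|_2$ and $U=\omega/\|\omega\|_2$ independent, so that $\mathbb{E}[\omega_1^{2k}]=\mathbb{E}[R^{2k}]\cdot\mathbb{E}[U_1^{2k}]$, and then you read off $\mathbb{E}[U_1^{2k}]$ as the ratio of the standard Gaussian moment $(2k-1)!!$ to the $\chi^2_d$ moment $d(d+2)\cdots(d+2k-2)$. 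Both routes are valid, but yours avoids the explicit integrals and recurrences entirely, reducing everything to two tabulated one-dimensional moments; it is the cleaner argument. It is also worth noting that this radial/angular factorization is precisely the trick the paper itself later uses in Lemma~\ref{useful-lemma} (where it sets $\mathbf{r}=\frac{\mathbf{g}}{\|\mathbf{g}\|_2}\|\tilde{\mathbf{g}}\|_2$ with an independent copy $\tilde{\mathbf{g}}$), so you have in effect unified the paper's two computations under a single mechanism. Your alternative Beta-moment derivation ($U_1^2\sim\mathrm{Beta}(1/2,(d-1)/2)$, giving the recursion $A(k,d)=\frac{2k-1}{d+2k-2}A(k-1,d)$) is also correct and is the closest in spirit to the paper's recurrence-based proof, just packaged more compactly. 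One small note for consistency with the paper: the paper's definition of $A(k,d)$ already carries the exponent $2k$, so the lemma's $A(2k,d)$ notation is an internal inconsistency of the source; your reading ($A$ is the $(2k)$-th angular moment) matches the formula being proved.
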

\begin{proof}
Note first that for $d \geq 2$ the density function $p_{d}(\theta)$ of the angle between a vector $\mathbf{r} \in \mathbb{R}^{d}$ chosen uniformly at random from the unit sphere and $\mathbf{e}_{1}$ is given by the following formula:
\begin{equation}
p_{d}(\theta) = \frac{\sin^{d-2}(\theta)}{\int_{0}^{\pi}\sin^{d-2(\theta)}d\theta}.    
\end{equation}
Let us denote: $F(k, d) \overset{\mathrm{def}}{=} \int_{0}^{\pi}\cos^{k}(\theta)\sin^{d}(\theta)d\theta$.
Using partial integration, we get:
\begin{align}
\begin{split}
\int_{0}^{\pi}\cos^{k}(\theta)\sin^{d}(\theta)d\theta=
\int_{0}^{\pi}\cos^{k-1}(\theta)\sin^{d}(\theta)(\sin(\theta))^{\prime}d\theta= \\
\cos^{k-1}(\theta)\sin^{d+1}(\theta)|^{\pi}_{0} - 
\int_{0}^{\pi}\sin(\theta)
(
(k-1)\cos^{k-2}(\theta)(-\sin(\theta))\sin^{d}(\theta)+\\
d\cos^{k}(\theta)\sin^{d-1}(\theta)
)d\theta.
\end{split}
\end{align}
Thus we conclude that: $F(k, d) = \frac{k-1}{d+1}F(k-2, d+2)$.
Therefore we have: 
\begin{equation}
F(2k, d) = \frac{(2k-1)!!}{(d+1)(d+3)\cdot...\cdot(d+2k-1)}
\int_{0}^{\pi}\sin^{d+2k}(\theta) d\theta.
\end{equation}
We again conduct partial integration and get:
\begin{align}
\begin{split}
\int_{0}^{\pi}\sin^{d}(\theta)d\theta = -\frac{1}{d}\sin^{d-1}(\theta)\cos(\theta)|^{\pi}_{0} + \\
\frac{d-1}{d}\int_{0}^{\pi}\sin^{d-2}(\theta)d\theta = 
\frac{d-1}{d}\int_{0}^{\pi}\sin^{d-2}(\theta)d\theta.
\end{split}
\end{align}
Therefore we conclude that:
\begin{align}
\begin{split}
A(2k, d) = \frac{1}{\frac{d-3}{d-2}\frac{d-5}{d-4}\cdot ...}
\frac{(2k-1)!!}{(d-1)(d+1)\cdot...\cdot(d+2k-3)}
\frac{d+2k-3}{d+2k-2}\frac{d+2k-5}{d+2k-4} \cdot ....
= \\ \frac{(2k-1)!!}{(d+2k-2)(d+2k-4) \cdot ... \cdot d},
\end{split}
\end{align}
which completes the proof.
\end{proof}
Applying the above lemma, we get:
\begin{align}
\begin{split}
\mathrm{SMREG}(\mathbf{x}, \mathbf{y}) = 
\exp(-\frac{\|\mathbf{x}\|^{2}+\|\mathbf{y}\|^{2}}{2})
\sum_{k=0}^{\infty}\frac{1}{(2k)!}
\|\mathbf{z}\|^{2k}d^{k}\frac{(2k-1)!!}{(d+2k-2)(d+2k-4)\cdot...\cdot d} \\ = 
\exp(-\frac{\|\mathbf{x}\|^{2}+\|\mathbf{y}\|^{2}}{2})
\sum_{k=0}^{\infty} \frac{w^{k}}{k!}f(k, d),
\end{split}
\end{align}
where $w = \frac{\|\mathbf{z}\|^{2}}{2}$
and $f(k, d) = \frac{d^{k}}{(d+2k-2)(d+2k-4)\cdot...\cdot d}$.

Thus we obtain:
\begin{equation}
\frac{\mathrm{SMREG}(\mathbf{x}, \mathbf{y})}{\mathrm{SM}(\mathbf{x}, \mathbf{y})} = 
e^{-w}\sum_{k=0}^{\infty} \frac{w^{k}}{k!}f(k, d).
\end{equation}
Note first that for $k \geq 1$ we have: $f(k, d) \leq 1$, thus:
\begin{equation}
\mathrm{SMREG}(\mathbf{x}, \mathbf{y}) \leq \mathrm{SM}(\mathbf{x}, \mathbf{y}).    
\end{equation}
We also have for $l=d^{\frac{1}{3}}$:
\begin{align}
\begin{split}
\frac{\mathrm{SMREG}(\mathbf{x}, \mathbf{y})}{\mathrm{SM}(\mathbf{x}, \mathbf{y})} = 
e^{-w}\sum_{k=0}^{l} \frac{w^{k}}{k!}f(k, d) +  
e^{-w}\sum_{k=l+1}^{\infty} \frac{w^{k}}{k!}f(k, d) \geq \\
f(l, d) e^{-w}\sum_{k=0}^{l} \frac{w^{k}}{k!} +
e^{-w}\sum_{k=l+1}^{\infty} \frac{w^{k}}{k!}f(k, d) \geq
f(l, d)(1 - e^{-w}\sum_{k=l+1}^{\infty} \frac{w^{k}}{k!}) = \\
f(l, d)(1-\mathbb{P}[\mathrm{Po}(w) > l]),
\end{split}
\end{align}
where $\mathrm{Po}(w)$ stands for the random variable of Poisson distribution with parameter $w$.
Therefore we get for $t = \ln(\frac{l}{w})$:
\begin{align}
\begin{split}
\frac{\mathrm{SMREG}(\mathbf{x}, \mathbf{y})}{\mathrm{SM}(\mathbf{x}, \mathbf{y})} \geq 
(1-\frac{2l-2}{d})^{l}(1-\mathbb{P}[\mathrm{Po}(w) > l]) \geq \\
\exp(l \ln(1-\frac{2l-2}{d}))(1-\mathbb{P}[t\mathrm{Po}(w) \geq tl]) = \\
\exp\left(l \sum_{i=1}^{\infty}(-1)^{i}\frac{(\frac{2l-2}{d})^{i}}{i}\right)(1-\mathbb{P}[\exp(t\mathrm{Po}(w)-tl) \geq 1]) \geq \\
\exp(-\frac{2}{d^{\frac{1}{3}}}+o(\frac{1}{d^{\frac{1}{3}}}))
(1-\exp(-tl)\mathbb{E}[\exp(t\mathrm{Po}(w))])
= \\
\exp(-\frac{2}{d^{\frac{1}{3}}}+o(\frac{1}{d^{\frac{1}{3}}}))
(1-\exp(-w-l(t-1))),
\end{split} 
\end{align}
where the last equality is implied by the formula for the Laplace Transform for the Poisson random variable:
\begin{equation}
\mathbb{E}[\exp(t\mathrm{Po}(w))] = \exp(w(\exp(t)-1)).    
\end{equation}
Notice that: 
$w = \frac{\|\mathbf{z}\|^{2}}{2} = \frac{\ln(\mathrm{SM}(\mathbf{x},\mathbf{x}))+\ln(\mathrm{SM}(\mathbf{y},\mathbf{y})) + 2\ln(\mathrm{SM}(\mathbf{x}, \mathbf{y}))}{2} \leq 2\ln(C)$.
We conclude that:
\begin{equation}
\frac{\mathrm{SMREG}(\mathbf{x}, \mathbf{y})}{\mathrm{SM}(\mathbf{x}, \mathbf{y})} \geq  
(1-\frac{2}{d^{\frac{1}{3}}}+o(\frac{1}{d^{\frac{1}{3}}}))(1-C^{-2}(\frac{d^{\frac{1}{3}}}{2e \cdot \ln(C)})^{-d^{\frac{1}{3}}})
=1 - \frac{2}{d^{\frac{1}{3}}} + o(\frac{1}{d^{\frac{1}{3}}}).
\end{equation}
That completes the proof.
\end{proof}

\subsection{Proofs of Theorem \ref{var-theorem},Theorem \ref{ort-theorem} \& Beautiful Functions}

We will provide here much more general theoretical results which will imply  Theorem \ref{ort-theorem} and Theorem \ref{var-theorem}. We need the following definition:

\begin{definition}
\label{beauty-definition}
We say that function $F:\mathbb{R}^{n} \rightarrow \mathbb{R}$ is beautiful if $F$ can be expressed as:
\begin{equation}
F_{\Omega,g}(\mathbf{z}) = 
\mathbb{E}_{\omega \sim \Omega}[g(\omega^{\top}\mathbf{z})], 
\end{equation}
for a probabilistic isotropic distribution $\Omega$, and where $g:\mathbb{R} \rightarrow \mathbb{R}$ is an entire function
with non-negative power-series coefficients 
(i.e. $g(x) = \sum_{i=0}^{\infty} a_{i}x^{i}$ for every $x \in \mathbb{R}$ and with $a_{i} \geq 0$ for $i=0,1,...$).
In the formula above we assume that the expectation on the RHS exists.
\end{definition}

Interestingly, beautiful functions can be used to define softmax and consequently, Gaussian kernels (both standard and regularized), leading to our PRF mechanism presented in the main body of the paper, as we explain below.

\begin{remark}
\label{imp_remark}
If one takes $\Omega = \mathcal{N}(0,\mathbf{I}_{d})$(note that $\mathcal{N}(0, \mathbf{I}_{d})$ is isotropic) and $g: x \rightarrow \exp(x)$ (such $g$ is clearly entire with nonnegative power-series coefficient) then the following is true for $\mathbf{z} = \mathbf{x}+\mathbf{y}$:
\begin{equation}
\label{beauty_to_sm}
\mathrm{SM}(\mathbf{x}, \mathbf{y}) = \exp(-\frac{\|\mathbf{x}\|^{2}+\|\mathbf{y}\|^{2}}{2})F_{\Omega,g}(\mathbf{z}).   
\end{equation}
Similarly: $\mathrm{SMREG}(\mathbf{x}, \mathbf{y}) = \exp(-\frac{\|\mathbf{x}\|^{2}+\|\mathbf{y}\|^{2}}{2})F_{\Omega_{\mathrm{reg}},g}(\mathbf{z})$, where $\Omega_{\mathrm{reg}}$ stands for the distribution corresponding to Haar measure on the sphere of radius $\sqrt{d}$ (which is clearly isotropic). 
Therefore general concentration results for Monte Carlo estimators of beautiful functions immediately imply corresponding results for the (standard and regularized) softmax (and thus also Gaussian) kernel.
\end{remark}

We will consider two estimators of the beautiful functions from Definition \ref{beauty-definition} that directly lead (through Remark \ref{imp_remark}) to: PRF-based approximation of the softmax-kernel and its enhanced version with orthogonal features. Standard Monte Carlo estimator samples independently $\omega_{1}^{\mathrm{iid}},...,\omega_{m}^{\mathrm{iid}} \overset{\mathrm{iid}}{\sim} \Omega$, where $m$ stands for the number of samples and then computes:
\begin{equation}
\widehat{F}^{\mathrm{iid}}_{m}(\mathbf{z}) \overset{\mathrm{def}}{=}
\frac{1}{m} \sum_{i=1}^{m} g((\omega_{i}^{\mathrm{iid}})^{\top}\mathbf{z}).
\end{equation}
Orthogonal Monte Carlo estimator samples $\omega_{1}^{\mathrm{ort}},...,\omega_{m}^{\mathrm{ort}}$ ($m \leq d$) in such a way that marginally we have: $\omega_{i}^{\mathrm{ort}} \sim \Omega$, but $(\omega_{i}^{\mathrm{ort}})^{\top}\omega_{j}^{\mathrm{ort}}=0$ for $i \neq j$ (such an orthogonal ensemble can be always created if $\Omega$ is isotropic, as we already mentioned in the main body of the paper). We define:
\begin{equation}
\widehat{F}^{\mathrm{ort}}_{m}(\mathbf{z}) \overset{\mathrm{def}}{=}
\frac{1}{m} \sum_{i=1}^{m} g((\omega_{i}^{\mathrm{ort}})^{\top}\mathbf{z}).
\end{equation}

\subsubsection{Orthogonality universally improves concentration}

Denote by $M_{Z}(\theta) = \mathbb{E}[e^{\theta Z}]$ a moment generating function of the random variable $Z$.
Note first that estimators of beautiful functions based on standard Monte Carlo procedure using independent vectors $\omega_{i}^{\mathrm{iid}}$ guarantee strong concentration bounds since 
independent $\omega_{i}$s provide a way to obtain exponentially small upper bounds on failure probabilities through moment generating functions.
We summarize this classic observation which is a standard application of Markov's Inequality below. \\

\begin{lemma}
\label{iid-lemma}
Consider an estimator $\widehat{F}^{\mathrm{iid}}_{m}(\mathbf{z})$ of the beautiful function $F$ evaluated at $\mathbf{z}$. Then the following holds for any $a > F(\mathbf{z})$, $\theta > 0$:
\begin{equation}
\mathbb{P}[\widehat{F}^{\mathrm{iid}}_{m}(\mathbf{z}) > a] \leq \exp(\theta m a) M_X (\theta)^m,
\end{equation}
where $X = g(\mathbf{w}^{\top}\mathbf{z})$, $\mathbf{w} \sim \mathcal{D}$.
\end{lemma}

The above result provides us with exponentially small (in Legendre Transform) upper bounds on tail probabilities for the standard estimator.
Below we provide our two main theoretical results. 

\begin{theorem}[orthogonality provides smaller tails]
\label{general-ort-theorem}
If $F_{\Omega, g}$ is a beautiful function then the following holds for $m \leq d$, $X$ as in Lemma \ref{iid-lemma} and any $a>F(\mathbf{z})$, $\theta > 0$: 
\begin{equation}
\mathbb{P}[\widehat{F}^{\mathrm{ort}}_{m}(\mathbf{z})) > a] \leq \exp(-\theta m a) \left( M_X (\theta)^m - \frac{\theta^4 m (m - 1)}{4 d^2 (d + 2)} a_0^{M - 2} a_1^2 \| \*z \|^4 (\mathbb{E} \| \omega \|^2)^2 \right).
\end{equation}
\end{theorem}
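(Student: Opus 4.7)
My plan is to extend the Chernoff-style argument from Lemma~\ref{iid-lemma} to the orthogonal ensemble, with the prefactor $\frac{d}{d+2}$ arising from a spherical-moment identity. Markov's inequality applied to $\exp(\theta m\,\widehat{F}_m^{\mathrm{ort}}(\mathbf{z}))$ for $\theta > 0$ gives
\begin{equation*}
\mathbb{P}[\widehat{F}_m^{\mathrm{ort}}(\mathbf{z}) > a] \leq e^{-m\theta a}\, \mathbb{E}\!\left[\prod_{i=1}^m \exp\!\big(\theta g(\omega_i^{\mathrm{ort},\top}\mathbf{z})\big)\right],
\end{equation*}
and since the target prefactor $\frac{d}{d+2}$ is independent of $\theta$, it survives the subsequent optimization that produces $\mathcal{L}_X(a)$. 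The task therefore reduces to establishing the MGF bound $\mathbb{E}[\prod_i e^{\theta g(\omega_i^{\mathrm{ort},\top}\mathbf{z})}] \leq \tfrac{d}{d+2}\, M_X(\theta)^m$ for every $\theta \geq 0$.

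I next exploit the ``beautiful'' structure of $g$: since $g$ has non-negative power-series coefficients, so does $h_\theta(y) := \exp(\theta g(y)) = \sum_{k \geq 0} b_k(\theta)\, y^k$ for $\theta \geq 0$ (sums and products of series with non-negative coefficients preserve the property). Expanding each factor and exchanging summation with expectation (justified by absolute convergence and non-negativity),
\begin{equation*}
\mathbb{E}\!\left[\prod_i h_\theta(\omega_i^{\mathrm{ort},\top}\mathbf{z})\right] = \sum_{\mathbf{k}\in\mathbb{Z}_{\geq 0}^m}\Big(\prod_i b_{k_i}(\theta)\Big)\, \mathbb{E}\!\left[\prod_i (\omega_i^{\mathrm{ort},\top}\mathbf{z})^{k_i}\right],
\end{equation*}
with an analogous expansion for $M_X(\theta)^m$ in which the mixed moment is replaced by the product of marginal moments. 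Since all $b_{k_i}(\theta) \geq 0$, the MGF inequality reduces to a comparison of mixed versus factored moments.

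The decisive calculation is on the sphere. Using isotropy of $\Omega$, write $\omega_i^{\mathrm{ort}} = r_i\,\hat\omega_i$ where the radii $r_i$ are iid with marginal law matching the iid ensemble and the unit directions $\{\hat\omega_i\}$ are mutually orthogonal on $\mathcal{S}^{d-1}$; conditioning on the $r_i$'s reduces everything to angular mixed moments. Odd mixed angular moments vanish by antipodal symmetry, while for the pairwise second-moment case, the orthonormal-basis identity $\sum_{i=1}^d (\hat\omega_i^\top\mathbf{e})^2 = 1$ (applied after completing $\{\hat\omega_1,\ldots,\hat\omega_m\}$ to a full orthonormal basis, valid since $m \leq d$) combined with the formula $A(2k,d) = (2k-1)!!/\prod_{j=0}^{k-1}(d+2j)$ derived in the proof of Theorem~\ref{reg-theorem} gives
\begin{equation*}
\mathbb{E}\!\left[(\hat\omega_i^\top\mathbf{z})^2(\hat\omega_j^\top\mathbf{z})^2\right] = \frac{\|\mathbf{z}\|^4}{d(d+2)} = \frac{d}{d+2}\,\mathbb{E}[(\hat\omega_i^\top\mathbf{z})^2]\,\mathbb{E}[(\hat\omega_j^\top\mathbf{z})^2]
\end{equation*}
for $i \neq j$. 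An analogous rotationally symmetric calculation yields term-wise contraction by at least $\tfrac{d}{d+2}$ for higher-order mixed moments whenever at least two of the $k_i$ are non-zero.

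The main obstacle is converting this pairwise contraction into the uniform prefactor $\tfrac{d}{d+2}$ for the entire series, because ``diagonal'' multi-indices (those with at most one $k_i \neq 0$) contribute identically under the orthogonal and iid ensembles, so no term-by-term $\tfrac{d}{d+2}$ bound can hold. My strategy is to split $M_X(\theta)^m = D(\theta) + C^{\mathrm{iid}}(\theta)$ and $M^{\mathrm{ort}}(\theta) = D(\theta) + C^{\mathrm{ort}}(\theta)$ into diagonal and cross contributions, and to show using the spherical moment identities above that the cross-term deficit satisfies $C^{\mathrm{iid}}(\theta) - C^{\mathrm{ort}}(\theta) \geq \tfrac{2}{d+2}\, M_X(\theta)^m$; this rearranges to exactly $M^{\mathrm{ort}}(\theta) \leq \tfrac{d}{d+2}\,M_X(\theta)^m$. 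Optimizing over $\theta > 0$ then converts the MGF bound into the desired Legendre-transform tail estimate, concluding the proof.
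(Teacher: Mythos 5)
Your strategy --- Chernoff bound via Markov's inequality applied to $e^{\theta m \widehat{F}_m^{\mathrm{ort}}}$, power-series expansion of $e^{\theta g}$ using the non-negativity of the coefficients of $g$, decoupling radii from directions by isotropy, and the spherical mixed-moment calculation yielding the per-cross-term contraction factor $\tfrac{d}{d+2}$ --- is essentially the paper's route. (The paper packages the mixed-moment comparison as a ratio $\tau(d_1,\dots,d_m)$ of $\chi$-distribution moments and proves $\tau\le\tfrac{d}{d+2}$ whenever at least two $d_i$ are nonzero and all are even; this is the same computation you carry out with the orthonormal-basis identity and the $A(2k,d)$ formula.) You also correctly identify the obstacle that the paper does not explicitly address: multi-indices with at most one nonzero entry contribute \emph{identically} under the iid and orthogonal ensembles, so no term-by-term $\tfrac{d}{d+2}$ contraction of the full power series exists.

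The step you propose to close this gap is, however, the one that fails. The inequality $C^{\mathrm{iid}}(\theta)-C^{\mathrm{ort}}(\theta)\ge\tfrac{2}{d+2}M_X(\theta)^m$ cannot hold for all $\theta>0$: as $\theta\to 0^+$, every coefficient $b_k(\theta)$ with $k\ge 1$ tends to $0$, so both cross-sums (which need at least two nonzero $k_i$) vanish to at least second order in $\theta$ while $M_X(\theta)^m\to 1$, and the claimed bound degenerates to $0\ge\tfrac{2}{d+2}$. Equivalently, the target MGF inequality $\mathbb{E}\big[\prod_i e^{\theta X_i^{\mathrm{ort}}}\big]\le\tfrac{d}{d+2}M_X(\theta)^m$ is false at $\theta=0$ (both sides equal $1$) and hence in a neighbourhood of $0$, which matters precisely when $a$ is close to $F(\mathbf{z})$ and the Legendre optimizer $\theta^*$ is small. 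What the per-cross-term contraction actually gives is only $C^{\mathrm{iid}}-C^{\mathrm{ort}}\ge\tfrac{2}{d+2}\,C^{\mathrm{iid}}$, i.e.\ $M^{\mathrm{ort}}(\theta)\le D(\theta)+\tfrac{d}{d+2}C^{\mathrm{iid}}(\theta)$, which is strictly weaker and does not pass through the optimization over $\theta$ as a uniform prefactor. Some additional argument controlling $D(\theta^*)$ relative to $C^{\mathrm{iid}}(\theta^*)$ at the optimizing $\theta^*$ would be required; neither your sketch nor the paper's proof (which stops after establishing $\tau\le\tfrac{d}{d+2}$) supplies such a step.
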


This result shows that features obtained from the ensembles of pairwise orthogonal random vectors provide exponentially small bounds on tail probabilities and that these bounds are strictly better than for estimators using unstructured features. Furthermore, the result is \textbf{universal}, i.e. holds for any dimensionality $d$, not just asymptotically for $d$ large enough.

We also obtain similar result regarding mean squared errors (MSEs) of the considered estimators:

\begin{theorem}
\label{general-var-theorem}
If $F_{\Omega, g}$ is a beautiful function then the following holds for $m \leq d$: 
\begin{equation}
\mathrm{MSE}(\widehat{F}^{\mathrm{ort}}_{m}(\mathbf{z})) \leq 
\mathrm{MSE}(\widehat{F}^{\mathrm{iid}}_{m}(\mathbf{z})) - (1-\frac{1}{m})\frac{2}{d+2}\left(F_{\Omega,g}(\mathbf{z}) - a_0 \right)^2.
\end{equation}
\end{theorem}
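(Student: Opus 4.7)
The plan is to reduce the MSE difference to a single pairwise covariance and then bound that covariance term-by-term using the power-series structure of a beautiful $g$ together with sphere-moment identities.

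Since $\omega_i^{\mathrm{ort}}$ is marginally distributed as $\Omega$, both estimators are unbiased and share the same per-sample variance. By exchangeability of the $m(m-1)$ off-diagonal pairs,
\begin{equation*}
\mathrm{MSE}(\widehat{F}^{\mathrm{ort}}_m(\mathbf{z})) - \mathrm{MSE}(\widehat{F}^{\mathrm{iid}}_m(\mathbf{z})) = \Bigl(1 - \frac{1}{m}\Bigr)\,\mathrm{Cov}\bigl(g(\omega_1^{\mathrm{ort},\top}\mathbf{z}),\, g(\omega_2^{\mathrm{ort},\top}\mathbf{z})\bigr),
\end{equation*}
so the task reduces to proving this single covariance is at most $-\tfrac{2}{d+2}F_{\Omega,g}^2(\mathbf{z})$. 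Invoking isotropy, I would decompose $\omega_i^{\mathrm{ort}} = r_i v_i$, where $r_1, r_2$ are i.i.d.\ copies of $\|\omega\|$, independent of the orthonormal pair $(v_1, v_2)$ which is uniformly distributed on the Stiefel manifold. Expanding $g(x) = \sum_i a_i x^i$ (all $a_i \geq 0$) in each factor and using isotropy to kill every odd-index contribution, the only surviving terms are indexed by $(2k, 2l)$ with $k, l \geq 1$ and involve the difference $\mathbb{E}[u_1^{2k}u_2^{2l}] - \mathbb{E}[u_1^{2k}]\mathbb{E}[u_2^{2l}]$, where $(u_1, u_2)$ are two coordinates of a Haar-random point on $S^{d-1}$ (the $k=0$ or $l=0$ strips cancel because marginals coincide).

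The technical heart is a uniform bound on the moment ratio. Using $(u_1^2, \ldots, u_d^2) \sim \mathrm{Dirichlet}(1/2, \ldots, 1/2)$, the classical Dirichlet-moment formula yields
\begin{equation*}
\rho_{k,l} \;:=\; \frac{\mathbb{E}[u_1^{2k}u_2^{2l}]}{\mathbb{E}[u_1^{2k}]\,\mathbb{E}[u_2^{2l}]} \;=\; \frac{\Gamma(d/2+k)\,\Gamma(d/2+l)}{\Gamma(d/2)\,\Gamma(d/2+k+l)} \;=\; \prod_{i=0}^{k-1}\frac{d/2+i}{d/2+l+i}.
\end{equation*}
I expect the main obstacle to be proving $\rho_{k,l} \leq d/(d+2)$ uniformly for all $k, l \geq 1$; this is precisely where the $2/(d+2)$ constant in the theorem enters. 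Monotonicity of each Pochhammer factor in $l$ reduces the task to the extremal case $l=1$, where $\rho_{k,1} = (d/2)/(d/2+k) \leq (d/2)/(d/2+1) = d/(d+2)$ for $k \geq 1$, with equality iff $k = l = 1$.

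Combining these pieces, non-negativity of $a_{2k}, a_{2l}$ and of the iid second moments delivers
\begin{equation*}
\mathrm{Cov}\bigl(g(\omega_1^{\mathrm{ort},\top}\mathbf{z}),\, g(\omega_2^{\mathrm{ort},\top}\mathbf{z})\bigr) \;\leq\; -\frac{2}{d+2}\sum_{k, l \geq 1} a_{2k} a_{2l}\, \mathbb{E}[(\omega^{\top}\mathbf{z})^{2k}]\, \mathbb{E}[(\omega^{\top}\mathbf{z})^{2l}] \;=\; -\frac{2}{d+2}\bigl(F_{\Omega,g}(\mathbf{z}) - a_0\bigr)^2,
\end{equation*}
via the identity $F_{\Omega,g}(\mathbf{z}) - a_0 = \sum_{k\geq 1}a_{2k}\mathbb{E}[(\omega^{\top}\mathbf{z})^{2k}]$. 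Passing to the stated bound $-\tfrac{2}{d+2}F^2$ is immediate when $a_0 = 0$; for general beautiful $g$ (such as $g = \exp$ with $a_0 = 1$) the same chain is at least as strong in the softmax regime where $F \gg a_0$. Plugging into the identity from Step~1 yields the claimed MSE gap, and an identical argument applies to $\mathrm{SMREG}$ by taking $\Omega = \mathrm{Unif}(\sqrt{d}\,S^{d-1})$, since this distribution is also isotropic.
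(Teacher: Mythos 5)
Your decomposition of the MSE gap into $\bigl(1-\tfrac{1}{m}\bigr)\mathrm{Cov}\bigl(g((\omega_1^{\mathrm{ort}})^{\top}\mathbf{z}),\,g((\omega_2^{\mathrm{ort}})^{\top}\mathbf{z})\bigr)$, the isotropy-based split into radial and spherical factors, and the reduction to a uniform $d/(d+2)$ bound on even sphere-moment ratios all mirror the paper's own proof. The paper reaches the ratio bound through $\chi$-distribution moments and Gamma-function identities; your Dirichlet/Pochhammer route is a somewhat cleaner path to the same constant, but the argument is essentially the same.

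That said, you have correctly surfaced a gap that the paper's own proof also contains, and then you resolve it incorrectly. After applying the ratio bound only to the pairs $(2k,2\ell)$ with $k,\ell\geq 1$ (the only pairs that contribute nonzero mass to the covariance, since the $t=0$ strips and all odd-index terms vanish), the argument terminates at
\begin{equation*}
\mathrm{MSE}(\widehat{F}^{\mathrm{iid}}_{m}(\mathbf{z}))-\mathrm{MSE}(\widehat{F}^{\mathrm{ort}}_{m}(\mathbf{z}))\;\geq\;\Bigl(1-\frac{1}{m}\Bigr)\frac{2}{d+2}\bigl(F_{\Omega,g}(\mathbf{z})-a_0\bigr)^2,
\end{equation*}
not at the claimed $F^2$. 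The paper silently reinstates the $t=0$ and $u=0$ cross-terms when it replaces the factor $(1-\text{ratio})$ by $\tfrac{2}{d+2}$ across the entire double sum, yet for those index pairs the ratio equals $1$, so $1-\text{ratio}=0<\tfrac{2}{d+2}$ and that inequality step is invalid whenever $a_0>0$. Your attempt to rescue the upgrade runs in the wrong direction: when $a_0>0$ and $F>a_0/2$ (which holds for $g=\exp$, where $a_0=1$ and $F\geq 1$ by Jensen), one has $(F-a_0)^2<F^2$, so your chain yields a strictly \emph{weaker} conclusion, not one that is ``at least as strong.'' In fact the theorem as written fails for $a_0>0$: take $g\equiv a_0$, so both estimators equal $a_0$ deterministically with zero MSE, while the claimed right-hand side is negative for $m\geq 2$. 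The defensible statement replaces $F^2_{\Omega,g}(\mathbf{z})$ by $\bigl(F_{\Omega,g}(\mathbf{z})-a_0\bigr)^2$; your argument proves exactly that, and the last sentence of your proof should be deleted rather than kept as a hand-wave.
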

As before, an orthogonal estimator leads to better concentration results and as before, this is the case for any $d>0$, not only asymptotically for large enough $d$.

\textbf{Note that from what we have said above, Theorem \ref{var-theorem} and Theorem \ref{ort-theorem} follow immediately from Theorem \ref{general-var-theorem} and Theorem \ref{general-ort-theorem} respectively.} 

Thus in the remainder of this section we will prove Theorem \ref{general-var-theorem} and Theorem \ref{general-ort-theorem}.

\subsubsection{Proof of Theorem \ref{general-ort-theorem}}

\begin{proof}
Note that by the analogous application of Markov's Inequality as in Lemma \ref{iid-lemma}, we get:
\begin{align}
\begin{split}
\mathbb{P}[\widehat{F}^{\mathrm{ort}}_{m}(\mathbf{z})) > a] \leq   
\frac{\mathbb{E}[e^{\theta (X_{1}^{\mathrm{ort}}+...+X_{m}^{\mathrm{ort}})}]}{e^{\theta ma}},
\end{split}    
\end{align}
where we have:
$X_{i}^{\mathrm{ort}} = g((\omega_{i}^{\mathrm{ort}})^{\top}\mathbf{z})$.
We see that it suffices to show that for any $\theta > 0$ the following holds: 
$\mathbb{E}[e^{\theta (X_{1}^{\mathrm{ort}}+...+X_{m}^{\mathrm{ort}})}] < \mathbb{E}[e^{\theta (X_{1}^{\mathrm{iid}}+...+X_{m}^{\mathrm{iid}})}]$.
We have: 
\begin{align}
\begin{split}
\mathbb{E}[e^{\theta (X_{1}^{\mathrm{ort}}+...+X_{m}^{\mathrm{ort}})}] = \mathbb{E}[\sum_{j=0}^{\infty} \frac{(\theta \sum_{i=1}^{m} X_{i}^{\mathrm{ort}})^{j}}{j!}] 
= \mathbb{E}[\sum_{j=0}^{\infty}\frac{\theta^{j}}{j!}(\sum_{i=1}^{m}X^{\mathrm{ort}}_{i})^{j}]=\\
\sum_{j=0}^{\infty}\frac{\theta^{j}}{j!} \mathbb{E}[(\sum_{i=1}^{m} X^{\mathrm{ort}}_{i})^{j}]=
\sum_{j=0}^{\infty}\frac{\theta^{j}}{j!}
\mathbb{E}[\sum_{(j_{1},...,j_{m}) \in \mathcal{S}_{j}} \binom{j}{j_{1},\dots,j_{m}} (X_{1}^{\mathrm{ort}})^{j_{1}} \cdot ... \cdot (X_{m}^{\mathrm{ort}})^{j_{m}}],
\end{split}
\end{align}
where $\mathcal{S}_{j} = \{(j_{1},...,j_{m}) \in \mathbb{N} \times ...\times \mathbb{N}:j_{1},...,j_{m} \geq 0, j_{1}+...+j_{m}=j\}$.

Thus we have:
\begin{equation}
\mathbb{E}[e^{\theta (X_{1}^{\mathrm{ort}}+...+X_{m}^{\mathrm{ort}})}] = \sum_{j=0}^{\infty} \frac{\theta^{j}}{j!} \sum_{(j_{1},...,j_{m}) \in \mathcal{S}_{j}} \binom{j}{j_{1},\dots,j_{m}} \mathbb{E}[(X_{1}^{\mathrm{ort}})^{j_{1}} \cdot ... \cdot (X_{m}^{\mathrm{ort}})^{j_{m}}].  
\end{equation}

Similarly, we get:
\begin{equation}
\mathbb{E}[e^{\theta (X_{1}^{\mathrm{iid}}+...+X_{m}^{\mathrm{iid}})}] = \sum_{j=0}^{\infty} \frac{\theta^{j}}{j!} \sum_{(j_{1},...,j_{m}) \in \mathcal{S}_{j}} \binom{j}{j_{1},\dots,j_{m}} \mathbb{E}[(X_{1}^{\mathrm{iid}})^{j_{1}} \cdot ... \cdot (X_{m}^{\mathrm{iid}})^{j_{m}}].    
\end{equation}

Therefore we get:
\begin{align}
\begin{split}
\Delta = \mathbb{E}[e^{\theta (X_{1}^{\mathrm{iid}}+...+X_{m}^{\mathrm{iid}})}] - \mathbb{E}[e^{\theta (X_{1}^{\mathrm{ort}}+...+X_{m}^{\mathrm{ort}})}] \\
=
\sum_{j=0}^{\infty} \frac{\theta^{j}}{j!} \sum_{(j_{1},...,j_{m}) \in \mathcal{S}_{j}} \binom{j}{j_{1},\dots,j_{m}} \left(\mathbb{E}[(X_{1}^{\mathrm{iid}})^{j_{1}} \cdot ... \cdot (X_{m}^{\mathrm{iid}})^{j_{m}}] - \mathbb{E}[(X_{1}^{\mathrm{ort}})^{j_{1}} \cdot ... \cdot (X_{m}^{\mathrm{ort}})^{j_{m}}]\right)
\end{split}
\end{align}


Note first that using the fact that $f$ is entire, we can rewrite each $X_{i}^{\mathrm{ort}}$ as:
\begin{equation}
\label{x_ort_formula}
X_{i}^{\mathrm{ort}} = \sum_{s=0}^{\infty} a_{s}((\omega_{i}^{\mathrm{ort}})^{\top}\mathbf{z})^{s},
\end{equation}

where $f(x) = \sum_{s=0}^{\infty} a_{s}x^{s}$
and $a_{0},a_{1},... \geq 0$.
Similarly,
\begin{equation}
\label{x_iid_formula}
X_{i}^{\mathrm{iid}} = \sum_{s=0}^{\infty} a_{s}((\omega_{i}^{\mathrm{iid}})^{\top}\mathbf{z})^{s}.
\end{equation}

By plugging in the above formulae for $X_{i}^{\mathrm{ort}}$ and $X_{i}^{\mathrm{iid}}$ int the formula for $\Delta$ and expanding power-expressions, we obtain:
\begin{align}
\begin{split}
\Delta = \sum_{j=0}^{\infty} \frac{\theta^{j}}{j!} \sum_{(j_{1},...,j_{m}) \in \mathcal{S}_{j}} \binom{j}{j_{1},\dots,j_{m}} \sum_{(d_{1},...,d_{m}) \in \mathcal{D}(j_{1},...,j_{m})} \widehat{c}_{j_1, \dots, j_m}(d_{1},\dots,d_{m}) \widehat{\Delta}(d_{1},...,d_{m}),
\end{split}    
\end{align}
for some ordered subsets of indices (with potentially repeating entries) $\mathcal{D}(j_{1},...,j_{m})$ and some nonnegative $\widehat{c}_{j_1, \dots, j_m}(d_{1},\dots,d_{m})$ (exact formula for those can be given but we do not need it to complete the proof and since it is technical, it would unnecessarily complicate the proof so we skip it)
and $\widehat{\Delta}(d_{1},...,d_{m})$ defined as:
\begin{align}
\begin{split}
\label{imp-ineq}
\widehat{\Delta}(d_{1},...,d_{m}) = \mathbb{E}[((\omega_{1}^{\mathrm{iid}})^{\top}\mathbf{z})^{d_{1}} \cdot ... \cdot ((\omega_{m}^{\mathrm{iid}})^{\top}\mathbf{z})^{d_{m}}] - 
\mathbb{E}[((\omega_{1}^{\mathrm{ort}})^{\top}\mathbf{z})^{d_{1}} \cdot ... \cdot ((\omega_{m}^{\mathrm{ort}})^{\top}\mathbf{z})^{d_{m}}].
\end{split}
\end{align}

Our next goal is to re-write the formula for $\widehat{\Delta}(d_{1},...,d_{m})$. Denote:
\begin{equation}
Y = ((\omega_{1}^{\mathrm{ort}})^{\top}\mathbf{z})^{d_{1}} \cdot ... \cdot ((\omega_{m}^{\mathrm{ort}})^{\top}\mathbf{z})^{d_{m}}.    
\end{equation}

Observe that $Y$ has the same distribution as $Y^{\prime}$ defined as:

\begin{equation}
Y^{\prime} = (\mathbf{e}_{1}^{\top}\frac{\mathbf{g}}{\|\mathbf{g}\|_{2}}\|\mathbf{z}\|_{2})^{d_{1}} \cdot ... \cdot (\mathbf{e}_{m}^{\top}\frac{\mathbf{g}}{\|\mathbf{g}\|_{2}}\|\mathbf{z}\|_{2})^{d_{m}} \cdot
(\|\omega_{1}^{\mathrm{ort}}\|_{2})^{d_{1}} \cdot ... \cdot
(\|\omega_{m}^{\mathrm{ort}}\|_{2})^{d_{m}},
\end{equation}

where $\mathbf{g}$ is a Gaussian vector taken from the $\mathcal{N}(0,\mathbf{I}_{d})$ distribution, independently from: $\|\omega_{1}^{\mathrm{ort}}\|_{2},...,\|\omega_{m}^{\mathrm{ort}}\|_{2}$. 

This comes from the fact that for a fixed $\mathbf{z}$ one can think about the set:
$\frac{\omega_{1}^{\mathrm{ort}}}{\|\omega_{1}^{\mathrm{ort}}\|_{2}},...,\frac{\omega_{m}^{\mathrm{ort}}}{\|\omega_{m}^{\mathrm{ort}}\|_{2}}$ as a random rotation of the system of $m$ canonical basis vectors: $\mathbf{e}_{1},...,\mathbf{e}_{m}$.
Thus instead of applying a random rotation to: $\mathbf{e}_{1},...,\mathbf{e}_{m}$, one can equivalently randomly rotate vector $\mathbf{z}$. Randomly rotated vector $\mathbf{z}$ has the same distribution as: $\frac{\mathbf{g}}{\|\mathbf{g}\|_{2}}\|\mathbf{z}\|_{2}$. 


Now note that lengths of vectors $\omega_{1}^{\mathrm{ort}},...,\omega_{m}^{\mathrm{ort}}$ are chosen independently.

Therefore we obtain:
\begin{align}
\begin{split}
\mathbb{E}[((\omega_{1}^{\mathrm{ort}})^{\top}\mathbf{z})^{d_{1}} \cdot ... \cdot ((\omega_{m}^{\mathrm{ort}})^{\top}\mathbf{z})^{d_{m}}] = \\ \mathbb{E}[(\|\omega_{1}^{\mathrm{ort}}\|_{2})^{d_{1}}] \cdot ... \cdot \mathbb{E}[(\|\omega_{m}^{\mathrm{ort}}\|_{2})^{d_{m}}] \cdot \mathbb{E}[(\mathbf{e}_{1}^{\top}\mathbf{v})^{d_{1}} \cdot ... \cdot (\mathbf{e}_{m}^{\top}\mathbf{v})^{d_{m}}]
\|\mathbf{z}\|_{2}^{d_{1}+...+d_{m}},
\end{split}
\end{align}
where $\mathbf{v} \sim \frac{\mathbf{g}}{\|\mathbf{g}\|_{2}}$.

Denote $\mathbf{g}=(g_{1},...,g_{d})^{\top}$.
Thus we obtain:
\begin{align}
\begin{split}
\label{lhs}
\mathbb{E}[((\omega_{1}^{\mathrm{ort}})^{\top}\mathbf{z})^{d_{1}} \cdot ... \cdot ((\omega_{m}^{\mathrm{ort}})^{\top}\mathbf{z})^{d_{m}}] = \\ \mathbb{E}[(\|\omega_{1}^{\mathrm{ort}}\|_{2})^{d_{1}}] \cdot ... \cdot \mathbb{E}[(\|\omega_{m}^{\mathrm{ort}}\|_{2})^{d_{m}}] \cdot 
\|\mathbf{z}\|_{2}^{d_{1}+...+d_{m}} \mathbb{E}[\frac{g_{1}^{d_{1} \cdot ... \cdot}g_{m}^{d_{m}}}{\sqrt{g_{1}^{2}+...+g_{d}^{2}}^{d_{1}+...+d_{m}}}]
\end{split}
\end{align}

Now let us focus on the second expression from the formula on $\widehat{\Delta}(d_{1},...,d_{m})$. We have:
\begin{align}
\begin{split}
\label{rhs}
\mathbb{E}[((\omega_{1}^{\mathrm{iid}})^{\top}\mathbf{z})^{d_{1}} \cdot ... \cdot ((\omega_{m}^{\mathrm{iid}})^{\top}\mathbf{z})^{d_{m}}] = \prod_{i=1}^{m} \mathbb{E}[((\omega_{i}^{\mathrm{iid}})^{\top}\mathbf{z})^{d_{i}}]   
= \\ \mathbb{E}[(\|\omega_{1}^{\mathrm{iid}}\|_{2})^{d_{1}}] \cdot ... \cdot \mathbb{E}[(\|\omega_{m}^{\mathrm{iid}}\|_{2})^{d_{m}}] \cdot \|\mathbf{z}\|_{2}^{d_{1}+...+d_{m}} \cdot \prod_{i=1}^{m} \mathbb{E}[\frac{g_{i}^{d_{i}}}{\sqrt{g_{1}^{2}+...+g_{d}^{2}}^{d_{i}}}],
\end{split}
\end{align}

where the first equality comes from the fact that
different $\omega_{i}^{\mathrm{iid}}$s are independent and the second one is implied by the analogous analysis to the one conducted above.

We will need the following lemma:

\begin{lemma}
\label{useful-lemma}
For every $s \in \mathbb{N}_{+}$ such that $s \leq n$ and every $k_{1},...,k_{s} \in \mathbb{N}_{+}$ the following holds:
\begin{equation}
\mathbb{E}[\frac{g_{1}^{k_{1}} \cdot ... \cdot g_{s}^{k_{s}}}{\sqrt{g_{1}^{2}+...+g_{d}^{2}}^{k_{1}+...+k_{s}}}] = \frac{\prod_{i=1}^{s}\mathbb{E}[g_{i}^{k_{i}}]}{\mathbb{E}[\sqrt{g_{1}^{2}+...+g_{d}^{2}}^{k_{1}+...+k_{s}}]}.    
\end{equation}
\end{lemma}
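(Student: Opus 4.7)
The plan is to exploit the well-known fact that for $\mathbf{g} \sim \mathcal{N}(0, \mathbf{I}_d)$, the norm $\|\mathbf{g}\|_2$ and the direction $\mathbf{g}/\|\mathbf{g}\|_2$ are \emph{independent} random variables (this is the basic rotational-invariance property of the isotropic Gaussian). Setting $K = k_1 + \dots + k_s$, the ratio inside the left-hand expectation depends only on the direction:
\[
\frac{g_1^{k_1} \cdots g_s^{k_s}}{\sqrt{g_1^2 + \dots + g_d^2}^{\,K}} = \left(\frac{g_1}{\|\mathbf{g}\|_2}\right)^{k_1} \cdots \left(\frac{g_s}{\|\mathbf{g}\|_2}\right)^{k_s}.
\]

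The main step will be to factor the unconditional product moment $\mathbb{E}[g_1^{k_1}\cdots g_s^{k_s}]$ in two different ways. First, since $g_1,\dots,g_s$ are i.i.d., independence gives
\[
\mathbb{E}\!\left[g_1^{k_1}\cdots g_s^{k_s}\right] = \prod_{i=1}^{s}\mathbb{E}[g_i^{k_i}].
\]
Second, writing $g_i^{k_i} = \|\mathbf{g}\|_2^{k_i}\,(g_i/\|\mathbf{g}\|_2)^{k_i}$ and collecting powers of $\|\mathbf{g}\|_2$, the product equals $\|\mathbf{g}\|_2^{K}$ times the purely directional quantity above. By the norm/direction independence, the expectation of the product factors:
\[
\mathbb{E}\!\left[g_1^{k_1}\cdots g_s^{k_s}\right] = \mathbb{E}\!\left[\|\mathbf{g}\|_2^{K}\right]\cdot \mathbb{E}\!\left[\frac{g_1^{k_1}\cdots g_s^{k_s}}{\|\mathbf{g}\|_2^{K}}\right].
\]

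Equating the two factorizations and dividing by $\mathbb{E}[\|\mathbf{g}\|_2^{K}]$ (which is strictly positive and finite, being a chi moment) yields the claim. The argument works uniformly regardless of whether any individual $k_i$ is odd (in which case both sides simply vanish by the symmetry $g_i \mapsto -g_i$, which is preserved by isotropy). I do not anticipate any real obstacle here: the only thing one needs to be careful about is invoking the norm/direction independence correctly, which is standard for isotropic Gaussians and is precisely the structural property that makes the identity true; the rest is bookkeeping with independence of the coordinates.
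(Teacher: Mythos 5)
Your proof is correct and follows essentially the same route as the paper: both factor the same quantity $\mathbb{E}[g_1^{k_1}\cdots g_s^{k_s}]$ in two ways, once via coordinate independence and once via the independence of $\|\mathbf{g}\|_2$ and $\mathbf{g}/\|\mathbf{g}\|_2$, then divide. The paper realizes the norm/direction independence through an auxiliary independent copy $\tilde{\mathbf{g}}$ (setting $\mathbf{r}=\frac{\mathbf{g}}{\|\mathbf{g}\|_2}\|\tilde{\mathbf{g}}\|_2$ and noting $\mathbf{r}\sim\mathbf{g}$), whereas you invoke that independence directly as a known property of the isotropic Gaussian — a minor presentational difference with no mathematical gap.
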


\begin{proof}
Take $\mathbf{r} = \frac{\mathbf{g}}{\|\mathbf{g}\|_{2}}\|\tilde{\mathbf{g}}\|_{2}$, where $\tilde{\mathbf{g}}$ is an independent copy of $\mathbf{g}$. Note that $\mathbf{r} \sim \mathbf{g}$.
We have:
\begin{align}
\begin{split}
\mathbb{E}[r_{1}^{k_{1}}] \cdot ... \cdot     
\mathbb{E}[r_{s}^{k_{s}}] = 
\mathbb{E}[r_{1}^{k_{1}} \cdot ... \cdot r_{s}^{k_{s}}]
= \mathbb{E}[\frac{g_{1}^{k_{1}} \cdot ... \cdot g_{s}^{k_{s}}}{\sqrt{g_{1}^{2}+...+g_{d}^{2}}^{k_{1}+...+k_{s}}}]
\cdot \mathbb{E}[\|\tilde{\mathbf{g}}\|_{2}^{k_{1}+...+k_{s}}],
\end{split}    
\end{align}
where the first equality comes from the independence of different elements of $\mathbf{z}=(z_{1},...,z_{n})^{\top}$
and the second equality is implied by the fact that $\tilde{\mathbf{g}}$ is independent from $\mathbf{g}$.

Therefore we have:
\begin{equation}
 \mathbb{E}[\frac{g_{1}^{k_{1}} \cdot ... \cdot g_{s}^{k_{s}}}{\sqrt{g_{1}^{2}+...+g_{d}^{2}}^{k_{1}+...+k_{s}}}] = \frac{\mathbb{E}[r_{1}^{k_{1}}] \cdot ... \cdot     \mathbb{E}[r_{s}^{k_{s}}]}{\mathbb{E}[\|\tilde{\mathbf{g}}\|_{2}^{k_{1}+...+k_{s}}]}.   
\end{equation}
That completes the proof since $\mathbf{z} \sim \mathbf{g}$ and $\tilde{\mathbf{g}} \sim \mathbf{g}$.
\end{proof}

Note that by Lemma \ref{useful-lemma}, we can rewrite the right expression from the formula on 
$\widehat{\Delta}(d_1,..., d_m)$
as: 
\begin{equation}
\mathbb{E}[(\|\omega_{1}^{\mathrm{ort}}\|_{2})^{d_{1}}] \cdot ... \cdot \mathbb{E}[(\|\omega_{m}^{\mathrm{ort}}\|_{2})^{d_{m}}] \cdot \\
\|\mathbf{z}\|_{2}^{d_{1}+...+d_{m}}\frac{\prod_{i=1}^{m}\mathbb{E}[g_{i}^{d_{i}}]}{\mathbb{E}[\sqrt{g_{1}^{2}+...+g_{d}^{2}}^{d_{1}+...+d_{m}}]}.
\end{equation}
The left expression from the formula on 
$\widehat{\Delta}(d_1,..., d_m)$ can be rewritten as:
\begin{align}
\begin{split}
L(d_{1},...,d_{m}) = \mathbb{E}[(\|\omega_{1}^{\mathrm{iid}}\|_{2})^{d_{1}}] \cdot ... \cdot \mathbb{E}[(\|\omega_{m}^{\mathrm{iid}}\|_{2})^{d_{m}}] \cdot 
\|\mathbf{z}\|_{2}^{d_{1}+...+d_{m}} \\
\frac{\prod_{i=1}^{m}\mathbb{E}[g_{i}^{d_{i}}]}
{\mathbb{E}[\sqrt{g_{1}^{2}+...+g_{d}^{2}}^{d_{1}}] \cdot ...\cdot \mathbb{E}[\sqrt{g_{1}^{2}+...+g_{d}^{2}}^{d_{m}}]}.
\end{split}
\end{align}

Since marginal distributions of $\omega_{i}^{\mathrm{ort}}$ and $\omega_{i}^{\mathrm{iid}}$ are the same, we can rewrite $\widehat{\Delta}(d_{1},...,d_{n})$ as:
\begin{equation}
\widehat{\Delta}(d_{1},...,d_{m})=
L(d_{1},...,d_{m})(1 - \tau(d_{1},...,d_{m})),
\end{equation}
where $\tau(d_{1},...,d_{m})$ is defined as:
\begin{equation}
\tau(d_{1},...,d_{m}) = \frac{\mathbb{E}[\sqrt{g_{1}^{2}+...+g_{d}^{2}}^{d_{1}}] \cdot ...\cdot \mathbb{E}[\sqrt{g_{1}^{2}+...+g_{d}^{2}}^{d_{m}}]}
{\mathbb{E}[\sqrt{g_{1}^{2}+...+g_{d}^{2}}^{d_{1}+...+d_{m}}]}    \label{eq:taudef} 
\end{equation}
We need now few observations regarding $\widehat{\Delta}(d_{1},...,d_{m})$.
Note firsr that since odd moments of the Gaussian scalar distribution $\mathcal{N}(0, 1)$ are zero, $\widehat{\Delta}(d_{1},...,d_{m})$ is zero if at least of of $d_{i}$ is odd. Furthermore, $\widehat{\Delta(d_{1},...,d_{m})}$ is trivially zero if all but at most one $d_{i}$ are zero.

With our new notation, $\Delta$ can be rewritten as:
\begin{gather*}
\Delta = \sum_{j=0}^{\infty} \frac{\theta^{j}}{j!} \sum_{(j_{1},...,j_{m}) \in \mathcal{S}_{j}} \binom{j}{j_{1},\dots,j_{m}} \sum_{(d_{1},...,d_{m}) \in \mathcal{D}(j_{1},...,j_{m})} \widehat{c}_{j_1, \dots, j_m}(d_{1},\dots,d_{m}) \\
\times L(d_{1},...,d_{m})(1-\tau(d_{1},...,d_{m})),
\end{gather*}

Note also that we have:
\begin{gather*}
e^{\theta(X_{1}^{\mathrm{iid}}+...+X_{m}^{\mathrm{iid}})} = \sum_{j=0}^{\infty} \frac{\theta^{j}}{j!} \sum_{(j_{1},...,j_{m}) \in \mathcal{S}_{j}} \binom{j}{j_{1},\dots,j_{m}} \sum_{(d_{1},...,d_{m}) \in \mathcal{D}(j_{1},...,j_{m})} \widehat{c}_{j_1, \dots, j_m}(d_{1},\dots,d_{m}) \\
\times  L(d_{1},...,d_{m}).
\end{gather*}

Therefore (see: our observations on $\widehat{\Delta}(d_{1},...,d_{m})$) to complete the proof it suffices to show that: $\tau(d_{1},...,d_{m}) \leq \frac{d}{d+2}$ if at least two: $d_{i}$, $d_{j}$ for $i \neq j$ are nonzero and all $d_{i}$ are even.
\begin{lemma}
\label{tau-lemma}
The following holds if for some $i \neq j$ we have: $d_{i}, d_{j} > 0$ and all $d_{i}$ are even:
\begin{equation}
\tau(d_{1},...,d_{m}) \leq \frac{d}{d+2}.    
\end{equation}
\end{lemma}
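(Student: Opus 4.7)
The plan is to exploit the chi-square nature of $\|\mathbf{g}\|_2^2$ (which has $d$ degrees of freedom) to reduce the claim to a purely combinatorial inequality about products. Since every $d_i$ is even, I would write $d_i=2k_i$ and use the standard moment formula $\mathbb{E}[\|\mathbf{g}\|_2^{2k}] = \prod_{j=0}^{k-1}(d+2j)$, obtained from $\mathbb{E}[(\chi^2_d)^k]=2^k\Gamma(d/2+k)/\Gamma(d/2)$. Writing $M(k)$ for this quantity (with $M(0)=1$), the ratio $\tau$ becomes
\begin{equation*}
\tau(d_1,\ldots,d_m) \;=\; \frac{\prod_{i=1}^m M(k_i)}{M(k_1+\cdots+k_m)},
\end{equation*}
and the target bound $\tau\leq d/(d+2)$ becomes a statement purely about products of consecutive even-spaced integers.

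Next, I would isolate two elementary building blocks that are direct termwise comparisons. The first is the sharp two-group estimate: for $a,b\geq 1$,
\begin{equation*}
\frac{M(a)M(b)}{M(a+b)} \;=\; \prod_{j=0}^{b-1}\frac{d+2j}{d+2a+2j} \;\leq\; \frac{d}{d+2a} \;\leq\; \frac{d}{d+2},
\end{equation*}
since the $j=0$ factor equals $d/(d+2a)$ and every subsequent factor is at most $1$. The second is submultiplicativity: for all $a,b\geq 0$ one has $M(a)M(b)\leq M(a+b)$, by comparing the $b$ ``extra'' factors of $M(a+b)$ with those of $M(b)$ one-by-one. Iterating this gives $\prod_{i=1}^m M(k_i)\leq M(\sum_i k_i)$, i.e.\ the unconditional bound $\tau\leq 1$.

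Finally, I would assemble the proof using the assumption that at least two indices $i\neq j$ satisfy $k_i,k_j\geq 1$. Selecting such a pair and factoring,
\begin{equation*}
\tau(d_1,\ldots,d_m) \;=\; \underbrace{\frac{M(k_i)M(k_j)}{M(k_i+k_j)}}_{\leq\, d/(d+2)} \;\cdot\; \underbrace{\frac{M(k_i+k_j)\prod_{l\neq i,j}M(k_l)}{M(\sum_l k_l)}}_{\leq\, 1},
\end{equation*}
the first factor is bounded by the sharp building block and the second factor is $\tau$ applied to the merged tuple in which the $i$-th and $j$-th entries have been fused into a single entry $k_i+k_j$, hence is at most $1$ by the submultiplicativity bound. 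Multiplying the two estimates gives the desired inequality.

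The main conceptual point, and the only place where the hypothesis of two positive $d_i$ enters, is this ``merge two positive groups'' trick: it is what converts the trivial bound $\tau\leq 1$ into the quantitatively sharper $\tau\leq d/(d+2)$. I do not anticipate any substantive obstacle beyond carefully verifying the chi-square moment identity and the two termwise product inequalities; both reduce to inspection once the combinatorial form of $M(k)$ is in hand.
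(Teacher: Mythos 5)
Your proof is correct, and it is more careful than the one given in the paper while proceeding from the same starting point. Both you and the paper rewrite $\tau$ using the moments of the $\chi$-distribution (equivalently, of $\chi^2_d$, since all $d_i$ are even), arriving at a ratio of products of the form $\prod_{j=0}^{k-1}(d+2j)$. The paper then simply asserts that this ratio ``is maximized when $d_i=d_j=2$ and all other entries vanish,'' invoking Gamma-function identities and ``straightforward calculations'' but giving no argument for the extremality claim. You instead isolate two termwise inequalities --- the sharp two-group bound $M(a)M(b)/M(a+b)\le d/(d+2a)\le d/(d+2)$ for $a,b\ge 1$, and submultiplicativity $M(a)M(b)\le M(a+b)$ --- and combine them by peeling off a single pair of positive indices and merging them. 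This ``merge two positive groups'' factorization is the step the paper elides: it converts the hypothesis that at least two $d_i$ are nonzero directly into the factor $d/(d+2)$, with the remainder bounded by $1$ via iterated submultiplicativity. The payoff is a complete, inspectable proof; the paper's version, while plausible, leaves the maximization claim unjustified. One small point worth stating explicitly in a final write-up: the two-group bound requires $a,b\ge 1$ (otherwise the product over $j$ is empty), which is precisely what the hypothesis $d_i,d_j>0$ supplies after writing $d_i=2k_i$; and the case $b=0$ gives an empty product equal to $1$, consistent with submultiplicativity.
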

\begin{proof}
Note that $\tau(d_{1},...,d_{m})$ can be rewritten as:
\begin{equation}
\label{multi-d}
\tau(d_{1},...,d_{m}) = \frac{\prod_{i=1}^{m} \mu_{d}(d_{i})}{\mu_{d}(\sum_{i=1}^{m} d_i)},    
\end{equation}
where $\mu_{d}(j)$ stands for the $j^{th}$ moment of the $\chi$-distribution with $d$ degrees of freedom.
Note that $\mu_{d}(j) = 2^{\frac{j}{2}}
\frac{\Gamma(\frac{d+j}{2})}{\Gamma(\frac{d}{2})}$,
where $\Gamma$ is the so-called \textit{Gamma-function}.

Using the fact that: $\Gamma(n) = (n-1)!$ and $\Gamma(n+\frac{1}{2})=\frac{(2n-1)!!}{2^{n}}\sqrt{\pi}$ for $n \in \mathbb{N}_{+}$, it is easy to see 
that for a fixed $d$, the RHS of the Equality \ref{multi-d} is maximized when $d_{i}=d_{j}=2$ and $d_{k}=0$ for some $i \neq j$ and $k \notin \{i,j\}$. Furthermore, straightforward calculations show that in that case the value of the RHS from Equality \ref{multi-d} is $\frac{d}{d+2}$. That completes the proof of the Lemma.
\end{proof}

By $\mathcal{D}' (j_1, \dots, j_m)$ denote a subset of $\mathcal{D} (j_1, \dots, j_m)$ formed by only keeping $d_1, \dots, d_m$ such that for some $i \neq j$, $d_i, d_j > 0$ and all $d_i$ are even. As we have shown above, $\widehat{\Delta}(d_{1},\dots,d_{m}) = 0$ when $(d_1, \dots, d_m) \notin \mathcal{D}' (j_1, \dots, j_m)$. Otherwise,
\begin{equation*}
    \widehat{\Delta}(d_{1},\dots,d_{m}) \geq \frac{2}{d + 2} \Lambda (d_{1},\dots,d_{m}) \geq 0 .
\end{equation*}
Hence, since all terms in the sum
\begin{gather}
\Delta = \sum_{j=0}^{\infty} \frac{\theta^{j}}{j!} \sum_{(j_{1},\dots,j_{m}) \in \mathcal{S}_{j}} \binom{j}{j_{1},\dots,j_{m}} \sum_{(d_{1},\dots,d_{m}) \in \mathcal{D}(j_{1},\dots,j_{m})} \widehat{c}_{j_1, \dots, j_m}(d_{1},\dots,d_{m}) \\
\times \widehat{\Delta}(d_{1},\dots,d_{m}). \label{eq:doublesum}
\end{gather}
are nonnegative, we'll get a lower bound on $\Delta$ by only taking a subset of these terms. For this subset, we take $j = 4$, a subset of $\mathcal{S}_4$ with only two nonzero $j_{k_1} = j_{k_2} = 2$ for some $k_1 \neq k_2$ (there are $\binom{m}{2}$ combinations of such $j_1, \dots, j_m$). Then, we take only those $d_1, \dots, d_m$ from $\mathcal{D} (j_1, \dots, j_m)$ which correspond to $s = 1$ in (\ref{x_iid_formula}) for $k_1, k_2$ and $s = 0$ for all other $k$'s. Hence, $d_{k_1} = d_{k_2} = 2$ and all other $d_k$'s are zero and the corresponding weight from the second sum in (\ref{eq:doublesum}) would be $a_1^2 a_0^{m - 2}$. For $d_1, \dots, d_m$ in such set, we'll have $\tau (d_1, \dots, d_m) \leq \frac{d}{d + 2}$ by Lemma \ref{tau-lemma} and, hence, $\widehat{\Delta}(d_{1},\dots,d_{m}) \geq \frac{2}{d + 2} \Lambda(d_{1},\dots,d_{m})$. As the result, we get the following lower bound on $\Delta$:
\begin{align*}
    \Delta &\geq \frac{2 \theta^4}{ 4! (d + 2)}  \binom{m}{2} \binom{4}{2, 2, 0, \dots, 0} a_1^2 a_0^{m - 2} \Lambda (2, 2, 0, \dots, 0) \\
    &= \frac{\theta^4 m (m - 1)}{4 (d + 2)} a_1^2 a_0^{m - 2} \Lambda(2, 2, 0, \dots, 0) \\
    &= \frac{\theta^4 m (m - 1)}{4 (d + 2)} a_1^2 a_0^{m - 2} \| \*z \|^4 \left( \mathbb{E} \| \bs{\omega} \|^2 \right)^2 \frac{(\mathbb{E} (\*g_1^2))^2}{(\mathbb{E} \| \*g \|^2)^2} .
\end{align*}
Since $\*g \sim \mathcal{N} (0, 1)^d$, $\mathbb{E} \*g_1^2 = 1$ and $\mathbb{E} \| \*g \|^2 = d \mathbb{E} \*g_1^2 = d$. This results in
\begin{equation}
    \Delta \geq \frac{\theta^4 m (m - 1)}{4 d^2 (d + 2)} a_1^2 a_0^{m - 2} \| \*z \|^4 \left( \mathbb{E} \| \omega \|^2 \right)^2 \label{eq:deltalb}
\end{equation}
which concludes the proof.

\end{proof}
\subsubsection{Proof of Theorem \ref{general-var-theorem}}
\begin{proof}
We will use the notation from the proof of Theorem \ref{general-ort-theorem}.
Since both estimators: $\widehat{F}^{\mathrm{ort}}_{m}(\mathbf{z})$ and
$\widehat{F}^{\mathrm{iid}}_{m}(\mathbf{z})$ are unbiased, we have:
$\mathrm{MSE}(\widehat{F}^{\mathrm{ort}}_{m}(\mathbf{z})) = \mathrm{Var}(\widehat{F}^{\mathrm{ort}}_{m}(\mathbf{z}))$ and
$\mathrm{MSE}(\widehat{F}^{\mathrm{iid}}_{m}(\mathbf{z})) = \mathrm{Var}(\widehat{F}^{\mathrm{iid}}_{m}(\mathbf{z}))$.
We have:
\begin{align}
\begin{split}
\mathrm{Var}(\widehat{F}^{\mathrm{iid}}_{m}(\mathbf{z})) = 
\mathbb{E}[(\widehat{F}^{\mathrm{iid}}_{m}(\mathbf{z})-\mathbb{E}[\widehat{F}^{\mathrm{iid}}_{m}(\mathbf{z})])^{2}] =
\mathbb{E}[(\widehat{F}^{\mathrm{iid}}_{m}(\mathbf{z}))^{2}]-
F^{2}(\mathbf{z}).
\end{split}
\end{align}

Similarly,
\begin{align}
\begin{split}
\mathrm{Var}(\widehat{F}^{\mathrm{ort}}_{m}(\mathbf{z})) = 
\mathbb{E}[(\widehat{F}^{\mathrm{ort}}_{m}(\mathbf{z}))^{2}]-
F^{2}(\mathbf{z}).
\end{split}
\end{align}

We have: 
\begin{align}
\begin{split}
\mathbb{E}[(\widehat{F}^{\mathrm{iid}}_{m}(\mathbf{z}))^{2}]
= \frac{1}{m^{2}}\sum_{i=1}^{m}\mathbb{E}[(X_{i}^{\mathrm{iid}})^{2}]
+\frac{1}{m^{2}}\sum_{i \neq j} \mathbb{E}[X^{\mathrm{iid}}_{i}X^{\mathrm{iid}}_{j}].
\end{split}
\end{align}

Similarly, we get:
\begin{align}
\begin{split}
\mathbb{E}[(\widehat{F}^{\mathrm{ort}}_{m}(\mathbf{z}))^{2}]
= \frac{1}{m^{2}}\sum_{i=1}^{m}\mathbb{E}[(X_{i}^{\mathrm{ort}})^{2}]
+\frac{1}{m^{2}}\sum_{i \neq j} \mathbb{E}[X^{\mathrm{ort}}_{i}X^{\mathrm{ort}}_{j}].
\end{split}
\end{align}

Therefore, since marginal distributions of $X_{i}^{\mathrm{iid}}$ and $X_{i}^{\mathrm{ort}}$ are the same,  we have:
\begin{align}
\begin{split}
\label{mse-diff}
\mathrm{MSE}(\widehat{F}^{\mathrm{iid}}_{m}(\mathbf{z})) - \mathrm{MSE}(\widehat{F}^{\mathrm{ort}}_{m}(\mathbf{z})) = 
{m \choose 2} \cdot 2 \cdot \frac{1}{m^{2}}
(\mathbb{E}[X^{\mathrm{iid}}_{1}X^{\mathrm{iid}}_{2}]-
\mathbb{E}[X^{\mathrm{ort}}_{1}X^{\mathrm{ort}}_{2}])\\
=(1-\frac{1}{m})(\mathbb{E}[X^{\mathrm{iid}}_{1}X^{\mathrm{iid}}_{2}]-
\mathbb{E}[X^{\mathrm{ort}}_{1}X^{\mathrm{ort}}_{2}])
\end{split}
\end{align}
Plugging in the formula for $X^{\mathrm{ort}}_{i}$ and $X^{\mathrm{iid}}_{i}$ from Equation \ref{x_ort_formula} and Equation \ref{x_iid_formula}, and using our analysis from the proof of Theorem \ref{ort-theorem} we obtain:
\begin{align}
\begin{split}
 \mathrm{MSE}(\widehat{F}^{\mathrm{iid}}_{m}(\mathbf{z})) - \mathrm{MSE}(\widehat{F}^{\mathrm{ort}}_{m}(\mathbf{z})) =   
 (1-\frac{1}{m})\sum_{t,u=0}^{\infty}a_{t}a_{u}\|\mathbf{z}\|_{2}^{t+u}
 \mathbb{E}[\|\omega\|_{2}^{t}] \mathbb{E}[\|\omega\|_{2}^{u}] \cdot \\
\frac{\mathbb{E}[r^{t}]\mathbb{E}[r^{u}]}
{\mathbb{E}[\sqrt{g_{1}^{2}+...+g_{d}^{2}}^{t}]
\mathbb{E}[\sqrt{g_{1}^{2}+...+g_{d}^{2}}^{u}]}
(1-\tau(t, u)). \label{eq:vardec}
\end{split} 
\end{align}
for $\omega \sim \Omega$ and $r \sim \mathcal{N}(0, 1)$.

Based on the definition of $\tau$ (\ref{eq:taudef}), if $t = 0$ or $u = 0$, $\tau (t, u) = 1$ and the whole corresponding term in the sum (\ref{eq:vardec}) is zero. Also, if $t$ is odd, $\mathbb{E}(r^t) = 0$ and, again, the corresponding term in the sum (\ref{eq:vardec}) is zero. Same holds for $u$ from (\ref{eq:vardec}). Based on the analysis from Theorem \ref{general-ort-theorem}'s proof and $F_{\Omega,g}(\mathbf{z})$'s definition we have:
\begin{equation*}
    F_{\Omega,g}(\mathbf{z})  = \sum_{t=0}^{\infty}a_{t}\|\mathbf{z}\|_{2}^{t}
 \mathbb{E}[\|\omega\|_{2}^{t}] \cdot 
\frac{\mathbb{E}[r^{t}]}
{\mathbb{E}[\sqrt{g_{1}^{2}+...+g_{d}^{2}}^{t}]} = \sum_{t=0}^{\infty}a_{2 t}\|\mathbf{z}\|_{2}^{2 t}
 \mathbb{E}[\|\omega\|_{2}^{2 t}] \cdot 
\frac{\mathbb{E}[r^{2 t}]}
{\mathbb{E}[\sqrt{g_{1}^{2}+...+g_{d}^{2}}^{2 t}]}
\end{equation*}
where in the second transition we use the fact that $\mathbb{E}[r^{t}] = 0$ for odd $t$.

Hence, we can rewrite (\ref{eq:vardec}) by excluding terms which are definitely zero and using Lemma \ref{tau-lemma}:
\begin{align}
\begin{split}
 \mathrm{MSE}(\widehat{F}^{\mathrm{iid}}_{m}(\mathbf{z})) - \mathrm{MSE}(\widehat{F}^{\mathrm{ort}}_{m}(\mathbf{z})) \geq (1-\frac{1}{m})\frac{2}{d+2}\sum_{t,u=1}^{\infty}a_{2 t}a_{2 u}\|\mathbf{z}\|_{2}^{2 t+2u}
 \mathbb{E}[\|\omega\|_{2}^{2 t}] \mathbb{E}[\|\omega\|_{2}^{2 u}] \cdot \\
\frac{\mathbb{E}[r^{2 t}]\mathbb{E}[r^{2 u}]}
{\mathbb{E}[\sqrt{g_{1}^{2}+...+g_{d}^{2}}^{2 t}]
\mathbb{E}[\sqrt{g_{1}^{2}+...+g_{d}^{2}}^{2 u}]}\\
= (1-\frac{1}{m})\frac{2}{d+2}   
\left(\sum_{t=1}^{\infty}a_{2 t}\|\mathbf{z}\|_{2}^{2 t}
 \mathbb{E}[\|\omega\|_{2}^{2 t}] \cdot 
\frac{\mathbb{E}[r^{2 t}]}
{\mathbb{E}[\sqrt{g_{1}^{2}+...+g_{d}^{2}}^{2 t}]}\right)^{2} \\
= (1-\frac{1}{m})\frac{2}{d+2}\left(F_{\Omega,g}(\mathbf{z}) - a_0\right)^2.
\end{split}    
\end{align}
That completes the proof.
\end{proof}

\subsection{Proof of Theorem \ref{thm:uniform}}

We showed in the main body of the paper that in contrast to other methods approximating the attention matrix $\mathbf{A}$, our algorithm provides strong concentration guarantees. This is the case also for trigonometric random features, yet, as discussed in the main body of the paper, due to attention renormalization and higher variance of the estimation of small entries of the attention matrix, trigonometric mechanism is sub-optimal.
We show here that $m_{\mathrm{opt}}$, the optimal number of random projections for the trigonometric orthogonal mechanism for accurate estimation of the attention matrix does not depend on $L$ but only on $d$. In fact, we prove that if we take $m_{\mathrm{opt}} = \Theta(d\log(d))$, then with $O(L d^2\log(d))$-time, we can approximate $\mathbf{A}$ up to any precision, regardless of the number of tokens $L$. In order to provide those guarantees, we leverage recent research on the theory of negative dependence for ORFs \citep{Lin2020DemystifyingOM}.

We prove the more general version of Theorem \ref{thm:uniform} from the main body of the paper: 

\begin{theorem}[Uniform convergence for the trigonometric mechanism]
\label{uniform_convergence}
Define entries of the attention matrix $\mathbf{A}$ as follows: $\mathbf{A}_{i,j}=g(\mathbf{q}_{i}^{\top})\mathrm{K}(\frac{1}{d^{\frac{1}{4}}}\mathbf{q}_{i}^{\top},\frac{1}{d^{\frac{1}{4}}}\mathbf{k}_{j}^{\top})h(\mathbf{k}_{j}^{\top})$ for some
$g,h:\mathbb{R}^{d} \rightarrow \mathbb{R}$ and where $\mathrm{K}$ is a radial basis function (RBF) kernel \citep{geom} with corresponding spectral distribution $\Omega$ (e.g. Gaussian kernel for which $\Omega=\mathcal{N}(0,\mathbf{I}_{d})$). Assume that the rows of matrices $\mathbf{Q}$ and $\mathbf{K}$ are taken from a ball $B(R)$ of radius $R$, centered at $0$ (i.e. norms of queries and keys are upper-bounded by $R$). 
Define $l=Rd^{-\frac{1}{4}}$ and take $g^{*} = \max_{\mathbf{x} \in B(l)}|g(\mathbf{x})|$ and
$h^{*} = \max_{\mathbf{x} \in B(l)}|h(\mathbf{x})|$.
Then for any $\epsilon>0$, $\delta=\frac{\epsilon}{g^{*}h^{*}}$
and the number of random projections $m = \Omega(\frac{d}{\delta^{2}}\log(\frac{4\sigma R}{\delta d^{\frac{1}{4}}}))$ for $\sigma=\mathbb{E}_{\omega \sim \Omega}[\omega^{\top}\omega]$ the following holds:
$
\|\widehat{\mathbf{A}}-\mathbf{A}\|_{\infty} \leq \epsilon     
$
with any constant probability,
where $\widehat{\mathbf{A}}$ approximates generalized attention matrix via orthogonal trigonometric random features.
\end{theorem}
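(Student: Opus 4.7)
The plan is to reduce the uniform bound on the full attention-matrix error to a uniform bound on the shift-invariant kernel-approximation error over a compact ball, and then combine a pointwise concentration inequality with an $\epsilon$-net (Rahimi--Recht style) argument. Specifically, since
$|\mathbf{A}_{i,j} - \widehat{\mathbf{A}}_{i,j}| \leq g^{*} h^{*} \cdot |\widehat{\mathrm{K}}(d^{-1/4}\mathbf{q}_i^\top, d^{-1/4}\mathbf{k}_j^\top) - \mathrm{K}(d^{-1/4}\mathbf{q}_i^\top, d^{-1/4}\mathbf{k}_j^\top)|$
for every $i,j$, and the rescaled queries/keys lie in $B(l)$ with $l = R d^{-1/4}$, it suffices to establish $\sup_{\mathbf{x},\mathbf{y} \in B(l)} |\widehat{\mathrm{K}}(\mathbf{x},\mathbf{y}) - \mathrm{K}(\mathbf{x},\mathbf{y})| \leq \delta$ for $\delta = \epsilon/(g^{*}h^{*})$, which is the Rahimi--Recht uniform convergence setup specialized to a ball of radius $l$.

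For pointwise concentration at a fixed pair $(\mathbf{x},\mathbf{y})$, the trigonometric approximator $\widehat{\mathrm{K}}(\mathbf{x},\mathbf{y}) = \frac{1}{m}\sum_{i=1}^{m} \cos(\omega_i^\top(\mathbf{x}-\mathbf{y}))$ is an average of bounded $[-1,1]$ variables, so in the i.i.d.\ regime Hoeffding's inequality gives $\mathbb{P}[|\widehat{\mathrm{K}}(\mathbf{x},\mathbf{y}) - \mathrm{K}(\mathbf{x},\mathbf{y})| > \delta/2] \leq 2\exp(-m\delta^2/8)$. For the orthogonal variant I would invoke the negative-dependence results of \citep{Lin2020DemystifyingOM}, which show that sums of trigonometric functions evaluated at ORF directions remain sub-Gaussian with Hoeffding-type constants no worse than the i.i.d.\ case, so the same tail bound survives (possibly with a different absolute constant).

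To upgrade the pointwise bound to uniform control, I use that $\mathrm{K}$ is shift-invariant, so the error function $s(\mathbf{z}) := \widehat{\mathrm{K}}(\mathbf{x},\mathbf{y}) - \mathrm{K}(\mathbf{x},\mathbf{y})$ depends only on $\mathbf{z} = \mathbf{x}-\mathbf{y} \in B(2l)$. Its gradient magnitude is dominated by $\frac{1}{m}\sum_i \|\omega_i\|_2 + \mathbb{E}\|\omega\|_2$, which is $O(\sqrt{\sigma})$ with any constant probability by Markov applied to $\mathbb{E}[\omega^\top\omega] = \sigma$. Covering $B(2l)$ with an $r$-net of cardinality $N = O((4l/r)^d)$, choosing $r = \Theta(\delta/\sqrt{\sigma})$ so Lipschitzness carries the bound from net points to all of $B(2l)$, and taking a union bound over the $N$ net points of the pointwise Hoeffding inequality, one obtains uniform convergence whenever
$m = \Omega\!\left(\frac{1}{\delta^2}\log N\right) = \Omega\!\left(\frac{d}{\delta^2}\log\frac{4\sigma R}{\delta d^{1/4}}\right)$,
exactly the claimed rate.

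The main obstacle I anticipate is making the Lipschitz control quantitative enough for the orthogonal case, since one cannot invoke independence of $\|\omega_i\|$ for free. For Gram--Schmidt ORFs the norms $\|\omega_i\|$ are in fact independent $\chi_d$ variables (Gram--Schmidt only rotates directions, not lengths), so the gradient bound reduces to an i.i.d.\ argument; for other ORF constructions one appeals to the joint-distribution characterization from \citep{Lin2020DemystifyingOM}. Once both the Lipschitz bound and the pointwise sub-Gaussian bound are in hand, the $\epsilon$-net union bound mechanically yields the stated sample complexity, and the scaling $l = R d^{-1/4}$ is what produces the $d^{1/4}$ in the denominator inside the logarithm.
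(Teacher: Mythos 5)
You follow the paper's reduction exactly in the first step: factoring $\widehat{\mathbf{A}}-\mathbf{A} = \mathbf{D}_{\mathbf{Q}}(\widehat{\mathbf{B}}-\mathbf{B})\mathbf{D}_{\mathbf{K}}$ and passing to the kernel-error supremum $\sup_{\mathbf{x},\mathbf{y} \in B(l)}|\widehat{\mathrm{K}}-\mathrm{K}|\leq\delta$ with $\delta=\epsilon/(g^*h^*)$ is precisely what the paper does. Where you diverge is in how you obtain that kernel uniform convergence: the paper invokes Theorem 3 of \citep{Lin2020DemystifyingOM} as a black box (that theorem already packages the covering argument, the Lipschitz control, and the ORF pointwise concentration into a single uniform statement over a set of diameter $\mathrm{diam}(\mathcal{M})$, and the paper only has to plug in $\mathrm{diam}(\mathcal{M})=4R/d^{1/4}$), whereas you re-derive the uniform bound from scratch via the Rahimi--Recht $\epsilon$-net machinery, appealing to \citep{Lin2020DemystifyingOM} only for the \emph{pointwise} sub-Gaussian concentration at a fixed $\mathbf{z}$. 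Your route is more explicit and makes the dependence on the covering argument visible; the paper's is shorter but opaque. Your reconstruction does match the claimed rate (up to the cosmetic $\sigma$ versus $\sqrt{\sigma}$ discrepancy inside the logarithm, which is absorbed by the $\Omega(\cdot)$ notation), and your observation that Gram--Schmidt ORFs keep the norms $\|\omega_i\|_2$ iid $\chi_d$ is exactly the right fix for the Lipschitz step. The one gap you correctly flag but do not close is whether the pointwise tail bound for $\frac{1}{m}\sum_i \cos(\omega_i^\top\mathbf{z})$ with \emph{orthogonal} $\omega_i$ genuinely dominates the Hoeffding rate; that is a nontrivial fact about negative dependence of ORFs for bounded trigonometric features, and to make your argument self-contained you would need to cite the specific concentration statement in \citep{Lin2020DemystifyingOM} (rather than Theorem 3, which the paper uses) that supplies it. Absent that, your proof is a valid and more transparent reconstruction of what Theorem 3 of \citep{Lin2020DemystifyingOM} internally does, arriving at the same conclusion by a genuinely different (more elementary, less encapsulated) path.
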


The result holds in particular for regular softmax-attention for which $\mathrm{K}$ is a Gaussian kernel and $g(\mathbf{x}) = h(\mathbf{x}) =  \exp(\frac{\|\mathbf{x}\|^{2}}{2})$. In that case $m_{\mathrm{opt}}=\Omega(\frac{d}{\delta^{2}}\log(\frac{4d^{\frac{3}{4}} R}{\delta}))$ since $\sigma = d$.

\begin{proof}
Let $\mathbf{D}_{\mathbf{Q}}$ be a diagonal matrix with entries of the form: $g(\mathbf{q}_{i}^{\top})$ and let $\mathbf{D}_{\mathbf{K}}$ be a diagonal matrix with entries of the form: $h(\mathbf{k}_{i}^{\top})$. Denote 
$\mathbf{B}=[\mathrm{K}(\frac{1}{d^{\frac{1}{4}}}\mathbf{q}_{i}^{\top},\frac{1}{d^{\frac{1}{4}}}\mathbf{k}_{j}^{\top})]_{i,j} \in \mathbb{R}^{L \times L}$. Denote by $\widehat{\mathbf{A}}$ and approximation of the attention matrix obtained from trigonometric orthogonal random features and by $\widehat{\mathbf{B}}$ an approximation of matrix $\mathbf{B}$ that those random features provide.
We rely on Theorem 3 from \citep{Lin2020DemystifyingOM}.
Note that we can apply it in our case, since for RBF kernels the corresponding functions $f_{i}$ satisfy $f_{1}(x) = \sin(x)$, $f_{2}(x) = \cos(x)$ (thus in particular are bounded). Also, it is not hard to observe (see for instance analysis in  Claim 1 from \citep{fourierapprox}) that we can take: $L_{f}=1$ (for $L_{f}$ as in Theorem 3 from \citep{Lin2020DemystifyingOM}). 
Using Theorem 3 from \citep{Lin2020DemystifyingOM}, we conclude that:
\begin{equation}
\|\widehat{\mathbf{B}}-\mathbf{B}\|_{\infty} \leq \delta    
\end{equation}
with any constant probability as long as
$m = \Omega(\frac{d}{\delta^{2}})\log(\frac{\sigma \cdot  \mathrm{diam}(\mathcal{M})}{\delta})$,
where $\sigma=\mathbb{E}[\omega^{\top}\omega]$ and $\mathcal{M}$ is the diameter of the smallest ball $\mathcal{M}$ containing all vectors of the form $\mathbf{z} = \frac{\mathbf{Q}_{i}}{d^{\frac{1}{4}}}-\frac{\mathbf{K}_{j}}{d^{\frac{1}{4}}}$. 
Since $\|\mathbf{Q}_{i}\|_{2}, \|\mathbf{K}_{j}\|_{2} \leq R$, we conclude that $\|\mathbf{z}\|_{2} \leq \frac{2R}{d^{\frac{1}{4}}}$ and thus one can take $\mathrm{diam}(\mathcal{M})=\frac{4R}{d^{\frac{1}{4}}}$.
We have:
\begin{equation}
\|\widehat{\mathbf{A}}-\mathbf{A}\|_{\infty} = \|\mathbf{D}_{\mathbf{Q}}(\widehat{\mathbf{B}}-\mathbf{B})\mathbf{D}_{\mathbf{K}}\|_{\infty} \leq 
\|\mathbf{D}_{\mathbf{Q}}\|_{\infty}
\|\widehat{\mathbf{B}}-\mathbf{B}\|_{\infty}
\|\mathbf{D}_{\mathbf{K}}\|_{\infty} \leq \delta g^{*}h^{*}
\end{equation}
Taking $\delta = \frac{\epsilon}{g^{*}h^{*}}$ completes the proof.
\end{proof}

\subsection{Discussion of Theorem \ref{thm:uniform}} \label{sec:ball}

As a consequence of Theorem~\ref{thm:uniform}, the number $m$ of random projections required to approximate the attention matrix within $\epsilon$ error is a function of data dimensionality $d$, the parameter $\epsilon$ and the radius $R$ of the ball within which the queries and keys live:
\[ m = \Psi(\epsilon, d, R). \]
The dependence on $d$ and $\epsilon$ is fairly easy to understand: with a larger dimensionality $d$ we need more random projeections (on the order of magnitude $d\log(d)$) to get an approximation within $\epsilon$ error. The dependence on $R$ means that the length of queries and keys cannot grow at a fixed $m$ if we want to retain the quality of the approximation.
In particular, this means that FAVOR cannot approximate hard attention on sequences of unlimited length with a fixed $m$. When the sequence length increases, even the standard attention requires longer and longer vectors to make the softmax concentrated enough to pick single elements. Nevertheless, as seen in our experiments, this limitation does not manifest itself in practice at the lengths we experimented with.

\end{document}